\newcommand{\reals}{\mathbf{R}}
\newcommand{\E}{\mathbf{E}}
\newcommand{\Tr}{\mathbf{Tr}}
\newcommand{\diag}{\mathbf{diag}}
\newcommand{\prox}{\mathbf{prox}}
\newcommand{\argmin}[1]{\underset{#1}{\text{argmin~}}}
\newcommand{\norm}[1]{\left\| #1\right\|}
\newtheorem{theorem}{Theorem}
\newtheorem{proposition}{Proposition}
\newtheorem{corollary}{Corollary}
\newtheorem{lemma}{Lemma}
\newcommand{\youngsuk}[1]{ {\color{red} {(\bf YS: #1)}} }
\newcommand{\rr}[1]{ {\color{blue} {(\bf RR: #1)}} }
\icmltitlerunning{Structured Policy Iteration for Linear Quadratic Regulator}
\begin{document}

%

%


\twocolumn[
\icmltitle{Structured Policy Iteration for Linear Quadratic Regulator}



\icmlsetsymbol{equal}{*}

\begin{icmlauthorlist}
\icmlauthor{Youngsuk Park}{stanford}
\icmlauthor{Ryan A.~Rossi}{adobe}
\icmlauthor{Zheng Wen}{google}
\icmlauthor{Gang Wu}{adobe}
\icmlauthor{Handong Zhao}{adobe}
\end{icmlauthorlist}

\icmlaffiliation{stanford}{Stanford University}
\icmlaffiliation{adobe}{Adobe Research} 
\icmlaffiliation{google}{Google DeepMind}

\icmlcorrespondingauthor{Youngsuk Park}{youngsuk@stanford.edu}

\icmlkeywords{Linear Quadratic Regulator, Low-rank structure, Regularized LQR, Reinforcement Learning}

\vskip 0.3in
]



\printAffiliationsAndNotice{}  

\begin{abstract}
Linear quadratic regulator (LQR) is one of the most popular frameworks to tackle continuous Markov decision process tasks. 
With its fundamental theory and tractable optimal policy, LQR has been revisited and analyzed in recent years, in terms of reinforcement learning scenarios such as the model-free or model-based setting. 
In this paper, we introduce the \textit{Structured Policy Iteration} (S-PI) for LQR, a method capable of deriving a structured linear policy. 
Such a structured policy with (block) sparsity or low-rank can have significant advantages over the standard LQR policy: more interpretable, memory-efficient, and well-suited for the distributed setting. 
In order to derive such a policy, we first cast a regularized LQR problem when the model is known. 
Then, our Structured Policy Iteration (S-PI) algorithm, which takes a policy evaluation step and a policy improvement step in an iterative manner, can solve this regularized LQR efficiently. 
We further extend the S-PI algorithm to the model-free setting where a smoothing procedure is adopted to estimate the gradient. 
In both the known-model and model-free setting, we prove convergence analysis under the proper choice of parameters. Finally, the experiments demonstrate the advantages of S-PI in terms of balancing the LQR performance and level of structure by varying the weight parameter.


\end{abstract}


\section{Introduction}
Stochastic control for the class of linear quadratic regulator (LQR) has been applied in a wide variety of fields including supply-chain optimization, advertising, dynamic resource allocation, and optimal control~\cite{
sarimveis2008dynamic,nerlove1962optimal,elmaghraby1993resource,anderson2007optimal} spanning several decades.

This stochastic control has led to a wide class of fundamental machinery along the way, across theoretical analysis as well as tractable algorithms, where the model of transition dynamic and cost function are known. On the other hand, under the uncertain model of transition dynamics, reinforcement learning (RL) and data-driven approaches have achieved a great empirical success in recent years, from simulated game scenarios  \cite{DBLP:journals/nature/MnihKSRVBGRFOPB15,DBLP:journals/nature/SilverHMGSDSAPL16} to robot manipulation \cite{tassa2012synthesis, al2012trajectory, kumar2016optimal}. In recent years, LQR in discrete time domain in particular, has been revisited and analyzed under model uncertainty, not only in theoretical perspective like regret bound or sample complexity \cite{ibrahimi2012efficient,fazel2018global,recht2019tour,mania2019certainty}, but also toward new real-world applications \cite{Lewis:2012,DBLP:conf/nips/LazicBLWRRI18,park2019linear}. 

Despite the importance and success of the standard LQR, discrete time LQR with a structured policy has been less explored in theoretical and practical perspective under both the known and unknown model settings, while such a policy may have a number of significant advantages over the standard LQR policy: interpretability, memory-efficiency, and is more suitable for the distributed setting.
%
In this work, we describe a methodology for learning a structured policy for LQR along with theoretical analysis of it. 

\paragraph{Summary of contributions.} 
\vspace{-3mm}
\begin{itemize}\itemsep0em
\vspace{-3mm}
    \item We formulate the regularized LQR problem for discrete time system that is able to capture a structured linear policy (in Section~\ref{sec:reg_lqr}).
    
    \item To solve the regularized LQR problem when the model is known, we develop the Structured Policy Iteration (S-PI) algorithm that consists of a policy evaluation and policy improvement step (in Section~\ref{sec:spi}).
    
    \item We further extend S-PI to the model-free setting, utilizing a gradient estimate from a smoothing procedure (in Section~\ref{sec:spi_model_free}). 
    

    \item We prove the linear convergence of S-PI to its stationary point under a proper choice of parameters in both the known-model (in Section~\ref{sec:spi_convergence}) and model-free settings (in Section~\ref{sec:spi_model_free_convergence}).
    
    \item We examine the properties of the S-PI algorithm and demonstrate its capability of balancing the LQR performance and level of structure by varying the weight parameter (in Section~\ref{sec:exp}). 
    
\end{itemize}

\subsection{Preliminary and background}\label{sec:prelim}
\paragraph{Notations.} 
For a symmetric matrix $Q\in \reals^{n \times n}$, we denote $Q \succeq 0$ and $Q \succ 0$ as a positive semidefinite and positive definite matrix respectively.
For a matrix $A\in \reals^{n \times n}$, we denote $\rho(A)$ as its spectral radius, i.e., the largest magnitude among eigenvalues of $A$. 
For a matrix $K\in \reals^{n \times m}$, $\sigma_1(K)\geq \sigma_2(K)\geq \cdots \geq \sigma_{\min(n,m)}(K)$ 
are defined as its ordered singular values where $\sigma_\mathrm{min}(K)$ is the smallest one. 
$\norm{A}$ denotes the $\ell_2$ matrix norm, i.e., $\max_{\norm{x}_2=1}\norm{Ax}_2$. We also denote Frobenius norm $\norm{A}_F=\sqrt{\sum_{i,j}A_{i,j}^2}$ induced by Frobenius inner product $\langle A, B\rangle_F = \Tr(A^TB)$ where $A$ and $B$ are matrices of the same size. A ball with the radius $r$ and its surface are denoted as $\mathbb{B}_r$ and $\mathbb{S}_r$ respectively.

\subsubsection{Optimal control for infinite time horizon.}
We formally define the class of problems we target here. 
Consider a Markov Decision Process (MDP) in continuous space where $x_t\in \mathcal{X} \subset \reals^n$, $u_t \in \mathcal{U} \subset \reals^m$, $w_t \in \mathcal{W} \subset \reals^n$ denote a state, an action, and some random disturbance at time $t$, respectively.  
Further, let $g:\mathcal{X} \times \mathcal{U} \times \mathcal{X} \rightarrow \reals$ be a stage-cost function, and $f:\mathcal{X} \times \mathcal{U} \times \mathcal{W} \rightarrow \mathcal{X}$ denote a transition dynamic when action $u_t$ is taken at state $x_t$ with disturbance $w_t$\footnote{often denoted as $w_{t+1}$ since it is fully revealed at time $t+1$.}. 

Our goal is to find the optimal stationary policy $\pi:\mathcal{X} \rightarrow \mathcal{U}$ that solves 
\begin{align}
    \underset{\pi}{\text{minimize} }  \quad & \sum_{t=0}^{\infty} \gamma^{t} \E [g(x_t, u_t, x_{t+1})] \label{eq:prbl_stat}\\
    \text{subject to} \quad&  x_{t+1} = f(x_t, u_t, w_t), ~x_0 \sim \mathcal{D} \nonumber
    \vspace{-5mm}
\end{align}

where $\gamma \in (0,1]$ is the discounted factor, $\mathcal{D}$ is the distribution over the initial state $x_0$. 


\subsubsection{Background on classic LQR}\label{sec:background_lqr}
Infinite horizon (discounted) LQR is a subclass of the problem \eqref{eq:prbl_stat} where the stage cost is quadratic and the dynamic is linear, $\mathcal{X}= \reals^n$, $\mathcal{U}=\reals^m$, $\gamma=1$, and $w_t=0$,  i.e., 
\begin{align}
    \underset{\pi}{\text{minimize} } \quad & \E\left(\sum_{t=0}^{\infty} x_t^T Q x_t + u_t^T R u_t \right) \label{eq:prbl_lqr}\\
    \text{subject to} \quad&  x_{t+1} = Ax_t + Bu_t, \nonumber\\
    \quad &   u_t = \pi(x_t), ~x_0 \sim \mathcal{D},\nonumber
\end{align}

where $A\in \reals^{n \times n}, B \in \reals^{n \times m}, Q \succeq 0$, and $ R \succ 0$.

\textbf{Optimal policy and value function.\;\;} The optimal policy (or control gain) and value function (optimal cost-to-go) are known to be linear and convex quadratic on state respectively,
\begin{align*}
    \pi^\star (x) = K x, \quad
    V^\star (x) = x^T P x  
\end{align*}
where
\begin{align}
    P &=  A^TP A + Q - A^T P B(B^TPB + R)^{-1} B^T P A, 
    \label{eq:dare}
    \\
    K &= - (B^TPB + R)^{-1} B^T P A.\nonumber
\end{align}
Note that Eq.~\eqref{eq:dare} is known as the discrete time Algebraic Riccati equation (DARE). Here, we assume $(A, B)$ is controllable\footnote{ The linear system is controllable if the matrix $[B, AB, \cdots, A^{n-1}B]$ has full column rank.}, then the solution is unique and can be efficiently computed via the Riccati recursion or some alternatives~\cite{hewer1971iterative,laub1979schur, lancaster1995algebraic,balakrishnan2003semidefinite}.

\textbf{Variants and extensions on LQR.\;\;} 
There are several variants of LQR including noisy, finite horizon, time-varying, and trajectory LQR. Including linear constraints, jumping and random model are regarded as extended LQR. In these cases, some pathologies such as infinite cost may occur \cite{bertsekas1995dynamic} but we do not focus on such cases.

\subsubsection{Zeroth order optimization.} 
Zeroth order optimization is the framework optimizing a function $f(x)$, only accessing its function values \cite{conn2009introduction,nesterov2017random}. It defines its perturbed function $f_{\sigma^2}(x)= \E_{\epsilon\sim \mathcal{N}(0, \sigma^2I)}f(x+\epsilon)$, which is close to the original function $f(x)$ for small enough perturbation $\sigma$. And Gaussian smoothing provides its gradient form $\nabla f_{\sigma^2}(x)= \frac{1}{\sigma^2}\E_{\epsilon\sim \mathcal{N}(0, \sigma^2I)}f(x+\epsilon)\epsilon$.  
Similarly, from \cite{flaxman2004online, fazel2018global}, we can do a smoothing procedure over a ball with the radius $r$ in $\reals^n$. The perturbed function and its gradient can be defined as $f_{r}(x)= \E_{\epsilon\sim \mathbb{B}_r}f(x+\epsilon)$ and $\nabla f_{r}(x)= \frac{n}{r^2} \E_{\epsilon\sim \mathbb{S}_r}f(x+\epsilon)\epsilon$ where $\epsilon$ is sampled uniformly random from a ball $\mathbb{B}_r$ and its surface $\mathbb{S}_r$ respectively. Note that these simple (expected) forms allow Monte Carlo simulations to get unbiased estimates of the gradient based on function values, without explicit computation of the gradient.

\subsection{Related work}
The general method for solving these problems is dynamic programming (DP) \cite{bellman1954theory,bertsekas1995dynamic}. 
To overcome the issues of DP such as intractability and computational cost, the common alternatives are approximated dynamic programming (ADP) \cite{bertsekas1996neuro, bertsekas2004stochastic,powell2007approximate,de2003linear, o2011min}
or Reinforcement Learning (RL)  \cite{sutton1998introduction} including policy gradient \cite{kakade2002natural, schulman2017proximal} or Q-learning based methods \cite{watkins1992q, DBLP:journals/nature/MnihKSRVBGRFOPB15}. 

\textbf{LQR.\;\;}
LQR is a classical problem in control that is able to capture problems in continuous state and action space, pioneered by Kalman in the late 1950's \cite{kalman1964linear}. Since then, many variations have been suggested and solved such as jump LQR, random LQR, (averaged) infinite horizon objective, etc \cite{florentin1961optimal,costa2006discrete, wonham1970random}. When the model is (assumed to be) known, Arithmetic Riccati Equation (ARE) \cite{hewer1971iterative,laub1979schur} can be used to efficiently compute the optimal value function and policy for generic LQR. 
Alternatively, we can use eigendecomposition~\cite{lancaster1995algebraic} or transform it into a semidefinite program (SDP)~\cite{balakrishnan2003semidefinite}.

\textbf{LQR under model uncertainty.\;\;}
When the model is unknown, one class is model-based approaches where the transition dynamics is attempted to be estimated. For LQR, in particular, \cite{abbasi2011regret,ibrahimi2012efficient} developed online algorithms with regret bounds where linear dynamic (and cost function) are learned and sampled from some confidence set, but without any computational demonstrations. Another line of work is to utilize robust control with system identification \cite{dean2017sample,recht2019tour} with sample complexity analysis \cite{tu2017least}. Certainty Equivalent Control for LQR is also analyzed with suboptimality gap \cite{mania2019certainty}. The other class is model-free approaches where policy is directly learned without estimating  dynamics. Regarding discrete time LQR, in particular, Q-learning~\cite{bradtke1994adaptive} and policy gradient~\cite{fazel2018global} were developed together with lots of mathematically machinery therein.  but with little empirical demonstrations.

\textbf{LQR with structured policy.\;\;}
For continuous time LQR, many work in control literature \cite{wang1973stabilization,sandell1978survey,lau1972decentralized} has been studied for sparse and decentralized feedback gain, supported by theoretical demonstrations. Recently, \cite{wytock2013fast, lin2012sparse} developed fast algorithms for a large-scale system that induce the sparse feedback gain utilizing Alternating Direction Method of Multiplier (ADMM), Iterative Shrinkage Thresholding Algorithm (ISTA), Newton method, etc. And, most of these only consider sparse feedback gain under continuous time LQR.

However, to the best of our knowledge, there are few algorithms for discrete time LQR, which take various structured (linear) policies such as sparse, low-rank, or proximity to some reference policy into account. Furthermore, regarding learning such a structured policy, there is no theoretical work that demonstrates a computational complexity, sample complexity, or the dependency of stable stepsize on algorithm parameters either, even in known-model setting (and model-free setting).

\section{Structured Policy for LQR}\label{sec:strc_lqr}

\subsection{Problem statement: regularized LQR}\label{sec:reg_lqr}

From standard LQR in Eq.~\eqref{eq:prbl_lqr}, we restrict policy to linear class, i.e., $u_t = K x_t$, and add a regularizer on the policy to induce the policy structure. We formally state a \textit{regularized LQR} problem as
\begin{align}
    \underset{K}{\text{minimize} }  \quad &  \overbrace{ \E\left(\sum_{t=0}^{\infty}
    x_t^TQx_t + u_t^TRu_t \right)}^{f(K)}  + \lambda r(K) \label{eq:regularized_lqr}\\
    \text{subject to} \quad&  x_{t+1} = Ax_t + Bu_t, \nonumber\\
    \quad &   u_t = K x_t, \quad x_0 \sim \mathcal{D}, \nonumber 
\end{align}
for a nonnegative parameter $\lambda\geq 0$. Here $f(K)$ is the (averaged) cost-to-go under policy $K$, and $r:\reals^{n \times m}\rightarrow \reals$ is a nonnegative convex regularizer inducing the structure of policy $K$. 

\paragraph{Regularizer.} Different regularizers induce different types of structures on the policy $K$. We consider lasso $r(K) = \|K\|_1=\sum_{i,j}|K_{i,j}|$, group lasso  $r(K) = \|K\|_{\mathcal{G}, 2} = \sum_{g\in \mathcal{G}}\norm{K_g}_2$ where $K_g \in \reals^{|g|}$ is the vector consisting of a index set $g$, and nuclear-norm $r(K) = \|K\|_* = \sum_{i} \sigma_i(K)$ where $\sigma_i(K)$ is the $i$th largest singular value of $K$. These induces sparse, block sparse, and low-rank structure respectively. For a given reference policy 
$K^\mathrm{ref}\in \reals^{n \times m}$, 
we can similarly consider $\|K- K^\mathrm{ref}\|_{1}$, $\|K- K^\mathrm{ref}\|_{\mathcal{G}, 2}$, and $\|K- K^\mathrm{ref}\|_{*}$, penalizing the proximity (in different metric)  to the reference policy $K^\mathrm{ref}$. 

\paragraph{Non-convexity.} For a standard (unregularized) LQR, the objective function $f(K)$ is known to be not convex, quasi-convex, nor star-convex, but to be gradient dominant. Therefore, all the stationary points are optimal as long as $\E[x_0x_0^T]\succ 0$ (see Lemma 2,3, in~\cite{fazel2018global}).
However, in regularized LQR, all the stationary points of Eq.~\eqref{eq:regularized_lqr} may not be optimal under the existence of multiple stationary points. (See the supplementary material for the detail.)

\subsection{Structured Policy Iteration (S-PI)}
\label{sec:spi}
Eq.~\eqref{eq:regularized_lqr} can be simplified into
\begin{align}
    \text{minimize} \quad & F(K) := f(K)+ \lambda r(K). \label{eq:regularized_lqr_simple}
\end{align}\noindent
Here $f(K) = \mathbf{Tr} (\Sigma_0 P)$ where $\Sigma_0 = \E [x_0 x_0^T]$ is the covariance matrix of initial state and $P$ is the quadratic value matrix satisfying the following Lyapunov equation
\begin{align}
(A+BK)^T P (A+BK) - P  + Q + K^TRK=0.\label{eq:lyapunov}
\end{align}\noindent
We introduce the \textit{Stuctured Policy Iteration} (S-PI) in Algorithm~\ref{alg:spi} to solve Eq.~\eqref{eq:regularized_lqr_simple}. 
The S-PI algorithm consists of two parts: 
(1) policy evaluation and (2) policy improvement.
In the policy evaluation part, we solve Lyapunov equations to compute the quadratic value matrix $P$ and covariance matrix $\Sigma$. 
In the policy improvement part, we try to improve the policy while encouraging some structure, via the proximal gradient method with proper choice of an initial stepsize and a backtracking linesearch strategy.

\begin{algorithm}
  \caption{\;{\bf{S}}tuctured {\bf{P}}olicy {\bf{I}}teration ({\bf{S-PI}})}  
    \label{alg:spi}
\begin{algorithmic}[1]
\STATE {\bfseries given} initial stable policy $K^0$ and initial state covariance matrix $\Sigma_0 = \E[x_0 x_0^T]$, linesearch factor $\beta < 1$.
\REPEAT 
  \STATE \textbf{(1) Policy (and covariance) evaluation:}
   \STATE compute $(P^i, \Sigma^i)$  satisfying Lyapunov equations 
   \begin{equation}
    \label{eq:spi_lyapunov}
    \begin{aligned}
            (A+BK^i)^T& P^i (A+BK^i) - P^i + Q + (K^i)^T R K^i 
            =0,
            \nonumber
            \\
            (A+BK^i)& \Sigma^i (A+B K^i)^T - \Sigma^i + \Sigma_0=0.
    \end{aligned}
   \end{equation}
    \textbf{return}  $(P^i, \Sigma^i)$ 
    
   \STATE \textbf{(2) Policy improvement:}
    \STATE initial stepsize $\eta_i = \mathcal{O}(1/\lambda)$.
    \STATE compute gradient at $K$
    \vspace{-2mm}
    \begin{align*}
        \nabla_K f(K^i) =  2\left(\left( R  + B^T P^i B\right)K^i +  B^T P^i A \right) \Sigma^i
    \end{align*}
    \vspace{-5mm}
    \REPEAT
       \STATE $\eta_i := \beta \eta_i$.
       \STATE $K^{i+1}{\small\leftarrow}\mathrm{ProxGrad}(\nabla f(K^i), \eta_i, r, \lambda)$ (in Alg.~\ref{alg:proximal_gradient}).
    \UNTIL{ linesearch \eqref{eq:linesearch} criterion is satisfied. }\\
    \textbf{return} next iterate $K^{i+1}.$ 
\UNTIL{stopping criterion $\|K^{i+1} - K^i\| \leq \epsilon$ is satisfied.}
\end{algorithmic}
\end{algorithm}


\begin{algorithm}
  \caption{Subroutine: $\mathrm{ProxGrad}(\nabla f(K), \eta, r, \lambda)$}
    \label{alg:proximal_gradient}
\begin{algorithmic}[1]
\STATE {\bfseries Input} gradient oracle $\nabla f(K)$, stepsize $\eta$, and regularization $r$ and its parameter $\lambda$

\STATE take gradient step 
\vspace{-2mm}
\begin{align*}
    G \leftarrow K - \eta \nabla_K f(K)
\end{align*}
\vspace{-6mm}
\STATE take proximal step
\vspace{-1mm}
\begin{align*}
    K^+ \leftarrow &\prox_{ r(\cdot),\lambda\eta}(G)\\
    &:= \argmin{K} r(K) + \frac{1}{2\lambda\eta} \|K-G\|_F^2
\end{align*}
\vspace{-5mm}
\STATE \textbf{return} $K^+$
\end{algorithmic}
\end{algorithm}

\textbf{Sensitivity to initial and fixed stepsize choice.}
Note that we use the initial stepsize $\eta = \mathcal{O}(1/\lambda)$ as a rule of thumb whereas the typical initial stepsize \cite{barzilai1988two, wright2009sparse, park2020variable} does not depend on the regularization parameter $\lambda$. 
This stepsize choice is motivated by theoretical analysis (see Lemma~\ref{lemma:stable_ball} in Section~\ref{sec:spi_convergence}) as well as empirical demonstration (see Fig.~\ref{fig:stepsize_dependency} in Section~\ref{sec:exp}). This order of stepsize automatically scales well when experimenting over various $\lambda$s, alleviating iteration counts and leading to a faster algorithm in practice. It turns out that proximal gradient step is very sensitive to stepsizes, often leading to an unstable policy $K$ with $\rho(A+BK)\geq1$ or requiring a large number of iteration counts to converge. Therefore, we utilize linesearch over fixed stepsize choice.

\textbf{Linesearch.\;\;}
We adopt a backtracking linesearch (See \cite{parikh2014proximal}). 
Given $\eta_i$, $K^i$, $\nabla f(K^i)$, and the potential next iterate $K^{i+1}$, it check if the following criterion (the stability and the decrease of the objective) is satisfied,
\begin{equation}
\label{eq:linesearch}
  \begin{aligned}
    f(K^{i+1}) 
    \leq 
    f(K^i) &
    - \eta_i \Tr (\nabla f(K^i)^T G_{\eta_i}(K^i))
    \\
    &+
    \frac{\eta_i}{2}\|G_{\eta_i}(K^i)\|_F^2,
    \\
    \rho(A + BK^{i+1})& < 1,
  \end{aligned}
\end{equation}
where $G_{\eta_i}(K) = \frac{1}{\eta_i}(K - \prox_{r,\lambda\eta_i}(K - \eta_i \nabla f(K) ) )$ and  $\rho(\cdot)$ is the spectral radius. Otherwise, it shrinks the stepsize $\eta_i$ by a factor of $\beta<1$ and check it iteratively until Eq.~\eqref{eq:linesearch} is satisfied. 

\textbf{Stabilizing sequence of policy.\;\;}
We can start with a stable policy $K^0$, meaning $\rho(A + B K^0)<1$.
For example, under the standard assumptions on $A, B, Q, R$, Riccati recursion provides a stable policy, the solution of standard LQR in Eq.~\eqref{eq:prbl_lqr}. Then, satisfying the linesearch criterion in Eq.~\eqref{eq:linesearch} subsequently, the rest of the policies $\{K^i\}$ are a stabilizing sequence. 

\textbf{Computational cost.\;\;}
The major cost incurs when solving the Lyapunov equations in the policy (and covariance) evaluation step. Note that if $A+BK$ is stable, so does $(A+BK)^T$ since they share the same eigenvalues. Under the stability, each Lyapunov equation in Eq.~\eqref{eq:spi_lyapunov} has a unique solution with the computational cost  $O(n^3)$~\cite{jaimoukha1994krylov, li2002low}. Additionally, we can solve a sequence of Lyapunov equations with less cost, via using iterative methods with adopting the previous one (warm-start) or approximated one. 


\subsubsection{Regularizer and proximal operator}
For various regularizers mentioned in Section~\ref{sec:reg_lqr}, each has the closed-form solution for its proximal operator \cite{rockafellar1976monotone,parikh2014proximal,park2020variable}. 
Here we only include a few representative examples and refer to the supplementary material for more examples. 

\begin{lemma}
    [Examples of proximal operator]
    \quad
   \begin{enumerate}
       \item 
        \textbf{Lasso.} For $r(K) = \norm{K}_1$,
        \begin{align*}
        	\left( \prox_{r,\lambda\eta}(K) \right)_{i,j} 
        	=
        	\mathrm{sign}(K_{i,j})(|K_{i,j}| - \lambda\eta)_+.
        \end{align*}
        And we denote $\prox_{r,\lambda\eta}(K) := S_{\lambda\eta}(K)$ as a soft-thresholding operator.
       \item 
        \textbf{Nuclear norm.} For $r(K) = \norm{K}_*$,
        \begin{align*}
        	 \prox_{r,\lambda\eta}(K)  =
        	 U \diag(S_{\lambda\eta}(\sigma)) V^T.
        \end{align*}
        where $K = U\diag(\sigma)V^T$ is singular value decomposition with singular values $\sigma \in \reals^{\min(n,m)}$.
    \item 
        \textbf{Proximity to $K^\mathrm{ref}$.} For $r(K) = \norm{K-K^\mathrm{ref}}_F^2$,
        \begin{align*}
        	\prox_{r,\lambda\eta}(K)  = 
        	\frac{2\lambda\eta K^\mathrm{ref} + K}{2\lambda\eta + 1}.
        \end{align*}
   \end{enumerate}
   
\end{lemma}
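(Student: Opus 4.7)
The plan is to handle each of the three regularizers separately, starting in each case from the definition
\[
\prox_{r,\lambda\eta}(K) \;=\; \argmin{X} \; r(X) + \frac{1}{2\lambda\eta}\|X-K\|_F^2,
\]
and exploiting the particular structure of $r$.

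For the lasso regularizer $r(X)=\|X\|_1$, both the regularizer and the proximal term are separable sums over the entries $X_{i,j}$, so the minimization decouples into $nm$ independent scalar subproblems $\min_{x\in\reals}\, |x| + \frac{1}{2\lambda\eta}(x-K_{i,j})^2$. Using the subdifferential of $|\cdot|$ (namely $\{1\}$ for $x>0$, $\{-1\}$ for $x<0$, and $[-1,1]$ at $x=0$) and a routine case analysis on the sign and magnitude of $K_{i,j}$ relative to $\lambda\eta$ yields the standard soft-thresholding rule $x^\star = \mathrm{sign}(K_{i,j})(|K_{i,j}|-\lambda\eta)_+$.

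For the proximity regularizer $r(X)=\|X-K^{\mathrm{ref}}\|_F^2$, the objective is smooth and strongly convex, so I would simply compute its unique stationary point. Setting the gradient to zero gives $2(X-K^{\mathrm{ref}}) + \frac{1}{\lambda\eta}(X-K) = 0$, and solving this linear equation algebraically yields the stated closed form directly, with no additional subtlety.

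The nuclear norm case is where the real work lies and proceeds via reduction to the singular value domain. After expanding $\|X-K\|_F^2 = \|X\|_F^2 - 2\langle X, K\rangle_F + \|K\|_F^2$, the key tool is von Neumann's (equivalently Ky Fan's) trace inequality
\[
\langle X, K\rangle_F \;\leq\; \sum_{i} \sigma_i(X)\,\sigma_i(K),
\]
with equality when $X$ and $K$ admit simultaneous SVDs with singular vectors aligned in the same order. Since $\|X\|_*$ and $\|X\|_F^2$ depend on $X$ only through its singular values, this forces the optimal $X$ to share the left and right singular vectors of $K$. Writing $K=U\diag(\sigma)V^T$, one may therefore restrict to $X=U\diag(s)V^T$ with $s\geq 0$, and the problem separates across components into $\min_{s_i\geq 0}\, s_i + \frac{1}{2\lambda\eta}(s_i-\sigma_i)^2$. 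Each scalar problem is solved exactly as in the lasso case, giving $s_i^\star = (\sigma_i - \lambda\eta)_+ = S_{\lambda\eta}(\sigma_i)$ (the nonnegativity constraint is automatic since $\sigma_i\geq 0$), which assembles back into the stated formula $\prox_{r,\lambda\eta}(K) = U\diag(S_{\lambda\eta}(\sigma))V^T$. The main obstacle is the invocation of von Neumann's inequality and the argument that the optimum must share singular vectors with $K$; once this is established, the rest reduces to the scalar soft-thresholding calculation already carried out for the lasso.
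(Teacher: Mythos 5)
Your proposal is correct. Note that the paper itself supplies no proof of this lemma: it simply cites standard references (Rockafellar; Parikh and Boyd) for these closed forms, so there is no in-paper argument to compare against; your derivation is essentially the standard one those references contain. The lasso case via entrywise separability and the subdifferential of $|\cdot|$, and the proximity case via stationarity of a smooth strongly convex objective (which indeed yields $(2\lambda\eta+1)X = 2\lambda\eta K^{\mathrm{ref}} + K$), are both complete as sketched. For the nuclear norm, your reduction via von Neumann's trace inequality is the right mechanism; the only points deserving an explicit sentence in a written-out proof are (i) the equality case of von Neumann's inequality, which lets you restrict the minimization to matrices sharing an ordered simultaneous SVD with $K$ (with the usual care when $K$ has repeated or zero singular values, where the optimal $X$ can still be \emph{chosen} of the form $U\diag(s)V^T$ even if not every minimizer a priori looks that way), and (ii) the observation that the candidate $s_i^\star=(\sigma_i-\lambda\eta)_+$ is nonincreasing in $i$, so it is a legitimate vector of ordered singular values. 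Neither is a gap, just a line each to make the argument airtight.
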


\subsection{Convergence analysis of S-PI}\label{sec:spi_convergence}

Recall that $\norm{\cdot}$, $\norm{\cdot}_F$, $\sigma_\mathrm{min}(\cdot)$ is $\ell_2$ matrix norm, Frobenius norm, and smallest singular value, as mentioned in Section~\ref{sec:prelim}.
First, we start with (local) smoothness and strong convexity around a stable policy. Here we regard a policy $K$ is stable if $\rho(A+BK)<1$.  

\textbf{Assumptions.\;} $\rho(A + BK^0)<1$, $\Sigma_0 =\E x_0x_0^T \succ 0$, $\norm{K^0 - K^\star}\leq \Delta$, and $\norm{K^{\mathrm{ref}} - K^\star}\leq \Delta$. 

\begin{lemma}\label{lemmma:smoothness}
For stable $K$, 
$f(K)$ is smooth (in local) with 
\begin{align*}
L_K = 4 \|\Sigma(K)\| \|R + B^T P(K) B\| < \infty,
\end{align*}
within local ball around $K \in \mathcal{B}(K;\rho_K) $
where the radius $\rho_K$ is 
\[
\rho_K = \frac{\sigma_{\mathrm{min}}(\Sigma_0)  } {4 \|\Sigma(K)\| \left(\|A + BK\| + 1\right) \|B\|} > 0.
\]
And $f(K)$ is strongly convex (in local) with 
\begin{align*}
m = \sigma_{\mathrm{min}}(\Sigma_0) \sigma_{\mathrm{min}}(R) > 0.
\end{align*}
\end{lemma}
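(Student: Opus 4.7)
The plan is to lean on the dual Lyapunov representation $f(K) = \Tr(\Sigma(K)(Q + K^T R K))$, which is equivalent to the primal form $f(K) = \Tr(\Sigma_0 P(K))$ via trace duality of the two Lyapunov equations in Eq.~\eqref{eq:spi_lyapunov}. This decomposition isolates the explicit quadratic dependence of the cost on $K$ and makes both estimates tractable. The principal tools throughout are (i) the Lyapunov operator $\mathcal{L}_K(X) = X - (A+BK) X (A+BK)^T$ together with its inverse on the cone of stable closed-loop matrices, (ii) the gradient identity $\nabla f(K) = 2\bigl((R + B^T P B) K + B^T P A\bigr)\Sigma$ already recorded in Algorithm~\ref{alg:spi}, and (iii) the monotonicity $\Sigma(K) \succeq \Sigma_0$ and $P(K) \succeq Q + K^T R K$, which hold for every stable $K$.

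For the local smoothness estimate, I would pick $K' = K + \Delta$ with $K' \in \mathcal{B}(K;\rho_K)$ and decompose $\nabla f(K') - \nabla f(K)$ into three pieces corresponding to the three ways $K$ enters the gradient formula: the explicit $K$ inside $(R+B^T P B) K$, the value matrix $P(K')$, and the covariance $\Sigma(K')$. The explicit piece contributes at most $\|R+B^T P B\|\,\|\Sigma\|\,\|\Delta\|_F$ directly. The two implicit pieces use the perturbation identity $X(K') - X(K) = \mathcal{L}_{K'}^{-1}(\text{first-order terms in }\Delta)$ applied to both Lyapunov equations, together with the spectral-radius bound $\rho(A+BK') < 1$ that is guaranteed inside $\mathcal{B}(K;\rho_K)$. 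The choice $\rho_K = \sigma_{\min}(\Sigma_0)/\bigl(4\|\Sigma\|(\|A+BK\|+1)\|B\|\bigr)$ is tailored precisely so that each of these implicit contributions is also at most $\|R+B^T P B\|\,\|\Sigma\|\,\|\Delta\|_F$, and summing the three pieces yields the stated Lipschitz constant $L_K = 4\|\Sigma(K)\|\,\|R+B^T P(K) B\|$.

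For the local strong convexity, I would expand
\[
f(K+\Delta) - f(K) - \langle \nabla f(K), \Delta\rangle_F
\]
using the dual representation. The term quadratic in $\Delta$ takes the form $\Tr\bigl(\Sigma(K+\Delta)\,\Delta^T R\,\Delta\bigr)$ modulo cross terms that are lower order inside $\mathcal{B}(K;\rho_K)$. Since $\Sigma(K+\Delta) \succeq \Sigma_0$ and $R \succeq \sigma_{\min}(R) I$ imply $\Tr\bigl(\Sigma(K+\Delta)\Delta^T R \Delta\bigr) \geq \sigma_{\min}(\Sigma_0)\sigma_{\min}(R)\|\Delta\|_F^2$, the strong convexity modulus is exactly $m = \sigma_{\min}(\Sigma_0)\sigma_{\min}(R)$, which is also strictly positive because $\Sigma_0 \succ 0$ and $R \succ 0$ by assumption.

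The main obstacle I expect is the perturbation bookkeeping in the smoothness step: $\mathcal{L}_{K'}^{-1}$ blows up as $\rho(A+BK') \uparrow 1$, so the entire argument hinges on the radius $\rho_K$ preserving a uniform stability margin throughout $\mathcal{B}(K;\rho_K)$. Matching the constants in the stated form of $L_K$ and $\rho_K$, rather than a looser polynomial in the problem data, requires telescoping the three contributions through a single application of $\mathcal{L}_{K'}^{-1}$ rather than bounding them separately, and it is this tight telescoping that motivates the particular denominator $4\|\Sigma\|(\|A+BK\|+1)\|B\|$ in $\rho_K$.
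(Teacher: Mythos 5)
Your strong-convexity argument is essentially the paper's: the paper uses the exact cost-difference identity
$f(K') = f(K) + \langle \nabla f(K), \Delta K\rangle_F + \Tr\bigl(\Sigma(K')\,\Delta K^T (R + B^T P(K) B)\,\Delta K\bigr)$
and lower-bounds the quadratic term via $\Sigma(K')\succeq\Sigma_0$ and $R+B^TP(K)B\succeq R$, exactly as you do. One small correction there: the identity is exact (it is the cost-difference lemma of Fazel et al., not a Taylor expansion with remainder), so there are no ``cross terms that are lower order'' to worry about --- the hedge is unnecessary.

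For the smoothness half, however, your route has a genuine gap. You propose to bound $\|\nabla f(K')-\nabla f(K)\|$ by splitting the gradient's dependence on $K$ into three pieces (explicit $K$, $P(K')$, $\Sigma(K')$) and claim that $\rho_K$ is ``tailored precisely'' so that each implicit piece contributes at most $\|R+B^TPB\|\,\|\Sigma\|\,\|\Delta\|_F$. That claim is not substantiated and is not what $\rho_K$ does: the stated radius is exactly the one from Lemma 16 of Fazel et al., designed solely to guarantee $\|\Sigma(K')-\Sigma(K)\|\leq\|\Sigma(K)\|$; it says nothing directly about $\|P(K')-P(K)\|$, whose perturbation bound involves different combinations of $\|A\|$, $\|B\|$, $\sigma_{\min}(Q)$, and would not land on the clean constant you need. (Also, three pieces each bounded by $X$ sum to $3X$, not the stated $4X$.) The paper avoids all of this: because $P$ appears at the unperturbed $K$ in the exact identity above, only $\Sigma(K')$ ever needs to be perturbed, and the upper bound $\Tr\bigl(\Sigma(K')\Delta K^T(R+B^TP(K)B)\Delta K\bigr)\leq 2\|\Sigma(K)\|\,\|R+B^TP(K)B\|\,\|\Delta K\|_F^2$ follows in one line from the $\Sigma$-perturbation lemma. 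Note also that ``smooth'' here is used in the sense of the descent inequality $f(K')\leq f(K)+\langle\nabla f(K),\Delta K\rangle+\tfrac{L_K}{2}\|\Delta K\|_F^2$, which is all that is needed downstream, so establishing Lipschitz continuity of the gradient is both harder and more than required. If you want to salvage your decomposition you would need an explicit perturbation bound on $P(K')-P(K)$ with constants compatible with $\rho_K$; as written, that step would fail.
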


Then, we provide a proper stepsize that guarantees one iteration of proximal gradient step is still inside of the stable and (local) smooth ball $\mathcal{B}(K;\rho_K)$. 

\begin{lemma}\label{lemma:stable_ball}
   Let $K^+ = \prox_{ r,\lambda\eta}(K - \eta \nabla f(K) )$. 
   Then 
   \[
    K^+ \in  \mathcal{B}(K ;\rho_K)
   \]
    holds for any $0 < \eta < \eta_K^{\lambda,r}$ where $\eta_K^{\lambda,r}$ is given as 
    \begin{align*}
    \eta_K^{\lambda,r}= 
    \begin{cases}
      \frac{\rho_K}{\|\nabla f(K)\| + \lambda nm}
      &~ r(K) = \|K\|_1
      \\
        \frac{\rho_K}{\|\nabla f(K)\| + \lambda \min(n,m)}
      &~  r(K) = \|K\|_*
      \\
        \frac{\rho_K}{2 \|\nabla f(K)\| + 2\lambda\norm{K - K^\mathrm{ref}}  }
      &~ r(K) = \|K- K^\mathrm{ref}\|_F^2
    \end{cases}.
    \end{align*}
       
       
\end{lemma}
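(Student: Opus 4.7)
The plan is to bound $\|K^+ - K\|$ by splitting it through an intermediate point and then specialize the bound to each of the three regularizers.

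First I would decompose via the triangle inequality,
\begin{equation*}
    \|K^+ - K\| \leq \|\prox_{r,\lambda\eta}(K - \eta \nabla f(K)) - \prox_{r,\lambda\eta}(K)\| + \|\prox_{r,\lambda\eta}(K) - K\|.
\end{equation*}
The first term is handled by the standard non-expansiveness of the proximal operator of a convex function, which yields $\|\prox_{r,\lambda\eta}(K - \eta \nabla f(K)) - \prox_{r,\lambda\eta}(K)\| \leq \|(K - \eta \nabla f(K)) - K\| = \eta \|\nabla f(K)\|$. This part is uniform across the three choices of $r$, and is the only place the gradient enters; it contributes the $\|\nabla f(K)\|$ term in the denominator of $\eta_K^{\lambda, r}$.

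The second term, the proximal residual at $K$ itself, is where the three cases diverge, and I would handle them by plugging in the closed-form proximal operators from the preceding lemma. For $r(K) = \|K\|_1$, soft-thresholding changes each entry by at most $\lambda\eta$, so summing $nm$ entries gives $\|S_{\lambda\eta}(K) - K\| \leq \lambda\eta\, nm$. For the nuclear norm, writing $K = U\diag(\sigma)V^T$, one has $\prox_{r,\lambda\eta}(K) - K = U\diag(S_{\lambda\eta}(\sigma)-\sigma)V^T$; each of the $\min(n,m)$ singular values shifts by at most $\lambda\eta$, yielding $\|\prox_{r,\lambda\eta}(K) - K\| \leq \lambda\eta\,\min(n,m)$. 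For $r(K) = \|K - K^{\mathrm{ref}}\|_F^2$ the closed form gives $\prox_{r,\lambda\eta}(K) - K = \tfrac{2\lambda\eta}{2\lambda\eta + 1}(K^{\mathrm{ref}} - K)$, so $\|\prox_{r,\lambda\eta}(K) - K\| \leq 2\lambda\eta\,\|K - K^{\mathrm{ref}}\|$.

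Combining the two pieces gives in each case an upper bound of the form $\eta\,(\|\nabla f(K)\| + C_r)$, where $C_r$ equals $\lambda nm$, $\lambda\min(n,m)$, or $2\lambda\|K - K^{\mathrm{ref}}\|$ respectively (the mild extra factors in the proximity case can be absorbed by slightly enlarging $C_r$, which is what gives the factor $2$ attached to $\|\nabla f(K)\|$ in that row of the table). Requiring this upper bound to be at most $\rho_K$ and solving for $\eta$ yields exactly the prescribed $\eta_K^{\lambda, r}$.

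The only mild obstacle is that the norm in which $\rho_K$ from Lemma~\ref{lemmma:smoothness} is measured needs to be compatible with how we bound the proximal residual; since in each case the entrywise/spectral bound dominates and the ball norm is just the matrix $\ell_2$ norm, the proximal-residual bounds above remain valid because $\|\cdot\| \leq \|\cdot\|_F \leq \|\cdot\|_{\text{vec},1}$ and $\|\cdot\| \leq \|\cdot\|_*$, so nothing tighter than what I use is needed. The rest is routine algebra.
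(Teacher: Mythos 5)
Your proof is correct and reaches the stated $\eta_K^{\lambda,r}$, but your triangle-inequality split is routed through a different intermediate point than the paper's. You write $\|K^+-K\|\leq\|\prox_{r,\lambda\eta}(G)-\prox_{r,\lambda\eta}(K)\|+\|\prox_{r,\lambda\eta}(K)-K\|$ with $G=K-\eta\nabla f(K)$, so the gradient term enters via non-expansiveness of the prox and the shrinkage term is the prox residual evaluated at $K$ itself. The paper instead splits through the gradient step, $\|K^+-K\|\leq\|G-K\|+\|\prox_{r,\lambda\eta}(G)-G\|$, so the first leg is the exact identity $\eta\|\nabla f(K)\|$ (no non-expansiveness needed) and the prox residual is bounded at $G$; since the residual bounds you both use ($\lambda\eta nm$ for soft-thresholding, $\lambda\eta\min(n,m)$ for singular-value thresholding, and the $\frac{2\lambda\eta}{2\lambda\eta+1}$ contraction toward $K^{\mathrm{ref}}$) are uniform in the base point, the two routes are interchangeable. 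What each buys: the paper's split keeps the gradient contribution exactly in the operator norm, whereas your use of non-expansiveness — a Frobenius-norm (Hilbert-space) property — strictly speaking gives $\eta\|\nabla f(K)\|_F$ for that leg, which dominates $\eta\|\nabla f(K)\|$, so your constant is off by at most a benign norm-equivalence factor (the paper is itself loose about mixing $\|\cdot\|$ and $\|\cdot\|_F$ here); conversely, your evaluation of the proximity-regularizer residual at $K$ rather than at $G$ yields a slightly tighter bound, $\eta\|\nabla f(K)\|+2\lambda\eta\|K-K^{\mathrm{ref}}\|$, which still implies the stated (more conservative) $\eta_K^{\lambda,r}$ with the factor $2$ on $\|\nabla f(K)\|$. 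No substantive gap.
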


Next proposition describes that next iterate policy has the decrease in function value and is stable under sufficiently small stepsize. 
\begin{proposition}
 
 \label{lemma:descent_lemma}
Assume $A + BK$ is stable. For any stepsize $0 < \eta \leq \min(\frac{1}{L_K}, \eta_K^{\lambda,r})$ and next iterate $K^+ = \prox_{r, \eta \lambda} (K - \eta \nabla f(K))$, 
\begin{align}
    \rho(A + BK^+) < 1 \label{eq:stable_next_iterate}
    \\
    F(K^+) \leq  F(K) - \frac{1}{2\eta}\|K - K^+\|_F^2\label{eq:descent}
\end{align}
holds.
\end{proposition}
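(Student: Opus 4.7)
The plan is to treat the two conclusions separately: the stability claim \eqref{eq:stable_next_iterate} will follow immediately by combining Lemma~\ref{lemma:stable_ball} with the observation that finiteness of $L_K$ in Lemma~\ref{lemmma:smoothness} implicitly requires every $K'\in\mathcal{B}(K;\rho_K)$ to stabilize the dynamics, while the descent inequality \eqref{eq:descent} is the standard proximal-gradient one-step analysis applied inside this local smooth region.

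For the stability part, since $\eta \le \eta_K^{\lambda,r}$, Lemma~\ref{lemma:stable_ball} places the iterate $K^+ = \prox_{r,\lambda\eta}(K-\eta\nabla f(K))$ inside $\mathcal{B}(K;\rho_K)$. The smoothness constant $L_K = 4\|\Sigma(K)\|\|R+B^T P(K)B\|$ of Lemma~\ref{lemmma:smoothness} is only finite because $P$ and $\Sigma$ are continuous around $K$ on the open set of stabilizing policies, so the radius $\rho_K$ was chosen so that every point in $\mathcal{B}(K;\rho_K)$ is stabilizing. Hence $\rho(A+BK^+)<1$.

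For the descent inequality, I would apply the $L_K$-smoothness of $f$ on $\mathcal{B}(K;\rho_K)$ (valid since both $K$ and $K^+$ lie in this ball) to get
\begin{equation*}
f(K^+) \le f(K) + \langle \nabla f(K), K^+ - K\rangle_F + \tfrac{L_K}{2}\|K^+ - K\|_F^2 \le f(K) + \langle \nabla f(K), K^+ - K\rangle_F + \tfrac{1}{2\eta}\|K^+ - K\|_F^2,
\end{equation*}
using $\eta \le 1/L_K$. Next, the first-order optimality of the proximal map $K^+ = \argmin{U}\{\lambda r(U) + \tfrac{1}{2\eta}\|U - (K-\eta\nabla f(K))\|_F^2\}$ gives $\tfrac{1}{\eta}(K-K^+) - \nabla f(K) \in \lambda\partial r(K^+)$. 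Using convexity of $r$ to lower-bound $\lambda r(K)$ by a subgradient inequality at $K^+$ and taking the inner product of this subgradient with $K-K^+$ yields
\begin{equation*}
\lambda r(K^+) \le \lambda r(K) - \langle \nabla f(K), K^+ - K\rangle_F - \tfrac{1}{\eta}\|K-K^+\|_F^2.
\end{equation*}
Adding the two displays telescopes the $\langle\nabla f(K), K^+-K\rangle_F$ terms and combines the quadratic parts, giving $F(K^+) \le F(K) - \tfrac{1}{2\eta}\|K-K^+\|_F^2$.

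The main obstacle is the stability claim rather than the descent inequality: one must be careful that the local smoothness of Lemma~\ref{lemmma:smoothness} is really meaningful, i.e., that the ball $\mathcal{B}(K;\rho_K)$ on which the bound $L_K$ is claimed consists entirely of stabilizing policies, so that $P(K^+)$ and $\Sigma(K^+)$ (and hence $F(K^+)$) are well-defined and we may legitimately invoke smoothness at the pair $(K,K^+)$. Once that is in hand the proximal-gradient descent lemma is entirely routine.
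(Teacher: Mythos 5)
Your proof is correct and follows essentially the same route as the paper: stability comes from Lemma~\ref{lemma:stable_ball} placing $K^+$ inside $\mathcal{B}(K;\rho_K)$, a ball consisting entirely of stabilizing policies (the paper invokes the perturbation result of Fazel et al.\ for exactly this), and the descent bound is the standard proximal-gradient descent lemma combining local $L_K$-smoothness, $\eta \le 1/L_K$, and the subgradient characterization of the prox step. The only cosmetic difference is that the paper obtains \eqref{eq:descent} by specializing a more general inequality (valid for an arbitrary comparison point $Z$, with an extra strong-convexity term) to $Z=K$, whereas you derive the $Z=K$ case directly; the ingredients are identical.
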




We also derive the bound on stepsize in Lemma~\ref{lemma:stable_ball}, not dependent on iteration numbers. 
\begin{lemma}\label{lemma:bound_p_sigma}
Assume that $\{K^i\}_{i=0,\ldots}$ is a stabilizing sequence and associated $\{f(K^i)\}_{i=0,\ldots}$ and $\{\|K^i - K^\star \|_F\}_{i=0,\ldots}$ are decreasing sequences. 
Then, Lemma~\ref{lemma:stable_ball} holds for 
\begin{equation}
\label{eq:eta_r}
    \begin{aligned}
    \eta^{\lambda,r}= 
    \begin{cases}
      \frac{\rho^L}{\rho^f + \lambda nm}
      &~r(K) = \|K\|_1
      \\
        \frac{\rho^L}{\rho^f + \lambda \min(n,m)}
      &~  r(K) = \|K\|_*
      \\
        \frac{\rho^L}{2 \rho^f + 4\lambda\Delta  }
      &~ r(K) = \|K- K^\mathrm{ref}\|_F^2
    \end{cases}.
    \end{aligned}
\end{equation}
where 
\begin{align*}
\rho^f&=
2\frac{F(K^0)}{\sigma_{\min}(Q)} \bigg(
\norm{B^T}  \frac{F(K^0)}{\sigma_{\min}(\Sigma_0)} \norm{A} 
+\nonumber
\\  
&\left( \norm{R}  + \norm{B^T} \frac{F(K^0)}{\sigma(\Sigma_0)} \norm{B}\right)
(\Delta + \norm{K^\star})\bigg),
\\
\rho^L &= \frac{\sigma_{\mathrm{min}}(\Sigma_0)^2  } {8  F(K^0) \|B\| }.
\end{align*}

\end{lemma}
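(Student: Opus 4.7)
The plan is to establish uniform-in-$i$ upper bounds on the quantities $\|\Sigma(K^i)\|$, $\|P(K^i)\|$, $\|K^i\|$, and $\|A+BK^i\|$ that appear in the expressions $\|\nabla f(K)\|$ and $\rho_K$ of Lemma~\ref{lemma:stable_ball}, and then substitute these bounds into the three regularizer-specific stepsize formulas. The monotonicity assumptions on $\{f(K^i)\}$ and $\{\|K^i-K^\star\|_F\}$ are exactly what decouples the per-iterate bound $\eta_K^{\lambda,r}$ from the iteration index.

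First I would bound $\|P(K^i)\|$ using the primal representation $f(K)=\Tr(\Sigma_0 P(K))\ge \sigma_{\min}(\Sigma_0)\|P(K)\|$, and bound $\|\Sigma(K^i)\|$ using the dual representation $f(K)=\Tr((Q+K^TRK)\Sigma(K))\ge\sigma_{\min}(Q)\|\Sigma(K)\|$. Since $f(K^i)\le F(K^i)\le F(K^0)$ by the decreasing-objective assumption, this yields
\begin{equation*}
\|P(K^i)\|\le \tfrac{F(K^0)}{\sigma_{\min}(\Sigma_0)},\qquad \|\Sigma(K^i)\|\le \tfrac{F(K^0)}{\sigma_{\min}(Q)}.
\end{equation*}
Next I would bound $\|K^i\|\le \|K^i-K^\star\|+\|K^\star\|\le\Delta+\|K^\star\|$ using the decreasing-distance assumption. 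Plugging these three inequalities into $\nabla f(K)=2\bigl((R+B^TPB)K+B^TPA\bigr)\Sigma$ and using $\|\cdot\|$-submultiplicativity gives exactly the stated expression for $\rho^f$, and hence $\|\nabla f(K^i)\|\le\rho^f$ for all $i$.

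For the lower bound $\rho^L$ on $\rho_{K^i}$, the main obstacle is that the denominator of $\rho_K$ contains $\|A+BK\|$, which is not directly bounded by $F(K^0)$. I would extract this bound from the covariance Lyapunov equation: since $(A+BK)\Sigma(A+BK)^T=\Sigma-\Sigma_0\preceq\Sigma$ and $\Sigma(K)\succeq\Sigma_0$, one obtains $\|A+BK\|^2\le\|\Sigma\|/\sigma_{\min}(\Sigma_0)$. Combined with Step~1, this gives a uniform upper bound on $\|\Sigma(K^i)\|(\|A+BK^i\|+1)$ in terms of $F(K^0)$ and $\sigma_{\min}(\Sigma_0)$ only, which after simplification recovers the claimed $\rho^L=\sigma_{\min}(\Sigma_0)^2/(8F(K^0)\|B\|)$ (absorbing the ``$+1$'' via $\|\Sigma\|\ge\sigma_{\min}(\Sigma_0)$).

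Finally I would substitute $\|\nabla f(K^i)\|\le\rho^f$ and $\rho_{K^i}\ge\rho^L$ into each of the three cases of Lemma~\ref{lemma:stable_ball}. For the Frobenius-proximity regularizer the extra term $\|K-K^\mathrm{ref}\|$ is bounded by $\|K^i-K^\star\|+\|K^\star-K^\mathrm{ref}\|\le 2\Delta$, accounting for the factor $4\lambda\Delta$ in the third case of \eqref{eq:eta_r}. Since each substitution makes the denominator in $\eta_K^{\lambda,r}$ only larger, the resulting $\eta^{\lambda,r}$ is a valid uniform lower bound, completing the proof. The only delicate step is verifying the constants in $\rho^L$; everything else is a direct chain of elementary matrix-norm inequalities.
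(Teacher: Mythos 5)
Your overall strategy is the same as the paper's: use the monotonicity of $\{f(K^i)\}$ to get the uniform bounds $\|P(K^i)\|\le F(K^0)/\sigma_{\min}(\Sigma_0)$ and $\|\Sigma(K^i)\|\le F(K^0)/\sigma_{\min}(Q)$ (via $f(K)=\Tr(\Sigma_0 P)$ and $f(K)=\Tr((Q+K^TRK)\Sigma)$, exactly as in the paper, which borrows these from Fazel et al.), use the monotonicity of $\{\|K^i-K^\star\|_F\}$ to get $\|K^i\|\le\Delta+\|K^\star\|$ and $\|K^i-K^{\mathrm{ref}}\|\le 2\Delta$, and then substitute into the gradient formula and the three cases of Lemma~\ref{lemma:stable_ball}. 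That part matches the paper step for step and is fine.

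The one place you genuinely diverge is the treatment of $\|A+BK^i\|$ in the denominator of $\rho_K$. The paper simply asserts $\|A+BK^i\|<1$, so that $\|A+BK^i\|+1<2$; this is not actually justified by stability (only $\rho(A+BK^i)<1$ is known, and the spectral norm can exceed the spectral radius), so your instinct to derive a bound from the covariance Lyapunov equation, $\sigma_{\min}(\Sigma_0)\|A+BK\|^2\le\|(A+BK)\Sigma(A+BK)^T\|=\|\Sigma-\Sigma_0\|\le\|\Sigma\|$, is more careful. However, your claim that this ``after simplification recovers'' $\rho^L=\sigma_{\min}(\Sigma_0)^2/(8F(K^0)\|B\|)$ does not check out: with $\|A+BK\|+1\le 2\sqrt{\|\Sigma\|/\sigma_{\min}(\Sigma_0)}$ and $\|\Sigma\|\le F(K^0)/\sigma_{\min}(Q)$ you obtain
\begin{equation*}
\rho_K\;\ge\;\frac{\sigma_{\min}(\Sigma_0)^{3/2}\,\sigma_{\min}(Q)^{3/2}}{8\,F(K^0)^{3/2}\,\|B\|},
\end{equation*}
which scales like $F(K^0)^{-3/2}$ rather than $F(K^0)^{-1}$ and is not the stated $\rho^L$. (For what it is worth, the paper's own constant is also internally inconsistent: plugging $\|\Sigma\|\le F(K^0)/\sigma_{\min}(Q)$ into its own chain yields $\sigma_{\min}(\Sigma_0)\sigma_{\min}(Q)$ in the numerator, not $\sigma_{\min}(\Sigma_0)^2$.) So your argument proves the lemma with a valid but different $\rho^L$; if you want the exact constant as stated you would have to either adopt the paper's unjustified step $\|A+BK^i\|<1$ or restate $\rho^L$ with your corrected expression. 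Everything else in your proposal is a faithful reconstruction of the paper's proof.
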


Now we prove that the S-PI in Algorithm~\ref{alg:spi} converges to the stationary point linearly.
\begin{theorem}\label{thm:linear_convergence}
$K^i$ from Algorithm~\ref{alg:spi} converges to the stationary point $K^\star$. Moreover, it converges linearly, i.e., after $N$ iterations,
\[
\norm{K^N - K^\star }_F^2 
\overset{}{\leq } \left(1 - \frac{1}{\kappa}\right)^N \norm{K^0 - K^\star}_F^2.
\]
Here, $\kappa=1/\left(\eta_{\min}\sigma_{\mathrm{min}}(\Sigma_0) \sigma_{\mathrm{min}}(R) ) \right)> 1$ where  
    \begin{align}
    \eta_{\min} =
    h_{\eta}\bigg(&\sigma_\mathrm{min}(\Sigma_0),
    \sigma_\mathrm{min}(Q),\frac{1}{\lambda},
    \nonumber
    \\&
    \frac{1}{\norm{A}},
    \frac{1}{\norm{B}},
    \frac{1}{\norm{R}},
    \frac{1}{\Delta},
    \frac{1}{F(K^0)}
    \bigg),
    \label{eq:stepsize}
    \end{align}
    for some  non-decreasing function $h_\eta$ on each argument.
\end{theorem}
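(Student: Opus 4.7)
\textbf{Proof plan for Theorem~\ref{thm:linear_convergence}.} The overall strategy is a standard proximal-gradient contraction argument, but carried out inductively to handle the fact that the smoothness and strong-convexity constants in Lemma~\ref{lemmma:smoothness} are only \emph{local}. My induction hypothesis at iteration $i$ is the three-part statement: (a) $\rho(A+BK^i)<1$, (b) $F(K^i)\le F(K^0)$, and (c) $\|K^i-K^\star\|_F\le\|K^0-K^\star\|_F\le\Delta$. Part (c) is what keeps us in the regime where Lemma~\ref{lemma:bound_p_sigma} applies and therefore lets me use the uniform stepsize bound $\eta^{\lambda,r}$ in \eqref{eq:eta_r} at every iteration, not just an iteration-dependent one.

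\textbf{Step 1: uniform stepsize from the linesearch.} Under the induction hypothesis, Lemma~\ref{lemma:bound_p_sigma} yields a uniform lower bound on the local smoothness radius and a uniform upper bound on $L_{K^i}$; combining these with Proposition~\ref{lemma:descent_lemma} shows that the linesearch criterion \eqref{eq:linesearch} (both the descent part and the spectral-radius part) is automatically satisfied whenever $\eta_i \le \min(1/L_{K^i},\eta^{\lambda,r})$. Since backtracking shrinks $\eta_i$ by the factor $\beta$, it terminates at some $\eta_i \ge \beta\min(1/L_{K^i},\eta^{\lambda,r})$. I then plug in the closed-form bounds on $\|\Sigma(K^i)\|$ and $\|P(K^i)\|$ from Lemma~\ref{lemma:bound_p_sigma} (each controlled by $F(K^0)/\sigma_{\min}(Q)$ and $F(K^0)/\sigma_{\min}(\Sigma_0)$) to obtain an explicit scalar $\eta_{\min}>0$ of the functional form \eqref{eq:stepsize}, where the non-decreasing dependence on $\sigma_{\min}(\Sigma_0),\sigma_{\min}(Q),1/\lambda,1/\|A\|,1/\|B\|,1/\|R\|,1/\Delta,1/F(K^0)$ falls out directly by tracing which quantities appear in the denominators of $\rho^f,\rho^L,L_{K^i}$.

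\textbf{Step 2: per-iteration contraction via strong convexity.} Given the stepsize $\eta_i \ge \eta_{\min}$ and the local strong convexity of $f$ with modulus $m=\sigma_{\min}(\Sigma_0)\sigma_{\min}(R)$ from Lemma~\ref{lemmma:smoothness}, $F=f+\lambda r$ is also $m$-strongly convex on the relevant local region (since $r$ is convex). A standard proximal-gradient one-step analysis --- combining the firm nonexpansiveness of $\prox_{\lambda\eta_i r}$ with the strong-convexity inequality for the gradient step on $f$ --- gives
\begin{align*}
\|K^{i+1}-K^\star\|_F^2 \le (1 - \eta_i m)\,\|K^i-K^\star\|_F^2 \le (1-\eta_{\min} m)\,\|K^i-K^\star\|_F^2.
\end{align*}
This simultaneously closes part (c) of the induction, while part (a) and part (b) are closed directly by the two conclusions \eqref{eq:stable_next_iterate}--\eqref{eq:descent} of Proposition~\ref{lemma:descent_lemma}. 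Unrolling the contraction over $N$ iterations and setting $\kappa = 1/(\eta_{\min}\,m)$ yields the claimed bound.

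\textbf{Main obstacle.} The delicate point is the circularity between (i) being in the local ball where smoothness/strong-convexity hold, (ii) having a uniform lower bound on the linesearch stepsize, and (iii) monotonicity of $\|K^i-K^\star\|_F$. Lemma~\ref{lemma:bound_p_sigma} was precisely engineered to break this circularity by giving bounds depending only on $F(K^0)$ and $\Delta$, which are invariants of the trajectory once monotonicity in $F$ and in distance-to-$K^\star$ is established. Thus the proof reduces to verifying that these monotonicities are preserved inductively, after which Step~1 and Step~2 go through essentially mechanically. A subtle book-keeping issue I would double-check is that the strong-convexity inequality for $F$ at $K^\star$ really holds even though $K^\star$ is only a stationary point of a nonconvex-in-general problem --- this is fine because $K^\star$ lies in the same local region where $F$ is strongly convex, so the stationary point of $F$ there is unique and the usual strong-convexity inequality applies.
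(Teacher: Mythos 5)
Your overall architecture matches the paper's: an inductive argument maintaining stability, monotone decrease of $F$, and monotone decrease of $\norm{K^i-K^\star}_F$ so that Lemma~\ref{lemma:bound_p_sigma} supplies a uniform stepsize lower bound $\eta_{\min}$ surviving the backtracking linesearch, followed by a per-iteration contraction with factor $1-\eta_{\min}m$, $m=\sigma_{\min}(\Sigma_0)\sigma_{\min}(R)$. Step~1 is essentially the paper's Proposition~\ref{proposition:startionary} and is fine.

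The genuine issue is in your Step~2. You propose to get the contraction by combining firm nonexpansiveness of $\prox_{r,\lambda\eta}$ with a contraction estimate for the gradient map $K\mapsto K-\eta\nabla f(K)$ between $K^i$ and $K^\star$. That gradient-map contraction requires a \emph{two-sided} bound on the curvature of $f$ (both the strong-convexity modulus $m$ and the smoothness constant $L$) along the entire segment from $K^i$ to $K^\star$. But Lemma~\ref{lemmma:smoothness} only establishes $L_K$-smoothness \emph{locally}, inside the ball $\mathcal{B}(K^i;\rho_{K^i})$, and nothing guarantees $K^\star$ lies in that ball; the radius $\rho_{K^i}$ is a fixed quantity determined by $\Sigma(K^i)$ and can be much smaller than $\norm{K^i-K^\star}_F$. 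The paper avoids this by never invoking smoothness between $K^i$ and $K^\star$: it proves a three-point inequality (Lemma~\ref{lemma:key_inequality}), $F(K^+)\le F(Z)+\langle G_\eta(K),K-Z\rangle-\frac{m}{2}\norm{K-Z}_F^2-\frac{\eta}{2}\norm{G_\eta(K)}_F^2$, in which smoothness is used only for the pair $(K,K^+)$ — and $K^+\in\mathcal{B}(K;\rho_K)$ is guaranteed by the stepsize restriction of Lemma~\ref{lemma:stable_ball} — while only the strong-convexity lower bound (which the paper treats as global) is used toward $Z$. Substituting $Z=K^\star$, completing the square, and using $F(K^+)\ge F(K^\star)$ then yields $\norm{K^+-K^\star}_F^2\le(1-m\eta)\norm{K-K^\star}_F^2$ directly. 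If you replace your nonexpansiveness argument with this three-point inequality, the rest of your proof (the induction closure via Proposition~\ref{lemma:descent_lemma} and the unrolling to get $\kappa=1/(\eta_{\min}m)$) goes through as you describe.
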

Note that (the global bound on) stepsize $\eta_\mathrm{min}$ is inversely proportional to $\lambda$, which motivates the initial stepsize in linesearch.

\begin{corollary}\label{thm:linear_convergence_eps}
Let $K^\star$ be the stationary point from algorithm \ref{alg:spi}. 
Then, after $N$ iterations
\[
N \geq 2\kappa \log\left(\frac{\norm{K^0 - K^\star}_F}{\epsilon}\right),
\]
\[
\norm{K^N - K^\star }_F \leq \epsilon
\]
holds 
where $\kappa=1/\left(\eta_{\min}\sigma_{\mathrm{min}}(\Sigma_0) \sigma_{\mathrm{min}}(R) ) \right)> 1$ and  $\eta_\mathrm{min}$ in Eq.~\eqref{eq:stepsize}. 
\end{corollary}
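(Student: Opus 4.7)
The plan is to obtain this corollary as an immediate consequence of Theorem~\ref{thm:linear_convergence} by applying the standard inequality $1 - x \leq e^{-x}$ for $x \in [0,1)$ and solving for $N$. No new estimates on the stepsize, smoothness, or proximal step are needed: the quantities $\eta_{\min}$ and $\kappa = 1/(\eta_{\min}\sigma_{\min}(\Sigma_0)\sigma_{\min}(R))$ are carried over unchanged.

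First, starting from Theorem~\ref{thm:linear_convergence}, I would take square roots on both sides to obtain
\begin{equation*}
\norm{K^N - K^\star}_F \leq \left(1 - \tfrac{1}{\kappa}\right)^{N/2} \norm{K^0 - K^\star}_F.
\end{equation*}
Since $\kappa > 1$, we have $1/\kappa \in (0,1)$, so the elementary bound $1 - 1/\kappa \leq e^{-1/\kappa}$ applies and yields
\begin{equation*}
\norm{K^N - K^\star}_F \leq e^{-N/(2\kappa)} \norm{K^0 - K^\star}_F.
\end{equation*}

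Second, I would impose the requirement that the right-hand side be at most $\epsilon$. Taking logarithms (both sides positive) rearranges to the sufficient condition $N/(2\kappa) \geq \log(\norm{K^0 - K^\star}_F/\epsilon)$, equivalently $N \geq 2\kappa \log(\norm{K^0 - K^\star}_F/\epsilon)$, which is exactly the hypothesis of the corollary. Substituting back gives $\norm{K^N - K^\star}_F \leq \epsilon$ as claimed.

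There is no real obstacle here; the only minor subtlety is making sure that the initial suboptimality $\norm{K^0 - K^\star}_F$ is finite and positive (otherwise the logarithm is undefined), but this is already implicit in the assumption $\norm{K^0 - K^\star} \leq \Delta$ stated before Lemma~\ref{lemmma:smoothness}, together with the trivial case $K^0 = K^\star$ where the statement holds vacuously. Because the corollary is essentially a cosmetic rewriting of the geometric rate into an $\epsilon$-iteration-complexity form, the proof should fit in a few lines.
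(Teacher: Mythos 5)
Your proposal is correct and follows exactly the paper's own argument: the corollary is derived from Theorem~\ref{thm:linear_convergence} via the bound $(1-1/\kappa)^N \leq e^{-N/\kappa}$ and taking logarithms. The extra remark about finiteness and positivity of $\norm{K^0 - K^\star}_F$ is a harmless refinement not present in the paper.
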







\section{Toward a Model-Free Framework} \label{sec:strt_lqr_model_free}
In this section, we consider the scenario where the cost function and transition dynamic are unknown. Specifically in model-free setting, policy is directly learned from (trajectory) data without explicitly estimating the cost or transition model. 
In this section, 
we extend our Structured Policy Iteration (S-PI) to the model-free setting and prove its convergence.

\subsection{Model-free Structured Policy Iteration (S-PI)}
\label{sec:spi_model_free}
Note that, in model-free setting, model parameters $A, B, Q$ and $R$ cannot be directly accessed, which hinders the direct computation of $P$, $\Sigma$, and $\nabla f(K)$ accordingly. Instead, we adopt a smoothing procedure to estimate the gradient based on samples. 

Model-free S-PI in Algorithm~\ref{alg:spi_model_free} consists of two steps: (1) policy evaluation step and (2) policy improvement step. In (perturbed) policy evaluation step, perturbation $U^j$ is uniformly drawn from the surface of the ball with radius $r$, $\mathbb{S}_r \subset \reals^{n\times m}$. These data are used to estimate the gradient via a smoothing procedure for the policy improvement step. With this approximate gradient, proximal gradient subroutine tries to decrease the objective while inducing the structure of policy. Comparing to (known-model) S-PI in Algorithm~\ref{alg:spi}, one important difference is its usage of a fixed stepsize $\eta$, rather than an adaptive stepsize from a backtracking linesearch  that requires to access function value $f(K)=\Tr(\Sigma_0 P)$ explicitly.

\begin{algorithm}[h!]
  \caption{Model-free {\bf{S}}tuctured {\bf{P}}olicy {\bf{I}}teration (Model-free {\bf{S-PI}})}  
    \label{alg:spi_model_free}
\begin{algorithmic}[1]
\STATE {\bfseries given} initial stable policy $K^0$, number of trajectories $N_\mathrm{traj}$, roll-out horizon $H$, smoothing parameter $r$, and (fixed) stepsize $\eta$.
\smallskip
\REPEAT 
    \STATE \textbf{(1) (Perturbed) policy evaluation}:
    \FOR{$j=1,\ldots,N_{\mathrm{traj}}$}
      \STATE sample a perturbed policy $\hat K^{i}= K^i + U^j$ where  and $U^j\sim\mathrm{Uniform} (\mathbb{S}_r)$.
      \STATE roll out $\hat K^i$ from sampled initial state $x_0\sim \mathcal{D}$, over the horizon $H$ to estimate the cost-to-go
      \vspace{-2mm}
      \[
      \hat f^j  = \sum_{t=0}^H g_t
      \vspace{-2mm}
      \]
      where $g_t :=g(x_t, \hat K^i x_t)$ is the stage cost incurred at time $t$.
    \ENDFOR\\
    \textbf{return} cost-to-go and perturbation $\{\hat f^j, U^j\}_{j=1}^{N_{\mathrm{traj}}}$. 

\smallskip
   \STATE \textbf{(2) Policy improvement:}
    \STATE estimate  the gradient
    \vspace{-2mm}
    \begin{align}
        \widehat{\nabla_K f(K^i)} = 
        \frac{1}{N_\mathrm{traj}} \sum_{j=1}^{N_\mathrm{traj}} \frac{d}{r^2} \hat f^j U^j
        \label{eq:gradient_estimate}
    \end{align}
    \vspace{-5mm}
    \STATE $K^{i+1}$ $\leftarrow$  $\mathrm{ProxGrad}(\widehat{\nabla_K f(K^i)}, \eta, r, \lambda)$ (in Alg.~\ref{alg:proximal_gradient}).
          
    \textbf{return} next iterate $K^{i+1}.$ 
\UNTIL{stopping criterion $\|K^{i+1} - K^i\| \leq \epsilon$ is satisfied.}
\end{algorithmic}
\end{algorithm}

\subsection{Convergence analysis of model-free S-PI}
\label{sec:spi_model_free_convergence}
The outline of proof is as following: We first claim that for proper parameters (perturbation, horizon number, numbers of trajectory), the gradient estimate from the smoothing procedure is close enough to actual gradient with high probability. Next we demonstrate that approximate proximal gradient still converges linearly with high probability.  


\begin{theorem}\label{thm:linear_convergence_model_free}
Suppose $F(K^0)$ is finite, $\Sigma_0 \succ 0$, and that $x_0 \sim \mathcal{D}$ has norm bounded by $D$ almost surly. Suppose the parameters in Algorithm~\ref{alg:spi_model_free} are chosen from 
\[
(N_\mathrm{traj}, H, 1/r) = 
h\left(n,
\frac{1}{\epsilon}, 
\frac{1}{ \sigma_{\mathrm{min}}(\Sigma_0) \sigma_{\mathrm{min}}(R) }, \frac{D^2}{\sigma_{\mathrm{min}}(\Sigma_0)}\right).
\]
for some polynomials $h$.
Then, with the same stepsize in Eq.~\eqref{eq:stepsize},
there exist iteration $N$  at most $4\kappa \log\left(\frac{\norm{K^0 - K^\star}_F}{\epsilon}\right)$ such that $\norm{K^N - K^\star }\leq \epsilon$ with  at least $1 - {o}(\epsilon^{n-1})$ probability. Moreover, it converges linearly,
\[
\norm{K^i - K^\star }^2 
\overset{}{\leq } \left(1 - \frac{1}{2\kappa}\right)^i \norm{K^0 - K^\star}^2, 
\]
for the iteration $i=1,\ldots,N$,  where $\kappa=\eta\sigma_{\mathrm{min}}(\Sigma_0) \sigma_{\mathrm{min}}(R)  > 1$.
\end{theorem}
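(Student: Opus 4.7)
The plan is to lift Theorem~\ref{thm:linear_convergence} to the zeroth-order setting by treating the smoothing-based gradient estimator as an inexact oracle and showing that (i) with the stated polynomial choices of $(N_\mathrm{traj}, H, 1/r)$ the oracle is accurate enough with high probability, and (ii) the proximal-gradient contraction of Proposition~\ref{lemma:descent_lemma} is robust to sufficiently small gradient perturbations, which costs only a factor of two in the contraction rate and produces the claimed $1 - \tfrac{1}{2\kappa}$ recursion.

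I would first decompose the gradient estimation error into three sources: a smoothing bias $E_{\text{bias}} = \nabla f_r(K^i) - \nabla f(K^i)$, a horizon truncation error $E_{\text{trunc}}$ from replacing the infinite-horizon cost by $\sum_{t=0}^{H} g_t$, and a Monte-Carlo sampling error $E_{\text{samp}} = \widehat{\nabla_K f(K^i)} - \nabla f_r(K^i)$ (with $\nabla f_r$ understood via the ball-smoothing identity recalled in Section~\ref{sec:prelim}). Using the local smoothness estimate of Lemma~\ref{lemmma:smoothness} on a ball of radius $\rho_{K^i}$, one obtains $\|E_{\text{bias}}\| = O(L_{K^i} r)$ provided $r \leq \rho_{K^i}$. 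Stability of the perturbed policy $\hat K^i = K^i + U^j$ (again for $r \leq \rho_{K^i}$) yields geometric Lyapunov decay and hence $\|E_{\text{trunc}}\| = O(\zeta^{H} D^2)$ for some $\zeta = \zeta(K^i,B) < 1$, which is made small by polynomial $H$. Finally, uniform boundedness of the rollouts lets me apply a vector Bernstein / Hoeffding-type concentration bound to the average in~\eqref{eq:gradient_estimate}, yielding $\|E_{\text{samp}}\|_F \leq \epsilon'$ with failure probability exponentially small in $N_\mathrm{traj}$ after accounting for the $d/r^2$ rescaling.

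Next, I would replace the exact gradient in Proposition~\ref{lemma:descent_lemma} by $\hat g = \nabla f(K^i) + E$ with $\|E\|_F \leq \epsilon'$. Non-expansiveness of $\prox_{r,\lambda\eta}$ together with the constant $m = \sigma_{\min}(\Sigma_0)\sigma_{\min}(R)$ from Lemma~\ref{lemmma:smoothness} (used in gradient-dominated form, as in the known-model analysis) delivers a perturbed contraction
\begin{equation*}
\|\tilde K^{i+1} - K^\star\|_F^2 \leq \bigl(1 - \tfrac{1}{\kappa}\bigr)\|K^i - K^\star\|_F^2 + c\,\eta^2 \epsilon'^2 .
\end{equation*}
Choosing $\epsilon'$ of order $\epsilon/\eta$ so that the noise term is dominated by $\tfrac{1}{2\kappa}\|K^i - K^\star\|_F^2$ as long as $\|K^i - K^\star\|_F \geq \epsilon$ collapses this to the weaker $(1-\tfrac{1}{2\kappa})$ recursion stated in the theorem. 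Inverting this polynomial requirement on $\epsilon'$ back through the three error bounds above pins down the polynomial $h$; a union bound over the at-most $N = 4\kappa \log(\|K^0-K^\star\|_F/\epsilon)$ iterations, with per-iteration failure probability $o(\epsilon^{n-1})/N$, controls the total failure probability at $o(\epsilon^{n-1})$.

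The hardest step will be maintaining stability of the iterates uniformly along the trajectory: the smoothing radius $r$ must stay below $\rho_{K^i}$ at every iterate so the bias and truncation arguments go through, yet it cannot be too small because the sampling variance of $\widehat{\nabla_K f(K^i)}$ scales like $1/r^4$. Closing this loop requires inductively showing that the stochastic sequence $\{K^i\}$ remains in a common stability region with a uniform lower bound on $\rho_{K^i}$, analogous to Lemma~\ref{lemma:bound_p_sigma} in the known-model case but now only guaranteed on the high-probability event where every gradient estimate is accurate. Threading this event through the induction, while keeping the polynomial degrees in $h$ finite, is the delicate part of the argument.
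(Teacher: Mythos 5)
Your proposal is correct and follows essentially the same route as the paper: it likewise treats the smoothed estimator as an inexact oracle, uses non-expansiveness of the proximal operator to bound $\|K^\prime - K^+\|_F \le \eta\alpha\epsilon$, absorbs the estimation error into the exact contraction of Theorem~\ref{thm:linear_convergence} while $\|K^i - K^\star\|_F \ge \epsilon$ (choosing $\alpha = \tfrac{1}{5\eta\kappa}$) to obtain the $\left(1-\tfrac{1}{2\kappa}\right)$ rate, and finishes with a union bound over the at most $N$ iterations. The only substantive difference is that the paper imports the gradient-accuracy guarantee wholesale as Lemma~30 of \cite{fazel2018global} rather than re-deriving your bias/truncation/sampling decomposition, and it leaves implicit the uniform-stability induction you correctly identify as the delicate point.
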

 
\paragraph{Remark.} For (unregularized) LQR, the model-free policy gradient method \cite{fazel2018global} is the first one that adopted a smoothing procedure to estimate the gradient.
Similar to this, our model-free S-PI for regularized LQR also has several major challenges for deploying in practice.
First one is that finding an initial policy with finite $F(K^0)$ is non-trivial, especially when the open loop system is unstable, i.e., $\rho(A) \geq  1$. Second one is its sensitivity to fixed stepsize $\eta$ as in the known model setting (in Section~\ref{sec:spi}), wrong choice of which easily makes it divergent.
Note that these two challenges hold over most of gradient based methods for LQR including policy gradient \cite{fazel2018global}, trust region policy optimization (TRPO)~\cite{schulman2015trust}, or proximal policy optimization (PPO)~\cite{schulman2017proximal}. 
Last one is the joint sensitivity to multiple parameters $N_\mathrm{traj}, H, r$, which may lead to high variance or large sub-optimality gap. 
On the other hand, REINFORCE \cite{williams1992simple} may suffer less variance, but with another potential difficulty: estimating the state-action value function $Q(x,u)$. Moreover, it is not clear how to derive a structured policy. 
However, here we adopted smoothing procedures that enable us to theoretically analyze the convergence rate and parameter dependency.

\section{Experiments} \label{sec:exp}
In experiments, we consider a LQR system for the purpose of validating the theoretical results and basic properties of the S-PI algorithm. As mentioned in Section~\ref{sec:spi_model_free_convergence}, the simple example with an unstable open loop system, i.e., $\rho(A)\geq1$, is extremely sensitive to parameters even under known model setting, 
which may make it less in favor of the generic model-free RL approaches to deploy. 
Please note that our objective is total LQR cost-to-go in Eq.~\eqref{eq:prbl_lqr} more difficult than LQR cost-to-go averaged over time-horizon that some of works \cite{mania2018simple} considered.  
Under this difficulty, we demonstrate the properties of S-PI such as parameter sensitivity, convergence behaviors, and capability of balancing between LQR performance and policy structures. 
Finally, we illustrate the scalability of algorithms over various system dimensions.

\subsection{Synthetic systems} \label{sec:exp-syn-sys}
In these experiments, we use the unstable Laplacian system ~\cite{recht2019tour}. 


\textbf{Large Laplacian dynamics.\;\;}
$A \in \reals^{n\times n}$ where 
\begin{align*}
    A_{ij}
    =
    \begin{cases}
        1.1, & i=j \\
        0.1, & i=j+1 \text{ or }j=i+1  \\
        0,   & \text{otherwise}
    \end{cases}
\end{align*}
$B = Q = I_n \in \reals^{n\times n}$ and $R = 1000 \times I_n \in \reals^{n\times n}$.


\textbf{Synthetic system parameters.\;\;}
For the Laplacian system, we regard $(n,m)=(3,3)$, $(n,m)=(20,20)$, and $(n,m)=(10^3,10^3)$ dimension as small, medium, and large size of system. 
In addition, we experiment with Lasso regularizer over various $\lambda=10^{-2}\sim 10^{6}$.


\subsection{Algorithm parameters}
Based on our theoretical results (Lemma~\ref{lemma:stable_ball}) and sensitivity experiments that empirically show the dependency of stepsize $\eta$ on $\lambda$ (in Fig.~\ref{fig:stepsize_dependency}), we set the initial stepsize $\eta = \frac{1}{\lambda}$.  
For the backtracking linesearch, we set $\beta = \frac{1}{2}$ and the convergence tolerance $\epsilon_{\mathrm{tol}}=10^{-6}$. We start with an initial policy $K^0$ from Riccati recursion. 

\subsection{Results}
We denote the stationary point from S-PI (at each $\lambda > 0$) as $K^{\star}$ and the solution of the (unregularized) LQR as $K^{lqr}$ .

\begin{figure}
\vspace{-0mm}
\centering
  \includegraphics[width=0.8\linewidth]{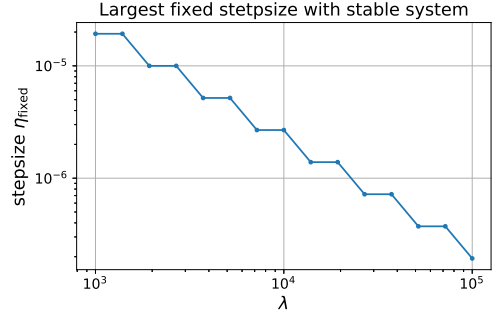}
  \vspace{-4mm}
  \caption{Largest fixed stepsize leading stable system as $\lambda$ varies. 
  This demonstrates that stepsize  $\eta_\mathrm{fixed}\propto \frac{1}{\lambda}$.
  }
  \label{fig:stepsize_dependency}
  \vspace{-2mm}
\end{figure}

\textbf{Dependency of stepsize $\eta$ on $\lambda$.\;\;}
Under the same problem but with different choices of weight $\lambda$, the largest fixed stepsize $\eta$ that demonstrates the sequence of stable systems, i.e., $A+BK^i<1$ actually varies, as Lemma~\ref{lemma:stable_ball} implies.
Fig.~\ref{fig:stepsize_dependency} shows that the largest stepsize diminishes as $\lambda$ increases. 
This motivates the choice of the initial stepsize (in linesearch) to be inversely proportional to $\lambda$.


\begin{figure}
\centering
\includegraphics[width=0.8\linewidth]{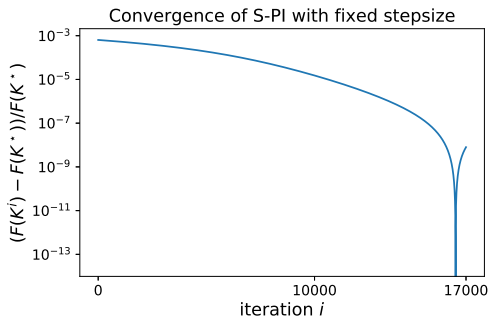}
\vspace{-5mm}
\caption{Convergence behavior of the Structured Policy Iteration (S-PI) with fixed stepsize for Laplacian system of $(n,m)=(3,3)$ with $\lambda=3000$. 
}
\label{fig:convergence}
\end{figure}

\textbf{Convergence behavior and stepsize sensitivity.\;\;}
S-PI with a linesearh strategy converges very fast, within 2-3 iterations for most of the Laplacian system with dimension $n=3\sim 10^3$ and weight $\lambda=10^{-2}\sim 10^6$. However, S-PI with fixed stepsize may perform with subtlety even though the fixed stepsize choice is common in typical optimization problems. 
In Fig.~\ref{fig:convergence}, S-PI with fixed stepsize for the small Laplacian system $(n,m)=(3,3)$ gets very close to optimal but begins to deviate after a certain amount of iterations.
Note that it was performed over long iterations with small enough stepsize $\eta_\mathrm{fixed}=3 \times  10^{-5}$. Please refer to supplementary materials for more detailed results over various stepsizes.
This illustrates that a linesearch strategy can be essential in S-PI (or even for the existing policy gradient method for LQR \cite{fazel2018global}), even though the convergence analysis was shown with some fixed stepsize (but difficult to compute in practice). 

\begin{figure}
\vspace{-3mm}
\includegraphics[width=1.0\linewidth]{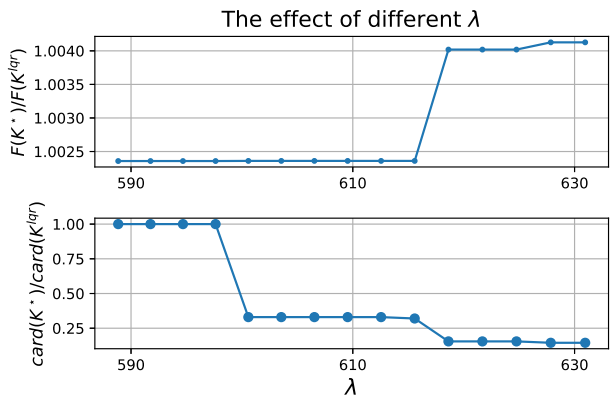}
\vspace{-6mm}
\caption{
As $\lambda$ become larger, the LQR cost slightly increases (top) within range $\lambda < 615$ whereas sparsity is significantly improved by $50\%$ (bottom) for a Laplacian system with $(n,m)=(20, 20)$.
}
\label{fig:result_laplacian}
\end{figure}

\textbf{Trade off between LQR performance and structure $K$.\;\;}
In Fig.~\ref{fig:result_laplacian} for a medium size Laplacian system, we show that as $\lambda$ increases, the LQR cost $f(K^\star)$ increases whereas cardinality decreases (sparsitiy is improved). 
Note that the LQR performance barely changes (or is slightly worse) for $\lambda \leq 615$ but the sparsity is significantly improved by more than $50\%$. 
In Fig.~\ref{fig:sparsity_laplacian}, we show the sparsity pattern (location of non-zero elements) of the policy matrix with $\lambda=600$ and $\lambda=620$.

\begin{figure}[t!]
\vspace{0mm}
\includegraphics[width=1.0\linewidth]{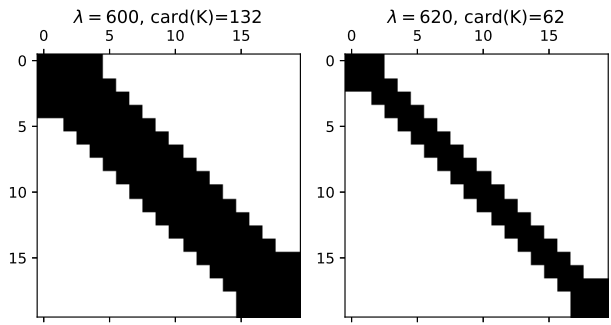}
\vspace{-6mm}
\caption{Sparse pattern of policy with $\lambda=600$ and $\lambda=620$ respectively for a Laplacian system with $n=20$.
}
\vspace{-2mm}
\label{fig:sparsity_laplacian}
\end{figure}

\textbf{Scalability \& runtime performance.\;\;}
In Fig.~\ref{fig:scalability}, we report the runtime for Laplacian system of $n=10,\ldots,500$.
Notably, it shows the scalability of S-PI, as it takes less than 2 minutes to solve a large system with $n=500$ dimensions, where we used a MacBook Air (with a 1.3 GHz Intel Core i5 CPU) for experiments. These results demonstrate its applicability to large-scale problems in practice.

\begin{figure}
\centering
\includegraphics[width=0.8\linewidth]{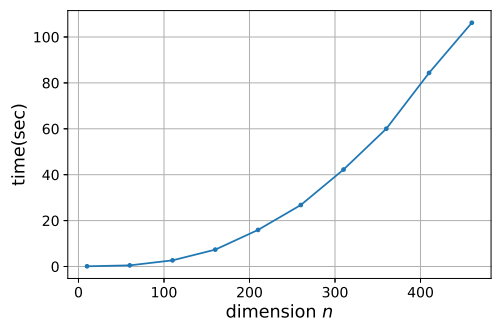}
\vspace{-5mm}
\caption{ 
The elapsed time (sec.) until S-PI converges over $n=10,\ldots, 500$ for the Laplacian system. 
}
\label{fig:scalability}
\vspace{-3mm}
\end{figure}




\section{Conclusion and Discussion}

In this paper, we formulated a regularized LQR problem to derive a structured linear policy and provided an efficient algorithm, Structured Policy Iteration (S-PI). 
We proved that S-PI guarantees to converge linearly under a proper choice of stepsize, keeping the iterate within the set of stable policy as well as decreasing the objective at each iteration. 
In addition, we extended S-PI for model-free setting, utilizing a smoothing procedure. 
We also proved its convergence guarantees with high probability under a proper choice of parameters including stepsize, horizon counts, trajectory counts, etc. In the experiments, we examined some basic properties of the S-PI such as sensitivity on stepsize and regularization parameters to convergence behaviors, which turned out to be more critical than typical optimization problems. Lastly, we demonstrated that our method is effective in terms of balancing the quality of solution and structures. 

We leave for future work the practical application of other penalty functions such as low-rank and proximity regularization. There are also new extensions on using proximal Newton or proximal natural gradient method as a subroutine, beyond what was developed in S-PI in Section~\ref{sec:strc_lqr}, which could be further analyzed. Even though model-free algorithm for regularized LQR was suggested with theoretical guarantees, it is extremely difficult to deploy in practice, like most of model-free approaches for LQR. Finally, developing algorithm reducing variance for (regularized) LQR, possibly like \cite{papini2018stochastic, park2020linear}, as well as some practical rule of thumb on the choice of hyper-parameters is another class of important problems to tackle toward model-free settings. 

We described a new class of discrete time LQR problems that have yet to be studied theoretically and practically. And we discussed and demonstrated how such problems can be of practical importance despite them not being well-studied in the literature.While a few application were covered for this new class of problems, each of these contributions would open up our framework to new potential applications, providing additional benefits to future research on this topic.


\newpage
\bibliographystyle{icml2020}
\bibliography{refs.bib}

\begin{thebibliography}{59}
\providecommand{\natexlab}[1]{#1}
\providecommand{\url}[1]{\texttt{#1}}
\expandafter\ifx\csname urlstyle\endcsname\relax
  \providecommand{\doi}[1]{doi: #1}\else
  \providecommand{\doi}{doi: \begingroup \urlstyle{rm}\Url}\fi

\bibitem[Abbasi-Yadkori \& Szepesv{\'a}ri(2011)Abbasi-Yadkori and
  Szepesv{\'a}ri]{abbasi2011regret}
Abbasi-Yadkori, Y. and Szepesv{\'a}ri, C.
\newblock Regret bounds for the adaptive control of linear quadratic systems.
\newblock In \emph{Proceedings of the 24th Annual Conference on Learning
  Theory}, pp.\  1--26, 2011.

\bibitem[Al~Borno et~al.(2012)Al~Borno, De~Lasa, and
  Hertzmann]{al2012trajectory}
Al~Borno, M., De~Lasa, M., and Hertzmann, A.
\newblock Trajectory optimization for full-body movements with complex
  contacts.
\newblock \emph{IEEE transactions on visualization and computer graphics},
  19\penalty0 (8):\penalty0 1405--1414, 2012.

\bibitem[Anderson \& Moore(2007)Anderson and Moore]{anderson2007optimal}
Anderson, B.~D. and Moore, J.~B.
\newblock \emph{Optimal control: linear quadratic methods}.
\newblock Courier Corporation, 2007.

\bibitem[Balakrishnan \& Vandenberghe(2003)Balakrishnan and
  Vandenberghe]{balakrishnan2003semidefinite}
Balakrishnan, V. and Vandenberghe, L.
\newblock Semidefinite programming duality and linear time-invariant systems.
\newblock \emph{IEEE Transactions on Automatic Control}, 48\penalty0
  (1):\penalty0 30--41, 2003.

\bibitem[Barzilai \& Borwein(1988)Barzilai and Borwein]{barzilai1988two}
Barzilai, J. and Borwein, J.~M.
\newblock Two-point step size gradient methods.
\newblock \emph{IMA journal of numerical analysis}, 8\penalty0 (1):\penalty0
  141--148, 1988.

\bibitem[Bellman et~al.(1954)]{bellman1954theory}
Bellman, R. et~al.
\newblock The theory of dynamic programming.
\newblock \emph{Bulletin of the American Mathematical Society}, 60\penalty0
  (6):\penalty0 503--515, 1954.

\bibitem[Bertsekas \& Shreve(2004)Bertsekas and
  Shreve]{bertsekas2004stochastic}
Bertsekas, D.~P. and Shreve, S.
\newblock \emph{Stochastic optimal control: the discrete-time case}.
\newblock 2004.

\bibitem[Bertsekas \& Tsitsiklis(1996)Bertsekas and
  Tsitsiklis]{bertsekas1996neuro}
Bertsekas, D.~P. and Tsitsiklis, J.~N.
\newblock \emph{Neuro-dynamic programming}, volume~5.
\newblock Athena Scientific Belmont, MA, 1996.

\bibitem[Bertsekas et~al.(1995)Bertsekas, Bertsekas, Bertsekas, and
  Bertsekas]{bertsekas1995dynamic}
Bertsekas, D.~P., Bertsekas, D.~P., Bertsekas, D.~P., and Bertsekas, D.~P.
\newblock \emph{Dynamic programming and optimal control}, volume~1.
\newblock Athena scientific Belmont, MA, 1995.

\bibitem[Bradtke et~al.(1994)Bradtke, Ydstie, and Barto]{bradtke1994adaptive}
Bradtke, S.~J., Ydstie, B.~E., and Barto, A.~G.
\newblock Adaptive linear quadratic control using policy iteration.
\newblock In \emph{Proceedings of 1994 American Control Conference-ACC'94},
  volume~3, pp.\  3475--3479. IEEE, 1994.

\bibitem[Conn et~al.(2009)Conn, Scheinberg, and Vicente]{conn2009introduction}
Conn, A.~R., Scheinberg, K., and Vicente, L.~N.
\newblock \emph{Introduction to derivative-free optimization}, volume~8.
\newblock Siam, 2009.

\bibitem[Costa et~al.(2006)Costa, Fragoso, and Marques]{costa2006discrete}
Costa, O. L.~V., Fragoso, M.~D., and Marques, R.~P.
\newblock \emph{Discrete-time Markov jump linear systems}.
\newblock Springer Science \& Business Media, 2006.

\bibitem[De~Farias \& Van~Roy(2003)De~Farias and Van~Roy]{de2003linear}
De~Farias, D.~P. and Van~Roy, B.
\newblock The linear programming approach to approximate dynamic programming.
\newblock \emph{Operations research}, 51\penalty0 (6):\penalty0 850--865, 2003.

\bibitem[Dean et~al.(2017)Dean, Mania, Matni, Recht, and Tu]{dean2017sample}
Dean, S., Mania, H., Matni, N., Recht, B., and Tu, S.
\newblock On the sample complexity of the linear quadratic regulator.
\newblock \emph{Foundations of Computational Mathematics}, pp.\  1--47, 2017.

\bibitem[Elmaghraby(1993)]{elmaghraby1993resource}
Elmaghraby, S.~E.
\newblock Resource allocation via dynamic programming in activity networks.
\newblock \emph{European Journal of Operational Research}, 64\penalty0
  (2):\penalty0 199--215, 1993.

\bibitem[Fazel et~al.(2018)Fazel, Ge, Kakade, and Mesbahi]{fazel2018global}
Fazel, M., Ge, R., Kakade, S.~M., and Mesbahi, M.
\newblock Global convergence of policy gradient methods for the linear
  quadratic regulator.
\newblock \emph{arXiv preprint arXiv:1801.05039}, 2018.

\bibitem[Flaxman et~al.(2004)Flaxman, Kalai, and McMahan]{flaxman2004online}
Flaxman, A.~D., Kalai, A.~T., and McMahan, H.~B.
\newblock Online convex optimization in the bandit setting: gradient descent
  without a gradient.
\newblock \emph{arXiv preprint cs/0408007}, 2004.

\bibitem[Florentin(1961)]{florentin1961optimal}
Florentin, J.~J.
\newblock Optimal control of continuous time, markov, stochastic systems.
\newblock \emph{International Journal of Electronics}, 10\penalty0
  (6):\penalty0 473--488, 1961.

\bibitem[Hewer(1971)]{hewer1971iterative}
Hewer, G.
\newblock An iterative technique for the computation of the steady state gains
  for the discrete optimal regulator.
\newblock \emph{IEEE Transactions on Automatic Control}, 16\penalty0
  (4):\penalty0 382--384, 1971.

\bibitem[Ibrahimi et~al.(2012)Ibrahimi, Javanmard, and
  Roy]{ibrahimi2012efficient}
Ibrahimi, M., Javanmard, A., and Roy, B.~V.
\newblock Efficient reinforcement learning for high dimensional linear
  quadratic systems.
\newblock In \emph{Advances in Neural Information Processing Systems}, pp.\
  2636--2644, 2012.

\bibitem[Jaimoukha \& Kasenally(1994)Jaimoukha and
  Kasenally]{jaimoukha1994krylov}
Jaimoukha, I.~M. and Kasenally, E.~M.
\newblock Krylov subspace methods for solving large lyapunov equations.
\newblock \emph{SIAM Journal on Numerical Analysis}, 31\penalty0 (1):\penalty0
  227--251, 1994.

\bibitem[Kakade(2002)]{kakade2002natural}
Kakade, S.~M.
\newblock A natural policy gradient.
\newblock In \emph{Advances in neural information processing systems}, pp.\
  1531--1538, 2002.

\bibitem[Kalman(1964)]{kalman1964linear}
Kalman, R.~E.
\newblock When is a linear control system optimal?
\newblock \emph{Journal of Basic Engineering}, 86\penalty0 (1):\penalty0
  51--60, 1964.

\bibitem[Kumar et~al.(2016)Kumar, Todorov, and Levine]{kumar2016optimal}
Kumar, V., Todorov, E., and Levine, S.
\newblock Optimal control with learned local models: Application to dexterous
  manipulation.
\newblock In \emph{2016 IEEE International Conference on Robotics and
  Automation (ICRA)}, pp.\  378--383. IEEE, 2016.

\bibitem[Lancaster \& Rodman(1995)Lancaster and Rodman]{lancaster1995algebraic}
Lancaster, P. and Rodman, L.
\newblock \emph{Algebraic riccati equations}.
\newblock Clarendon press, 1995.

\bibitem[Lau et~al.(1972)Lau, Persiano, and Varaiya]{lau1972decentralized}
Lau, R., Persiano, R., and Varaiya, P.
\newblock Decentralized information and control: A network flow example.
\newblock \emph{IEEE Transactions on Automatic Control}, 17\penalty0
  (4):\penalty0 466--473, 1972.

\bibitem[Laub(1979)]{laub1979schur}
Laub, A.
\newblock A schur method for solving algebraic riccati equations.
\newblock \emph{IEEE Transactions on automatic control}, 24\penalty0
  (6):\penalty0 913--921, 1979.

\bibitem[Lazic et~al.(2018)Lazic, Boutilier, Lu, Wong, Roy, Ryu, and
  Imwalle]{DBLP:conf/nips/LazicBLWRRI18}
Lazic, N., Boutilier, C., Lu, T., Wong, E., Roy, B., Ryu, M.~K., and Imwalle,
  G.
\newblock Data center cooling using model-predictive control.
\newblock In \emph{Advances in Neural Information Processing Systems
  (NeurIPS)}, pp.\  3818--3827, 2018.

\bibitem[Lewis et~al.(2012)Lewis, Vrabie, and Syrmos]{Lewis:2012}
Lewis, F.~L., Vrabie, D., and Syrmos, V.~L.
\newblock \emph{{Optimal control. 3rd ed.}}
\newblock {Hoboken, NJ: John Wiley Sons. }, 2012.

\bibitem[Li \& White(2002)Li and White]{li2002low}
Li, J.-R. and White, J.
\newblock Low rank solution of lyapunov equations.
\newblock \emph{SIAM Journal on Matrix Analysis and Applications}, 24\penalty0
  (1):\penalty0 260--280, 2002.

\bibitem[Lin et~al.(2012)Lin, Fardad, and Jovanovi{\'c}]{lin2012sparse}
Lin, F., Fardad, M., and Jovanovi{\'c}, M.~R.
\newblock Sparse feedback synthesis via the alternating direction method of
  multipliers.
\newblock In \emph{2012 American Control Conference (ACC)}, pp.\  4765--4770.
  IEEE, 2012.

\bibitem[Mania et~al.(2018)Mania, Guy, and Recht]{mania2018simple}
Mania, H., Guy, A., and Recht, B.
\newblock Simple random search provides a competitive approach to reinforcement
  learning.
\newblock \emph{arXiv preprint arXiv:1803.07055}, 2018.

\bibitem[Mania et~al.(2019)Mania, Tu, and Recht]{mania2019certainty}
Mania, H., Tu, S., and Recht, B.
\newblock Certainty equivalent control of lqr is efficient.
\newblock \emph{arXiv preprint arXiv:1902.07826}, 2019.

\bibitem[Mnih et~al.(2015)Mnih, Kavukcuoglu, Silver, Rusu, Veness, Bellemare,
  Graves, Riedmiller, Fidjeland, Ostrovski, Petersen, Beattie, Sadik,
  Antonoglou, King, Kumaran, Wierstra, Legg, and
  Hassabis]{DBLP:journals/nature/MnihKSRVBGRFOPB15}
Mnih, V., Kavukcuoglu, K., Silver, D., Rusu, A.~A., Veness, J., Bellemare,
  M.~G., Graves, A., Riedmiller, M.~A., Fidjeland, A., Ostrovski, G., Petersen,
  S., Beattie, C., Sadik, A., Antonoglou, I., King, H., Kumaran, D., Wierstra,
  D., Legg, S., and Hassabis, D.
\newblock Human-level control through deep reinforcement learning.
\newblock \emph{Nature}, 518\penalty0 (7540):\penalty0 529--533, 2015.

\bibitem[Nerlove \& Arrow(1962)Nerlove and Arrow]{nerlove1962optimal}
Nerlove, M. and Arrow, K.~J.
\newblock Optimal advertising policy under dynamic conditions.
\newblock \emph{Economica}, pp.\  129--142, 1962.

\bibitem[Nesterov \& Spokoiny(2017)Nesterov and Spokoiny]{nesterov2017random}
Nesterov, Y. and Spokoiny, V.
\newblock Random gradient-free minimization of convex functions.
\newblock \emph{Foundations of Computational Mathematics}, 17\penalty0
  (2):\penalty0 527--566, 2017.

\bibitem[O'Donoghue et~al.(2011)O'Donoghue, Wang, and Boyd]{o2011min}
O'Donoghue, B., Wang, Y., and Boyd, S.
\newblock Min-max approximate dynamic programming.
\newblock In \emph{2011 IEEE International Symposium on Computer-Aided Control
  System Design (CACSD)}, pp.\  424--431. IEEE, 2011.

\bibitem[Papini et~al.(2018)Papini, Binaghi, Canonaco, Pirotta, and
  Restelli]{papini2018stochastic}
Papini, M., Binaghi, D., Canonaco, G., Pirotta, M., and Restelli, M.
\newblock Stochastic variance-reduced policy gradient.
\newblock \emph{arXiv preprint arXiv:1806.05618}, 2018.

\bibitem[Parikh et~al.(2014)Parikh, Boyd, et~al.]{parikh2014proximal}
Parikh, N., Boyd, S., et~al.
\newblock Proximal algorithms.
\newblock \emph{Foundations and Trends{\textregistered} in Optimization},
  1\penalty0 (3):\penalty0 127--239, 2014.

\bibitem[Park \& Ryu(2020)Park and Ryu]{park2020linear}
Park, Y. and Ryu, E.~K.
\newblock Linear convergence of cyclic saga.
\newblock \emph{Optimization Letters}, pp.\  1--16, 2020.

\bibitem[Park et~al.(2019)Park, Mahadik, Rossi, Wu, and Zhao]{park2019linear}
Park, Y., Mahadik, K., Rossi, R.~A., Wu, G., and Zhao, H.
\newblock Linear quadratic regulator for resource-efficient cloud services.
\newblock In \emph{Proceedings of the ACM Symposium on Cloud Computing}, pp.\
  488--489, 2019.

\bibitem[Park et~al.(2020)Park, Dhar, Boyd, and Shah]{park2020variable}
Park, Y., Dhar, S., Boyd, S., and Shah, M.
\newblock Variable metric proximal gradient method with diagonal
  barzilai-borwein stepsize.
\newblock In \emph{ICASSP 2020-2020 IEEE International Conference on Acoustics,
  Speech and Signal Processing (ICASSP)}, pp.\  3597--3601. IEEE, 2020.

\bibitem[Powell(2007)]{powell2007approximate}
Powell, W.~B.
\newblock \emph{Approximate Dynamic Programming: Solving the curses of
  dimensionality}, volume 703.
\newblock John Wiley \& Sons, 2007.

\bibitem[Recht(2019)]{recht2019tour}
Recht, B.
\newblock A tour of reinforcement learning: The view from continuous control.
\newblock \emph{Annual Review of Control, Robotics, and Autonomous Systems},
  2:\penalty0 253--279, 2019.

\bibitem[Rockafellar(1976)]{rockafellar1976monotone}
Rockafellar, R.~T.
\newblock Monotone operators and the proximal point algorithm.
\newblock \emph{SIAM journal on control and optimization}, 14\penalty0
  (5):\penalty0 877--898, 1976.

\bibitem[Sandell et~al.(1978)Sandell, Varaiya, Athans, and
  Safonov]{sandell1978survey}
Sandell, N., Varaiya, P., Athans, M., and Safonov, M.
\newblock Survey of decentralized control methods for large scale systems.
\newblock \emph{IEEE Transactions on automatic Control}, 23\penalty0
  (2):\penalty0 108--128, 1978.

\bibitem[Sarimveis et~al.(2008)Sarimveis, Patrinos, Tarantilis, and
  Kiranoudis]{sarimveis2008dynamic}
Sarimveis, H., Patrinos, P., Tarantilis, C.~D., and Kiranoudis, C.~T.
\newblock Dynamic modeling and control of supply chain systems: A review.
\newblock \emph{Computers \& Operations Research}, 35\penalty0 (11):\penalty0
  3530--3561, 2008.

\bibitem[Schulman et~al.(2015)Schulman, Levine, Abbeel, Jordan, and
  Moritz]{schulman2015trust}
Schulman, J., Levine, S., Abbeel, P., Jordan, M., and Moritz, P.
\newblock Trust region policy optimization.
\newblock In \emph{International conference on machine learning}, pp.\
  1889--1897, 2015.

\bibitem[Schulman et~al.(2017)Schulman, Wolski, Dhariwal, Radford, and
  Klimov]{schulman2017proximal}
Schulman, J., Wolski, F., Dhariwal, P., Radford, A., and Klimov, O.
\newblock Proximal policy optimization algorithms.
\newblock \emph{arXiv preprint arXiv:1707.06347}, 2017.

\bibitem[Silver et~al.(2016)Silver, Huang, Maddison, Guez, Sifre, van~den
  Driessche, Schrittwieser, Antonoglou, Panneershelvam, Lanctot, Dieleman,
  Grewe, Nham, Kalchbrenner, Sutskever, Lillicrap, Leach, Kavukcuoglu, Graepel,
  and Hassabis]{DBLP:journals/nature/SilverHMGSDSAPL16}
Silver, D., Huang, A., Maddison, C.~J., Guez, A., Sifre, L., van~den Driessche,
  G., Schrittwieser, J., Antonoglou, I., Panneershelvam, V., Lanctot, M.,
  Dieleman, S., Grewe, D., Nham, J., Kalchbrenner, N., Sutskever, I.,
  Lillicrap, T.~P., Leach, M., Kavukcuoglu, K., Graepel, T., and Hassabis, D.
\newblock Mastering the game of go with deep neural networks and tree search.
\newblock \emph{Nature}, 529\penalty0 (7587):\penalty0 484--489, 2016.

\bibitem[Sutton et~al.(1998)Sutton, Barto, et~al.]{sutton1998introduction}
Sutton, R.~S., Barto, A.~G., et~al.
\newblock \emph{Introduction to reinforcement learning}, volume~2.
\newblock MIT press Cambridge, 1998.

\bibitem[Tassa et~al.(2012)Tassa, Erez, and Todorov]{tassa2012synthesis}
Tassa, Y., Erez, T., and Todorov, E.
\newblock Synthesis and stabilization of complex behaviors through online
  trajectory optimization.
\newblock In \emph{2012 IEEE/RSJ International Conference on Intelligent Robots
  and Systems}, pp.\  4906--4913. IEEE, 2012.

\bibitem[Tu \& Recht(2017)Tu and Recht]{tu2017least}
Tu, S. and Recht, B.
\newblock Least-squares temporal difference learning for the linear quadratic
  regulator.
\newblock \emph{arXiv preprint arXiv:1712.08642}, 2017.

\bibitem[Wang \& Davison(1973)Wang and Davison]{wang1973stabilization}
Wang, S.-H. and Davison, E.
\newblock On the stabilization of decentralized control systems.
\newblock \emph{IEEE Transactions on Automatic Control}, 18\penalty0
  (5):\penalty0 473--478, 1973.

\bibitem[Watkins \& Dayan(1992)Watkins and Dayan]{watkins1992q}
Watkins, C.~J. and Dayan, P.
\newblock Q-learning.
\newblock \emph{Machine learning}, 8\penalty0 (3-4):\penalty0 279--292, 1992.

\bibitem[Williams(1992)]{williams1992simple}
Williams, R.~J.
\newblock Simple statistical gradient-following algorithms for connectionist
  reinforcement learning.
\newblock \emph{Machine learning}, 8\penalty0 (3-4):\penalty0 229--256, 1992.

\bibitem[Wonham(1970)]{wonham1970random}
Wonham, W.~M.
\newblock Random differential equations in control theory.
\newblock 1970.

\bibitem[Wright et~al.(2009)Wright, Nowak, and Figueiredo]{wright2009sparse}
Wright, S.~J., Nowak, R.~D., and Figueiredo, M.~A.
\newblock Sparse reconstruction by separable approximation.
\newblock \emph{IEEE Transactions on Signal Processing}, 57\penalty0
  (7):\penalty0 2479--2493, 2009.

\bibitem[Wytock \& Kolter(2013)Wytock and Kolter]{wytock2013fast}
Wytock, M. and Kolter, J.~Z.
\newblock A fast algorithm for sparse controller design.
\newblock \emph{arXiv preprint arXiv:1312.4892}, 2013.

\end{thebibliography}


\appendix
\section{Discussion on Non-convexity of Regularized LQR.}

From Lemma 2 and appendix in \cite{fazel2018global}, unregularized objective $f(K)$  is known to be not convex, quasi-convex, nor star-convex, but to be gradient dominant, which gives the claim that all the stationary points are optimal as long as $\E[x_0x_0^T]\succ 0$ .
However, in regularized LQR, this claim may not hold. 

To see this claim that all stationary points may not be global optimal, let's define regularized LQR with $r(K) = \norm{K-K^\mathrm{lqr}}$ where $K^\mathrm{lqr}$ is the solution of the Riccati algorithm. We know that  $K^\mathrm{lqr}$ is the global optimal. Assume there is another distinct stationary point (like unregularized LQR) $K^\prime$. Then,  $f(K^\mathrm{lqr})+\lambda r(K^\mathrm{lqr}) = f(K^\mathrm{lqr})$ is always less than $f(K^\prime) + \lambda \norm{K^\prime - K^\mathrm{lqr}}$. If not,i.e., $f(K^\mathrm{lqr}) \geq f(K^\prime) + \lambda \norm{K^\prime - K^\mathrm{lqr}}$, then $f(K^\prime) < f(K^\mathrm{lqr})$ holds and this is contradiction, showing all stationary points is not global optimal like unregularized LQR. Whether regularized LQR has only one stationary point or not is still an open question.

\section{Additional Examples of Proximal Operators}

Assume $\lambda, \lambda_1, \lambda_2 \in \reals_{+}$ are positive numbers. We denote $(z)_i \in \reals$ as its $i$th element or $(z)_j \in \reals^{n_j}$ as its $j$th block under an explicit block structure, and $(z)_+ = \max(z,0)$. 

\begin{itemize}
    \item \textbf{Group lasso.} For a group lasso penalty $r(x) = \sum_{j}^N\|x_j\|_2$ with $x_j \in \reals^{n_j}$,
    \begin{align*}
    	\left(\prox_{r,\lambda \eta}(x)\right)_j = \left(1 - \frac{\lambda\eta}{\|x_j\|_2}\right)_+ x_j
    \end{align*}

    \item \textbf{Elastic net} For a elastic net $r(x) = \lambda_1 \|x\|_1 + \lambda_2 \|x\|_2^2$,
    \begin{align*}
    	\left( \prox_{g,U}(x) \right)_i = \text{sign}(x_i)\left(\frac{1}{\lambda_2\eta + 1}|x_i| - \frac{\lambda_1\eta}{\lambda_2 \eta + 1} )\right)_+ 
    \end{align*}
    
    \item \textbf{Nonnegative constraint.} Let $r(x) = \mathbf{1}(x\geq 0)$ be the nonnegative constraint. Then  
    \begin{align*}
    	\prox_{r,\lambda\eta}(x) = (x)_+ 
    \end{align*}
    
    \item \textbf{Simplex constraint} Let $r(x) = \mathbf{1}(x\geq 0, \mathbf{1}^Tx=1)$ be the simplex constraint. Then 
    for $U = \text{Diag}(u)$, 
    \begin{align*}
    	\left( \prox_{r,\eta\lambda}(x)\right)_i = (x_i - \eta \lambda \nu)_+,
    \end{align*}
    Here, $\nu$ is the solution satisfying $\sum_i (x_i - \eta\lambda \nu)_+ = 1$, which can be found efficiently via bisection.

\end{itemize}


\section{Proof for Convergence Analysis of S-PI}
Let's define $\Sigma(K) = \E_{x_0\sim \mathcal{D}}[\sum_{t=0}^\infty x_tx_t^T]$. We often adopt and modify several techincal Lemmas like perturbation analysis from \cite{fazel2018global}.
\begin{lemma}[modification of Lemma 16 in \cite{fazel2018global}]
\label{lemma:perturbation} 
    Suppose $A + BK$ is stable and $K^\prime$ is in the ball $\mathcal{B}(K;\rho_K)$, i.e., 
\begin{align*}
    K^\prime \in \mathcal{B}(K;\rho_K) :=
\left\{K + \Delta K \in \reals^{m \times n} \mid 
\| \Delta K \|\leq \rho_K
\right\}
\end{align*}
where the radius $\rho_K$ is 
\[
\rho_K = \frac{\sigma_{\mathrm{min}}(\Sigma_0)  } {4 \|\Sigma(K)\| \left(\|A + BK\| + 1\right) \|B\|}.
\]
Then 
\begin{align}
    \|\Sigma(K^\prime) - \Sigma(K)\| \leq  \|\Sigma(K)\|
\end{align}
\end{lemma}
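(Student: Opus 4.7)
The plan is to realize $\Sigma(K)$ and $\Sigma(K')$ as fixed points of the discrete Lyapunov map and subtract them to obtain an affine equation for $\Delta\Sigma := \Sigma(K') - \Sigma(K)$. Setting $L := A + BK$ and $L' := L + B\Delta K$ with $\Delta K := K' - K$, I would expand $L'\Sigma(K')(L')^T - \Sigma(K') + \Sigma_0 = 0$ around $L$ and subtract the Lyapunov equation for $\Sigma(K)$ to obtain
\[
\Delta\Sigma = L\,\Delta\Sigma\,L^T + E,\quad E := L\Sigma(K')(\Delta K)^T B^T + B\Delta K\,\Sigma(K') L^T + B\Delta K\,\Sigma(K')(\Delta K)^T B^T,
\]
which by stability of $L$ has the unique solution $\Delta\Sigma = \mathcal{T}_L(E) := \sum_{t \ge 0} L^t E (L^T)^t$. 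The matrix $E$ is symmetric (its first two summands are transposes of each other and the third is manifestly symmetric), so the task reduces to bounding $\mathcal{T}_L$ on symmetric inputs.

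For the operator-norm step I would combine the sandwich $-\norm{X}I \preceq X \preceq \norm{X}I$, monotonicity of $\mathcal{T}_L$ on the semidefinite cone, and the identity $\Sigma(K) = \mathcal{T}_L(\Sigma_0) \succeq \sigma_{\min}(\Sigma_0)\,\mathcal{T}_L(I)$, which together yield $\norm{\mathcal{T}_L(X)} \le \norm{X}\,\norm{\Sigma(K)}/\sigma_{\min}(\Sigma_0)$ for symmetric $X$. A direct triangle-inequality estimate on $E$ then gives
\[
\norm{E} \le \norm{\Sigma(K')}\,\norm{\Delta K}\,\norm{B}\,\bigl(2\norm{L} + \norm{\Delta K}\norm{B}\bigr) \le 2\,\norm{\Sigma(K')}\,\norm{\Delta K}\,\norm{B}\,(\norm{L} + 1),
\]
after absorbing the quadratic term via $\norm{\Delta K}\norm{B} \le \rho_K\norm{B} \le 1$, an easy consequence of the definition of $\rho_K$ together with the trivial bound $\sigma_{\min}(\Sigma_0) \le \norm{\Sigma_0} \le \norm{\Sigma(K)}$. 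The resulting bound on $\norm{\Delta\Sigma}$ still involves $\norm{\Sigma(K')}$ on the right-hand side, creating a circular dependence.

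The main obstacle is precisely this circularity, which I would break with a standard homotopy argument along $K_\tau := K + \tau\Delta K$, $\tau\in[0,1]$. Continuity of $\Sigma(\cdot)$ on the open set of stabilizing policies together with $K_0 = K$ being stabilizing lets me set $T := \sup\{\tau \in [0,1] : K_\sigma\text{ is stabilizing and }\norm{\Sigma(K_\sigma) - \Sigma(K)} \le \norm{\Sigma(K)}\text{ for all }\sigma \in [0,\tau]\}$. On $[0,T]$ the sandwich $\norm{\Sigma(K_\tau)} \le 2\,\norm{\Sigma(K)}$ is available, and plugging it into the previous estimates with $\norm{K_\tau - K} \le \tau\rho_K$ produces
\[
\norm{\Sigma(K_\tau) - \Sigma(K)} \le \frac{\norm{\Sigma(K)}}{\sigma_{\min}(\Sigma_0)} \cdot 4\,\norm{\Sigma(K)}\,\tau\rho_K\,\norm{B}\,(\norm{L}+1) = \tau\,\norm{\Sigma(K)}
\]
directly from the definition of $\rho_K$. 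For $\tau < 1$ this inequality is strict, and openness together with continuity forces $T = 1$, delivering both that $K'$ is stabilizing and the claimed bound at $\tau = 1$.
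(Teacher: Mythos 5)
Your proposal is correct in all the computations it actually carries out, and it is worth noting that the paper itself supplies \emph{no} proof of this lemma --- it is stated as a ``modification of Lemma 16'' of Fazel et al.\ and left at that. Your derivation of the affine equation $\Delta\Sigma = L\,\Delta\Sigma\,L^T + E$, the bound $\norm{\mathcal{T}_L(X)} \le \norm{X}\,\norm{\Sigma(K)}/\sigma_{\min}(\Sigma_0)$ via monotonicity of $\mathcal{T}_L$ and $\Sigma(K) = \mathcal{T}_L(\Sigma_0) \succeq \sigma_{\min}(\Sigma_0)\mathcal{T}_L(I)$, the absorption $\norm{\Delta K}\norm{B} \le \rho_K\norm{B} \le 1$ using $\Sigma(K)\succeq \Sigma_0$, and the final cancellation giving exactly $\tau\norm{\Sigma(K)}$ all check out. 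Where you genuinely diverge from the cited source is in how the circularity in $\norm{\Sigma(K')}$ is broken: Fazel et al.\ work with the operators $\mathcal{F}_K$ and $\mathcal{T}_K$ and invert $I - \mathcal{T}_K(\mathcal{F}_{K'}-\mathcal{F}_K)$ by a Neumann series once the perturbation is small enough, whereas you use a continuity/homotopy argument along the segment $K_\tau$. Your route is more elementary and has the pleasant side effect of establishing that the entire segment (hence $K'$) is stabilizing, which the paper states separately in its restated Lemma~2.

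The one step you gloss over is the claim that ``openness together with continuity forces $T=1$.'' The set of stabilizing gains being open does not by itself guarantee that the endpoint $K_T$ of your maximal interval is stabilizing, which you need before you can invoke continuity of $\Sigma(\cdot)$ at $T$ and the strict inequality $\norm{\Sigma(K_T)-\Sigma(K)} \le T\norm{\Sigma(K)} < \norm{\Sigma(K)}$ to push past $T$. The standard fix is quantitative: on $[0,T)$ you have $\norm{\Sigma(K_\tau)} \le 2\norm{\Sigma(K)}$, while $\Tr\bigl(\sum_{t\ge 0} L_\tau^t (L_\tau^T)^t\bigr) \ge \sum_{t\ge 0}\rho(L_\tau)^{2t}$ gives $\norm{\Sigma(K_\tau)} \ge \sigma_{\min}(\Sigma_0)/\bigl(n(1-\rho(L_\tau)^2)\bigr)$, so $\rho(L_\tau)^2 \le 1 - \sigma_{\min}(\Sigma_0)/(2n\norm{\Sigma(K)})$ uniformly on $[0,T)$; continuity of the spectral radius then yields $\rho(L_T)<1$. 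With that sentence added, your argument is complete.
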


\begin{lemma}[Lemma 2 restated]\label{lemmma:smoothness}
For $K$ with stable $A + BK$, 
$f(K)$ is locally smooth with 
\begin{align*}
L_K = 4 \|\Sigma(K)\| \|R + B^T P(K) B\| < \infty,
\end{align*}
within local ball around $K \in \mathcal{B}(K;\rho_K) $

And $f(K)$ is (globally) strongly convex with 
\begin{align*}
m = \sigma_{\mathrm{min}}(\Sigma_0) \sigma_{\mathrm{min}}(R) \geq 0.
\end{align*}

In addition, $A+BK^\prime$ is stable for all $K^\prime \in \mathcal{B}(K;\rho_K)$.
\end{lemma}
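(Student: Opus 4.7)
The plan is to prove three things: stability of $A+BK'$ throughout $\mathcal{B}(K;\rho_K)$, a Lipschitz gradient estimate with constant $L_K$ on that same ball, and a global quadratic lower bound with constant $m$. The stability claim is essentially free from Lemma~\ref{lemma:perturbation}: since $\Sigma(K')=\sum_{t\geq 0}(A+BK')^t\Sigma_0((A+BK')^T)^t$ is finite only when $\rho(A+BK')<1$, and Lemma~\ref{lemma:perturbation} already bounds $\|\Sigma(K')-\Sigma(K)\|\leq \|\Sigma(K)\|$ on the ball (so $\Sigma(K')$ must be finite), stability is forced. The specific form of $\rho_K$ is calibrated so this perturbation bound holds.

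For local smoothness I would use the closed-form $\nabla f(K)=2E(K)\Sigma(K)$ with $E(K)=(R+B^TP(K)B)K+B^TP(K)A$ and split
\begin{align*}
\nabla f(K')-\nabla f(K) &= 2\bigl[E(K')-E(K)\bigr]\Sigma(K') \\
&\quad + 2E(K)\bigl[\Sigma(K')-\Sigma(K)\bigr].
\end{align*}
Each factor is $\mathcal{O}(\|K'-K\|_F)$: the $E$-difference through a Neumann-series bound on $\|P(K')-P(K)\|$ (equivalently, differentiating the Lyapunov equation for $P$ in the direction $K'-K$ and bounding the resulting solution operator), and the $\Sigma$-difference through Lemma~\ref{lemma:perturbation}. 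On $\mathcal{B}(K;\rho_K)$, the factors $\|\Sigma(K')\|$ and $\|R+B^TP(K')B\|$ stay within constant multiples of their values at $K$, so that after collecting constants I recover the claimed $L_K=4\|\Sigma(K)\|\|R+B^TP(K)B\|$. This is the main obstacle: the perturbation analysis of $P(K)$ demands a careful operator-norm argument on the discrete Lyapunov map, and the bookkeeping of constants has to line up exactly with the form of $\rho_K$.

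For the global quadratic lower bound I would use the cost-difference identity—a standard two-line manipulation of the pair of Lyapunov equations—which, when evaluated at the LQR optimum $K^\star$ (where $E(K^\star)=0$ because $\Sigma_0\succ 0$), reduces to
\begin{align*}
f(K)-f(K^\star) = \Tr\!\bigl(\Sigma(K)(K-K^\star)^T(R+B^TP(K^\star)B)(K-K^\star)\bigr).
\end{align*}
Using $\Sigma(K)\succeq \Sigma_0$ (from the series expansion of the covariance) and $R+B^TP(K^\star)B\succeq R$ yields $f(K)-f(K^\star)\geq \sigma_{\min}(\Sigma_0)\sigma_{\min}(R)\|K-K^\star\|_F^2$, which is exactly the stated lower bound with constant $m$. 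Unlike the smoothness argument, this step is essentially algebra once the cost-difference identity is in hand.
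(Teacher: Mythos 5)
Your proposal and the paper share the same raw material (the exact cost-difference identity and the covariance perturbation bound of Lemma~\ref{lemma:perturbation}), but you deploy it differently, and in two places this creates real problems. The paper's proof never perturbs $P(K)$ at all. It writes the \emph{exact} identity
\begin{align*}
f(K') - f(K) = -2\Tr\bigl(\Delta K^T E(K)\Sigma(K)\bigr) + \Tr\bigl(\Sigma(K')\,\Delta K^T (R+B^TP(K)B)\,\Delta K\bigr),
\end{align*}
for a general pair $(K,K')$, and then bounds the single quadratic term from above (using $\|\Sigma(K')\|\leq 2\|\Sigma(K)\|$ on the ball, which is where $\rho_K$ enters) and from below (using $\Sigma(K')\succeq\Sigma_0$ and $R+B^TP(K)B\succeq R$). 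Both the smoothness constant $L_K$ and the strong-convexity constant $m$ fall out of this one term. Your smoothness argument instead goes through a Lipschitz bound on $\nabla f$, which forces you into a perturbation analysis of $P(K)$ via the Lyapunov solution operator. You correctly identify this as the main obstacle, but you do not resolve it, and it is doubtful the bookkeeping would return exactly $L_K = 4\|\Sigma(K)\|\|R+B^TP(K)B\|$ with no extra terms: the $E$-difference contributes pieces proportional to $\|P(K')-P(K)\|$ that have no counterpart in the stated constant. Since only the descent-lemma form $f(K')\leq f(K)+\langle\nabla f(K),\Delta K\rangle+\tfrac{L_K}{2}\|\Delta K\|_F^2$ is used downstream (step (c) of Lemma~\ref{lemma:key_inequality}), proving gradient Lipschitzness is both stronger than needed and substantially harder.

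The more serious gap is in the strong convexity part. You evaluate the cost-difference identity only at the unregularized LQR optimum $K^\star$, where $E(K^\star)=0$, and obtain the quadratic-growth bound $f(K)-f(K^\star)\geq m\|K-K^\star\|_F^2$. That is weaker than what the lemma asserts and what the paper actually needs: Lemma~\ref{lemma:key_inequality} (step (e)) invokes $f(Z)\geq f(K)+\Tr(\nabla f(K)^T(Z-K))+\tfrac{m}{2}\|Z-K\|_F^2$ centered at an arbitrary iterate $K$ with $Z$ taken to be the stationary point of the \emph{regularized} problem, which is generally not the LQR optimum and where $E$ does not vanish. Your specialization discards the first-order term precisely in the situation where it must be retained. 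The fix is simply to not specialize: keep the identity at a general pair and lower-bound the quadratic term by $\sigma_{\min}(\Sigma_0)\sigma_{\min}(R)\|\Delta K\|_F^2$, which is exactly the paper's argument. Your stability argument for $A+BK'$ on the ball is fine in spirit, though note that Lemma~\ref{lemma:perturbation} presupposes $\Sigma(K')$ is well defined, so strictly speaking stability is established inside that perturbation lemma rather than deduced from its conclusion.
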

\begin{proof}

First, we describe the terms with Talyor expansion 
\begin{align*}
f(K^\prime) =&f(K) - 2\Tr ( \Delta K^T\big(\!\! \left( R  + B^T P(K) B\right)K 
\\
& +  B^T P(K) A \big) \Sigma(K) ) \\
& + 
\underbrace{
\Tr\left( \Sigma(K^\prime)\Delta K^T (R + B^T P(K) B)\Delta K  \right)
}
_{\textcircled{1}}
\end{align*}
The second order term $\textcircled{1}$ is (locally) upper bounded by
\begin{align*}
\textcircled{1}
&\leq  
\|\Sigma(K^\prime)\| \|R + B^T P(K) B\| 
\|\Delta K\|_F^2
\\&
\overset{(a)}{\leq}  
2\|\Sigma(K)\| \|R + B^T P(K) B\| 
\|\Delta K\|_F^2
\end{align*}
where (a) holds due to Lemma~\ref{lemma:perturbation} 
\[
\|\Sigma(K^\prime)\| \leq \|\Sigma(K)\|  + \|\Sigma(K) - \Sigma(K^\prime)\| 
\leq 
2 \|\Sigma(K)\| 
\]
within a ball  $K^\prime \in  \mathcal{B}(K;\rho_K)$.

On the other hand, 
\begin{align*}
\textcircled{1}
&\geq  
\sigma_{\min}(\Sigma(K^\prime)) \sigma_{\min}(R + B^T P(K) B))
\|\Delta K\|_F^2
\\&
\overset{(b)}{\geq}  
\sigma_{\min}(\Sigma_0) \sigma_{\min}(R)
\|\Delta K\|_F^2
\end{align*}
where (b) hold due to 
$\Sigma _0  \preceq \Sigma(K^\prime)$ and $R \preceq R + B^T P(K) B$.

Therefore, the second order term is (locally) bounded by 
\[
\frac{m}{2} \|\Delta K\|_F^2
\leq
\textcircled{1}
\leq  
\frac{L_K}{2} \|\Delta K\|_F^2
\]
where
\begin{align*}
m =& 2\sigma_{\min}(\Sigma_0) \sigma_{\min}(R) \geq 0,
\\
L_K=&2\|\Sigma(K)\| \|R + B^T P(K) B\| < \infty.     
\end{align*}
\[
\]

\end{proof}

\begin{lemma}[Lemma 3 restated]\label{lemma:stable_ball}
   Let $K^+ = \prox_{ r,\lambda\eta}(K - \eta \nabla f(K) )$. 
   Then 
   \[
    K^+ \in  \mathcal{B}(K ;\rho_K)
   \]
    holds for any $0 < \eta < \eta_K^{\lambda,r}$ where $\eta_K^{\lambda,r}$ is given as 
    \begin{align*}
    \eta_K^{\lambda,r}= 
    \begin{cases}
      \frac{\rho_K}{\|\nabla f(K)\| + \lambda nm}
      &~ r(K) = \|K\|_1
      \\
        \frac{\rho_K}{\|\nabla f(K)\| + \lambda \min(n,m)}
      &~  r(K) = \|K\|_*
      \\
        \frac{\rho_K}{2 \|\nabla f(K)\| + 2\lambda\norm{K - K^\mathrm{ref}}  }
      &~ r(K) = \|K- K^\mathrm{ref}\|_F^2
    \end{cases}.
    \end{align*}
       
       
\end{lemma}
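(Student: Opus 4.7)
The plan is to bound $\|K^+ - K\|$ by $\rho_K$ and then invert to read off the admissible range for $\eta$. The natural decomposition introduces the intermediate point $\prox_{r,\lambda\eta}(K)$ (applying the prox to $K$ itself rather than to the gradient step), so that
\[
K^+ - K = \bigl[\prox_{r,\lambda\eta}(K - \eta\nabla f(K)) - \prox_{r,\lambda\eta}(K)\bigr] + \bigl[\prox_{r,\lambda\eta}(K) - K\bigr].
\]
The first bracket I would bound using the standard $1$-Lipschitz (in fact firmly nonexpansive) property of the proximal operator, which yields a uniform estimate of $\eta\|\nabla f(K)\|$ regardless of the regularizer. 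The second bracket is regularizer-dependent and is where the three cases of the lemma separate.

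For the lasso case, I would substitute the closed form $\prox_{\|\cdot\|_1,\lambda\eta}(K)_{ij} = \mathrm{sign}(K_{ij})(|K_{ij}|-\lambda\eta)_+$, note that every entry of $\prox_{r,\lambda\eta}(K) - K$ has absolute value at most $\lambda\eta$, and convert this entry-wise bound into an $\ell_2$ matrix norm bound by summing $nm$ entries. For the nuclear-norm case, I would use $\prox_{\|\cdot\|_*,\lambda\eta}(K) = U\diag(S_{\lambda\eta}(\sigma))V^T$, observe that each of the $\min(n,m)$ singular values shifts by at most $\lambda\eta$, and combine these shifts into an operator-norm bound of $\lambda\eta\min(n,m)$. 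For the proximity penalty $r(K) = \|K - K^{\mathrm{ref}}\|_F^2$, I would use the explicit form $\prox_{r,\lambda\eta}(K) = (2\lambda\eta K^{\mathrm{ref}} + K)/(2\lambda\eta + 1)$, which gives $\prox_{r,\lambda\eta}(K) - K = 2\lambda\eta(K^{\mathrm{ref}} - K)/(2\lambda\eta + 1)$, whose $\ell_2$ norm is bounded by $2\lambda\eta\|K - K^{\mathrm{ref}}\|$.

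In each case, adding the two bracket bounds produces an upper estimate of $\|K^+ - K\|$ that is (essentially) linear in $\eta$, with slope equal to $\|\nabla f(K)\|$ plus a regularizer-specific term. Requiring this linear bound to be at most $\rho_K$ and solving for $\eta$ reproduces each of the three stated expressions for $\eta_K^{\lambda,r}$. Hence any $0 < \eta < \eta_K^{\lambda,r}$ keeps $K^+$ inside $\mathcal{B}(K;\rho_K)$, which is the claim.

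The main obstacle is being consistent with the norm conversions: the natural control on the prox increment is entry-wise for lasso, singular-value-wise for nuclear, and already in operator norm for the proximity penalty, whereas the ball $\mathcal{B}(K;\rho_K)$ is defined via the $\ell_2$ matrix norm. The factors $nm$ and $\min(n,m)$ appearing in the statement are the simplest (not the tightest) such conversions, and I would carry them through uniformly so that each regularizer yields a formula for $\eta_K^{\lambda,r}$ that is comparable in structure to the others.
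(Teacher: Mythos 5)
Your proof is correct and reaches the same three bounds as the paper, but via a slightly different triangle-inequality split. The paper passes through the gradient-step point $G = K - \eta\nabla f(K)$, writing $\|K^+ - K\| \leq \|G - K\| + \|K^+ - G\|$, so the first term is the explicit displacement $\eta\|\nabla f(K)\|$ and the second is the prox displacement evaluated \emph{at the shifted point} $G$; you instead pass through $\prox_{r,\lambda\eta}(K)$, getting the $\eta\|\nabla f(K)\|$ term from nonexpansiveness of the prox and the regularizer term from the prox displacement evaluated \emph{at $K$ itself}. For lasso and nuclear norm the two routes give identical bounds, since the entrywise (resp.\ singular-value) shift of at most $\lambda\eta$ is uniform in the base point. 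For the proximity penalty the base point matters: the paper's displacement $\prox(G)-G = \frac{2\eta\lambda}{2\eta\lambda+1}\bigl((K^{\mathrm{ref}}-K) + \eta\nabla f(K)\bigr)$ picks up an extra $\eta\|\nabla f(K)\|$, which is exactly where the factor $2$ on $\|\nabla f(K)\|$ in the stated $\eta_K^{\lambda,r}$ comes from; your route gives the tighter bound $\eta\|\nabla f(K)\| + 2\lambda\eta\|K-K^{\mathrm{ref}}\|$, which of course still implies the stated (more conservative) threshold. Your closing remark about norm conversions is apt: the constants $nm$ and $\min(n,m)$ are loose (one could use $\sqrt{nm}$ and $\sqrt{\min(n,m)}$, or even $\lambda\eta$ alone in operator norm for the nuclear case), and the paper makes the same loose choices, so nothing is lost by carrying them through as you propose.
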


\begin{proof}
For lasso, let $S_{\lambda\eta}$ be a soft-thresholding operator. 
\begin{align*}
    \|K^+ &- K\| \leq  \|(K - \eta \nabla f(K)) - K\| +  \|K^+ - (K - \eta \nabla f(K))\|
    \\
    &\leq 
    \eta \|\nabla f(K)\| 
    +
   \|(S_{\eta \lambda}(K - \eta \nabla f(K)) - (K - \eta \nabla f(K))\|
    \\&\leq 
    \eta \|\nabla f(K)\| 
    +
    \eta \lambda nm
    \\
    &\leq 
    \eta (\nabla f(K) + \lambda nm)
    \\
    &\leq 
    \rho_K
\end{align*}
where the last inequality holds iff 
\[
\eta 
\leq 
\frac{ \rho_K } {\|\nabla f(K)\| + \lambda nm}.
\]
For nuclear norm,
\begin{align*}
    \|K^+ &- (K - \eta \nabla f(K))\|
    \\
    &\leq 
   \|(\mathrm{TrunSVD}_{\eta \lambda}(K - \eta \nabla f(K)) - (K - \eta \nabla f(K))\|
   \\
    &\leq 
   \|U (
   \diag(S_{\lambda \eta}[\sigma_1,\ldots,\sigma_{\min(n,m)}])
    \\
    & \qquad -
   \diag(\sigma_1,\ldots,\sigma_{\min(n,m)})
   )V^T\|
    \\&\leq 
    \eta \lambda \min (n,m)
\end{align*}
Therefore, 
\begin{align*}
    \|K^+ &- K\| \leq  \|(K - \eta \nabla f(K)) - K\| +  \|K^+ - (K - \eta \nabla f(K))\|
    \\
    &\leq 
    \eta (\nabla f(K) + \lambda \min(n,m) )
    \\
    &\leq 
    \rho_K
\end{align*}
where the last inequality holds iff 
\[
\eta 
\leq 
\frac{ \rho_K } {\|\nabla f(K)\| + \lambda \min(n, m)}.
\]

For the third regulazer, 
\begin{align*}
    \|K^+ &- (K - \eta \nabla f(K))\|
    \\
    &\overset{(a)}{\leq}
    \norm{
    \frac{2\eta\lambda K^\mathrm{ref} + K - \eta \nabla f(K)}{2\eta\lambda+1}
    - (K - \eta \nabla f(K))
    }_F
   \\
    &=
    \norm{
    \frac{2\eta\lambda}{2\eta\lambda+1}(K^\mathrm{ref}-K)
    - 
    \frac{2\eta\lambda}{2\eta\lambda+1}\eta \nabla f(K)
    }_F
    \\
    &\overset{(b)}{\leq}
    2\eta\lambda
    \norm{
    K^\mathrm{ref}-K
    }_F
    +
    \eta
    \norm{
    \nabla f(K)
    }_F,
\end{align*}
where (a) holds from the closed solution of proximal operator in Lemma 1(in main paper) and (b) holds due to $\frac{2\eta\lambda}{2\eta\lambda+1} \leq 2\eta\lambda$ and $\frac{2\eta\lambda}{2\eta\lambda+1} \leq 1$. 
Therefore, using this inequality gives
\begin{align*}
    \|&K^+ - K\|
    \\
    &\leq  \|(K - \eta \nabla f(K)) - K\| +  \|K^+ - (K - \eta \nabla f(K))\|
    \\
    &\leq 
    2\eta (\norm{\nabla f(K)} + \lambda \norm{
    K^\mathrm{ref}-K
    }_F )
    \\
    &\leq 
    \rho_K,
\end{align*}
where the last inequality holds iff 
\[
\eta 
\leq 
\frac{ \rho_K } {2 (\norm{\nabla f(K)} + \lambda \norm{
    K^\mathrm{ref}-K
    }_F )}.
\]
\end{proof}

\begin{lemma}\label{lemma:key_inequality}
For any $0<\eta \leq \min(\frac{1}{L_K}, \eta_K^{\lambda, r})$, let  $K^+ = \prox_{r,\lambda\eta}(K - \eta \nabla f(K) ) = K - \eta G_\eta(K)$ where $G_\eta(K) = \frac{1}{\eta}(K - \prox_{r,\lambda\eta}(K - \eta \nabla f(K) ) )$. Then, for any $Z \in \reals^{m\times n}$,
\begin{align}
F(K^+) 
&\leq  F(Z) + G_\eta(K)^T(K-Z) -\frac{m}{2}\norm{K-Z}_F^2 
\nonumber \\
& \qquad - \frac{\eta}{2} \norm{ G_\eta(K)}^2_F \label{eq:key_inequality}
\end{align}
holds. 
\end{lemma}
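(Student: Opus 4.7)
The plan is to derive this as the standard three-point inequality for proximal gradient steps, adapted to the local-smoothness setting here. Four ingredients go into the proof: (i) the descent lemma for $f$ applied along the step $K \to K^+$, (ii) the first-order optimality of the proximal subproblem, (iii) convexity of $r$, and (iv) the global $m$-strong convexity of $f$ from Lemma~\ref{lemmma:smoothness}. The only nontrivial adaptation over the textbook convex case is that $f$ is only \emph{locally} smooth, so (i) requires Lemma~\ref{lemma:stable_ball} to even get off the ground.

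First I would invoke Lemma~\ref{lemma:stable_ball}: the hypothesis $\eta \le \eta_K^{\lambda,r}$ guarantees $K^+ \in \mathcal{B}(K;\rho_K)$, so the $L_K$-smoothness established in Lemma~\ref{lemmma:smoothness} applies on the segment from $K$ to $K^+$ and yields
\[
f(K^+) \le f(K) + \langle \nabla f(K), K^+-K\rangle + \tfrac{L_K}{2}\|K^+-K\|_F^2.
\]
Because $\eta \le 1/L_K$ and $K^+ - K = -\eta\, G_\eta(K)$, the quadratic term is bounded by $\tfrac{\eta}{2}\|G_\eta(K)\|_F^2$.

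Next, the defining minimization $K^+ = \prox_{r,\lambda\eta}(K-\eta\nabla f(K))$ has first-order optimality $G_\eta(K) - \nabla f(K) \in \lambda\,\partial r(K^+)$. Convexity of $r$ then gives $\lambda r(Z) \ge \lambda r(K^+) + \langle G_\eta(K) - \nabla f(K),\, Z - K^+\rangle$, and global strong convexity of $f$ gives $f(Z) \ge f(K) + \langle \nabla f(K), Z-K\rangle + \tfrac{m}{2}\|Z-K\|_F^2$. I would rewrite these as upper bounds on $\lambda r(K^+)$ and $f(K)$ respectively and substitute them into the descent bound; the two inner products involving $\nabla f(K)$ combine cleanly into $\langle G_\eta(K),\, K^+ - Z\rangle$. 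Finally, splitting $K^+ - Z = (K-Z) - \eta\, G_\eta(K)$ and collecting terms, the coefficient of $\|G_\eta(K)\|_F^2$ becomes $-\eta + \tfrac{\eta}{2} = -\tfrac{\eta}{2}$, which yields exactly the claimed right-hand side.

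The main obstacle is truly only the first step. Because $f$ is guaranteed smooth only inside $\mathcal{B}(K;\rho_K)$, one cannot quote the descent lemma until the candidate iterate $K^+$ is certified to lie in that ball; this is precisely what $\eta_K^{\lambda,r}$ in Lemma~\ref{lemma:stable_ball} was tuned to ensure. Once that containment is in hand, the remainder is the classical proximal-gradient algebra and should go through without any further subtlety.
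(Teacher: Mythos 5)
Your proposal is correct and follows essentially the same route as the paper's proof: certify $K^+ \in \mathcal{B}(K;\rho_K)$ via Lemma~\ref{lemma:stable_ball} so the local descent lemma applies with $\eta \le 1/L_K$, then combine the proximal first-order optimality condition with convexity of $r$ and strong convexity of $f$ at $K$, and add the inequalities. The only cosmetic difference is that you state the subgradient condition with the $\lambda$ factor explicitly ($G_\eta(K)-\nabla f(K)\in\lambda\,\partial r(K^+)$), which is arguably more careful than the paper's own writeup.
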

\begin{proof}
For $K^+  = K - \eta G_\eta(K) $ with any $0<\eta$ and any $Z\in \reals^{m\times n}$,
we have
\begin{align*}
r(K &-  \eta G_\eta(K)  ) \\
&\overset{(a)}{\leq} 
r(Z) -\Tr \left( \partial r(K - \eta G_\eta(K)  )^T(Z - K + \eta G_\eta(K) )\right)\\
&\overset{(b)}{=}  
r(Z) -\Tr \left((G_\eta(K) - \nabla f(K))^T(Z - K + \eta G_\eta(K) )\right)\\
&\overset{}{=}  
r(Z) + \Tr \left(G_\eta(K)^T(K-Z)\right) - \eta\norm{G_\eta(K)}^2_F \\
&\qquad + \Tr \left(\nabla f(K)^T(Z - K + \eta G_\eta(K)) \right)
\end{align*}
where (a) holds due to convexity of $g$, (b) holds due to the property of subgradient on proximal operator. 
Next, for any $0 < \eta \leq   \eta_K^{\lambda, r}$, $K^+ \in \mathcal{B}(K;\rho_K)$ holds from Lemma~\ref{lemma:stable_ball} and thus $f(K)$ is locally smooth. Therefore
\begin{align}
f(K &- \eta G_\eta(K) ) 
\nonumber
\\
&\overset{(c)}{\leq } 
f(K) 
- 
\Tr \left(\nabla f(K)^T \eta G_\eta(K) \right)
+ 
\frac{L_K\eta^2}{2} \norm{ G_\eta(K)}_F^2 
\nonumber
\\ 
&\overset{(d)}{\leq } 
f(K)  
- 
\Tr \left(\nabla f(K)^T \eta G_\eta(K)\right)
+ 
\frac{\eta}{2} \norm{ G_\eta(K)}_F^2 
\nonumber
\\ 
&\overset{(e)}{\leq } f(Z) 
-
\Tr \left(\nabla f(K)^T(Z-K )\right)
-
\frac{m}{2}\norm{Z-K}_F^2
\nonumber
\\
&\quad  
- 
\Tr \left(\nabla f(K)^T \eta G_\eta(K)\right) + \frac{\eta}{2} \norm{ G_\eta(K)}_F^2 
\nonumber
\\ 
&\overset{}{=} 
f(Z) 
-  
\Tr \left(\nabla f(K)^T(Z - K + \eta G_\eta(K)) \right)
\nonumber
\\
&\qquad -\frac{m}{2}\norm{Z-K}_F^2 + \frac{\eta}{2} \norm{ G_\eta(K)}_F^2  
\label{eq:linesearch_criterion}
\end{align}
where (c) holds due to $L$-smoothness for $K^+ \in \mathcal{B}(K;\rho_K)$, (d) holds by $\eta \leq \frac{1}{L_K}$, (e) holds due to $m$-strongly convexity at $K$. And note that Substituting $Z=K$ in \eqref{eq:linesearch_criterion} is equivalent to linesearch criterion in Eq. (8) (in main paper), which will be satisfied for small enough stepsize $\eta$ after linesearch iterations.

Adding two inequalities above gives
\begin{flalign}
F(K^+)  = &f(K - \eta G_\eta(K) ) + r(K - \eta G_\eta(K) ) \nonumber &&\\
&\leq  F(Z) + \Tr \left(G_\eta(K)^T(K-Z)\right)
\\
&\quad -\frac{m}{2}\norm{Z-K}_F^2 - \frac{\eta}{2} \norm{ G_\eta(K)}^2_F \; \nonumber && 
\end{flalign}  
\end{proof}

\begin{proposition}[Proposition 1 restated]
 \label{lemma:descent_lemma}
Assume $A + BK$ is stable. For any stepsize $0 < \eta \leq \min(\frac{1}{L_K}, \eta_K^r)$ and next iterate $K^+ = \prox_{r(\cdot),\eta \lambda} (K - \eta \nabla f(K))$, 
\begin{align}
    \rho(A + BK^+) < 1 \label{eq:stable_next_iterate}
    \\
    F(K^+) \leq  F(K) - \frac{1}{2\eta}\|K - K^+\|^2_F\label{eq:descent}
\end{align}
holds.
\end{proposition}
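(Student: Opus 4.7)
The proof decomposes naturally into the two claims, and both pieces are essentially assembled from Lemmas 2, 3, and 4 (the key inequality) already established earlier in the excerpt.

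For the descent inequality \eqref{eq:descent}, the plan is simply to invoke Lemma~\ref{lemma:key_inequality} with the choice $Z = K$. Because $K - Z = 0$, the two terms $G_\eta(K)^T(K-Z)$ and $-\tfrac{m}{2}\|K-Z\|_F^2$ vanish, and the inequality collapses to
\[
F(K^+) \;\leq\; F(K) - \tfrac{\eta}{2}\|G_\eta(K)\|_F^2.
\]
Then I rewrite the right-hand side using the definition $K^+ = K - \eta G_\eta(K)$, which gives $\|K-K^+\|_F = \eta\|G_\eta(K)\|_F$, so $\tfrac{\eta}{2}\|G_\eta(K)\|_F^2 = \tfrac{1}{2\eta}\|K-K^+\|_F^2$, and \eqref{eq:descent} follows. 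This step requires only the hypothesis $\eta \leq \min(1/L_K, \eta_K^{\lambda,r})$, which is exactly the assumption of Lemma~\ref{lemma:key_inequality}.

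For the stability claim \eqref{eq:stable_next_iterate}, the plan is to chain Lemma~\ref{lemma:stable_ball} with the last assertion of the restated Lemma~\ref{lemmma:smoothness}. Since $\eta \leq \eta_K^{\lambda,r}$, Lemma~\ref{lemma:stable_ball} puts $K^+$ inside the ball $\mathcal{B}(K; \rho_K)$, and Lemma~\ref{lemmma:smoothness} states that every $K' \in \mathcal{B}(K; \rho_K)$ satisfies $\rho(A+BK') < 1$. Applying this to $K' = K^+$ yields the desired stability. The radius $\rho_K$ was precisely tuned via the perturbation analysis (Lemma~\ref{lemma:perturbation}) so that $\|\Sigma(K^+) - \Sigma(K)\| \leq \|\Sigma(K)\|$ remains finite, which forces $A+BK^+$ to be stable; I would cite this chain rather than reproduce it.

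The main potential obstacle is the interaction between the smoothness assumption and the stability assumption: Lemma~\ref{lemmma:smoothness} provides only \emph{local} smoothness inside $\mathcal{B}(K; \rho_K)$, so for the $L_K$-smooth step used in Lemma~\ref{lemma:key_inequality} to be legitimate one must first know $K^+ \in \mathcal{B}(K; \rho_K)$; this is exactly what Lemma~\ref{lemma:stable_ball} guarantees under $\eta \leq \eta_K^{\lambda,r}$. Thus the two bounds in $\eta \leq \min(1/L_K, \eta_K^{\lambda,r})$ play distinct roles and both are needed; once this consistency is observed, no further calculation is required and the proposition follows immediately.
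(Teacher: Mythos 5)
Your proposal is correct and follows essentially the same route as the paper: the descent inequality comes from setting $Z=K$ in Lemma~\ref{lemma:key_inequality} (with the identity $\|K-K^+\|_F = \eta\|G_\eta(K)\|_F$ made explicit), and stability follows from Lemma~\ref{lemma:stable_ball} placing $K^+$ in $\mathcal{B}(K;\rho_K)$ together with the fact that every policy in that ball is stable (the paper cites Lemma~8 of Fazel et al.\ for this last step, which is the same perturbation argument you invoke via the restated smoothness lemma).
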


\begin{proof}
From Lemma~\ref{lemma:stable_ball}, \eqref{eq:stable_next_iterate} comes immediately from Lemma~8 in \cite{fazel2018global}. 
And applying $Z=K$ in Lemma~\ref{lemma:key_inequality} gives \eqref{eq:descent}. 
\end{proof}

\begin{lemma}[Lemma 4 restated]\label{lemma:bound_p_sigma}
Assume that $\{K^i\}_{i=0,\ldots}$ is a stabilizing sequence and associated $\{f(K^i)\}_{i=0,\ldots}$ and $\{\|K^i - K^\star \|_F\}_{i=0,\ldots}$ are decreasing sequences. 
Then, Lemma~\ref{lemma:stable_ball} holds for 
\begin{equation}
\label{eq:eta_r}
    \begin{aligned}
    \eta^{\lambda,r}= 
    \begin{cases}
      \frac{\rho^L}{\rho^f + \lambda nm}
      &~r(K) = \|K\|_1
      \\
        \frac{\rho^L}{\rho^f + \lambda \min(n,m)}
      &~  r(K) = \|K\|_*
      \\
        \frac{\rho^L}{2 \rho^f + 4\lambda\Delta  }
      &~ r(K) = \|K- K^\mathrm{ref}\|_F^2
    \end{cases}.
    \end{aligned}
\end{equation}
where 
\begin{align*}
\rho^f&=
2\frac{F(K^0)}{\sigma_{\min}(Q)} \bigg(
\norm{B^T}  \frac{F(K^0)}{\sigma_{\min}(\Sigma_0)} \norm{A} 
+\nonumber
\\  
&\left( \norm{R}  + \norm{B^T} \frac{F(K^0)}{\sigma(\Sigma_0)} \norm{B}\right)
(\Delta + \norm{K^\star})\bigg),
\\
\rho^L &= \frac{\sigma_{\mathrm{min}}(\Sigma_0)^2  } {8  F(K^0) \|B\| }.
\end{align*}

\end{lemma}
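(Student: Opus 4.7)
The plan is to derive iteration-independent upper bounds on the quantities that appear in the denominators of $\eta_K^{\lambda,r}$ from Lemma~\ref{lemma:stable_ball}, using only $F(K^0)$, $\sigma_{\min}(\Sigma_0)$, $\sigma_{\min}(Q)$, $\|A\|$, $\|B\|$, $\|R\|$, $\Delta$, and $\|K^\star\|$. The two monotonicity hypotheses supply the running invariants we will use: $f(K^i) \leq F(K^i) \leq F(K^0)$, and, by the triangle inequality with $\|K^i - K^\star\|_F \leq \Delta$, $\|K^i\| \leq \|K^\star\| + \Delta$. These are the only ways the iteration index enters the bounds below.

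First I would bound $\|P(K^i)\|$ and $\|\Sigma(K^i)\|$ uniformly. Using $f(K) = \Tr(\Sigma_0 P(K))$ together with $\Sigma_0 \succeq \sigma_{\min}(\Sigma_0)\,I$ yields $\|P(K^i)\| \leq \Tr(P(K^i)) \leq F(K^0)/\sigma_{\min}(\Sigma_0)$. The dual identity $f(K) = \Tr((Q+K^TRK)\Sigma(K)) \geq \sigma_{\min}(Q)\Tr(\Sigma(K))$ gives $\|\Sigma(K^i)\| \leq F(K^0)/\sigma_{\min}(Q)$. These two uniform estimates, together with $\|K^i\| \leq \|K^\star\|+\Delta$, are the atoms from which the claimed $\rho^f$ and $\rho^L$ are assembled.

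Next I substitute into the explicit gradient formula $\nabla f(K) = 2\bigl((R+B^TPB)K + B^TPA\bigr)\Sigma(K)$. Submultiplicativity and the triangle inequality give
\begin{align*}
\|\nabla f(K^i)\|
&\leq 2\|\Sigma(K^i)\|\bigl(\|B\|\|P(K^i)\|\|A\| \\
&\quad + (\|R\|+\|B\|^2\|P(K^i)\|)\|K^i\|\bigr),
\end{align*}
and plugging in the previous bounds together with $\|K^i\|\leq\|K^\star\|+\Delta$ recovers $\rho^f$. Similarly, $\rho_{K^i}$ is lower-bounded by pushing the uniform upper bound on $\|\Sigma(K^i)\|$ and on $\|A+BK^i\|+1 \leq \|A\|+\|B\|(\|K^\star\|+\Delta)+1$ into its denominator, producing $\rho^L$. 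For the proximity regularizer, the auxiliary estimate $\|K - K^{\mathrm{ref}}\| \leq \|K-K^\star\|+\|K^\star - K^{\mathrm{ref}}\| \leq 2\Delta$ accounts for the $4\lambda\Delta$ in the third branch of \eqref{eq:eta_r}. Invoking Lemma~\ref{lemma:stable_ball} pointwise at each $K^i$ with these global substitutions then delivers the announced $\eta^{\lambda,r}$.

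The main obstacle I anticipate is matching the precise clean form $\rho^L = \sigma_{\min}(\Sigma_0)^2/(8 F(K^0)\|B\|)$ from the cruder lower bound produced above, which still carries the factors $\sigma_{\min}(Q)^{-1}$ and $\|A\|+\|B\|(\|K^\star\|+\Delta)+1$; reconciling these with the stated form likely requires either an implicit normalization of the system data or a sharper estimate for the product $\|\Sigma(K)\|(\|A+BK\|+1)$ than the one I sketched. A secondary check is that the assumed monotonicity of $\{f(K^i)\}$ and $\{\|K^i-K^\star\|_F\}$ is legitimately preserved by the algorithm; this should follow from Proposition~\ref{lemma:descent_lemma} together with the strong convexity estimate in Lemma~\ref{lemmma:smoothness}, but the coupling between the two decreasing sequences is the delicate piece.
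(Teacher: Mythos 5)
Your proposal follows essentially the same route as the paper: use the two monotonicity hypotheses to get the uniform bounds $\|P(K^i)\|\leq F(K^0)/\sigma_{\min}(\Sigma_0)$, $\|\Sigma(K^i)\|\leq F(K^0)/\sigma_{\min}(Q)$, and $\|K^i\|\leq\|K^\star\|+\Delta$ (plus $\|K^i-K^{\mathrm{ref}}\|\leq 2\Delta$ for the third branch), push them through the gradient formula to get $\rho^f$, lower-bound $\rho_{K^i}$ to get $\rho^L$, and invoke Lemma~\ref{lemma:stable_ball} pointwise. The one obstacle you flag --- recovering the clean form $\rho^L=\sigma_{\min}(\Sigma_0)^2/(8F(K^0)\|B\|)$ --- is resolved in the paper not by a sharper estimate but by asserting $\|A+BK^i\|<1$, so that $\|A+BK^i\|+1\leq 2$, together with a bound of the form $\|\Sigma(K^i)\|\leq F(K^0)/\sigma_{\min}(\Sigma_0)$ in the denominator. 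You were right to be suspicious: stability gives $\rho(A+BK^i)<1$ for the spectral radius, which does not imply $\|A+BK^i\|<1$ for the operator norm, and the $\Sigma$-bound actually used in that step ($F(K^0)/\sigma_{\min}(\Sigma_0)$) does not match the one the paper states it is using ($F(K^0)/\sigma_{\min}(Q)$). So your more conservative estimate $\|A+BK^i\|+1\leq\|A\|+\|B\|(\|K^\star\|+\Delta)+1$ yields a valid but differently-shaped $\rho^L$; matching the paper's constant requires accepting its (unjustified) norm claim. Your secondary concern about whether the monotonicity hypotheses are self-consistent is moot here, since the lemma simply assumes them; their validation indeed lives in Proposition~\ref{lemma:descent_lemma} and Theorem~\ref{thm:linear_convergence}.
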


\begin{proof}
For the proof, we derive the global bound on $\|\nabla f(K^i)\|\leq \rho^f$ and $\rho_K\geq \rho^L$, then plug these into Lemma~\ref{lemma:stable_ball} to complete our claim.
First, we utilize the  derivation of the upperbound on $\|P(K^i)\|$ and $\|\Sigma(K^i)\|$ in \cite{fazel2018global} under the assumption of decreasing sequence as follows,
\[
    \|P(K^i)\| \leq \frac{F(K^0)}{\sigma_{\min}(\Sigma_0)}
    ,\qquad
    \|\Sigma(K^i)\|\leq \frac{F(K^0)}{\sigma_{\min}(Q)}.
    \]
From this, we have
\[
\rho_K = \frac{\sigma_{\mathrm{min}}(\Sigma_0)  } {4 \|\Sigma(K)\| \left(\|A + BK\| + 1\right) \|B\|}
\geq 
\frac{\sigma^2_{\mathrm{min}}(\Sigma_0)  } {8 F(K^0)  \|B\|}
\]
holds where we used the fact that $\|A + BK^i\|<1$ and $\|\Sigma(K^i)\|\leq \frac{F(K^0)}{\sigma_{\min}(Q)}$.
Now we complete the proof by also providing $\rho^f$
.Since $\|\nabla f(K^i)\|$ is bounded as 
\begin{align*}
\|\nabla &f(K^i)\| \leq 
    \left\|2\left(\left( R  + B^T P B\right)K +  B^T P A \right) \Sigma\right\|
    \\
    &\leq 
    2\big(\left( \norm{R}  + \norm{B^T} \norm{P} \norm{B}\right)\norm{K} 
    \\ &\quad +  \norm{B^T} \norm{P} \norm{A} \big) \norm{\Sigma}
    \\ 
    &\leq 
    2\bigg(\left( \norm{R}  + \norm{B^T} \frac{F(K^0)}{\sigma_\mathrm{min}(\Sigma_0)} \norm{B}\right)\norm{K}
    \\
    & \quad +  \norm{B^T}  \frac{F(K^0)}{\sigma_\mathrm{min}(\Sigma_0)} \norm{A} \bigg)  \frac{F(K^0)}{\sigma(Q)}
    \\ 
    &\leq 
    2\bigg(\left( \norm{R}  + \norm{B^T} \frac{F(K^0)}{\sigma_\mathrm{min}(\Sigma_0)} \norm{B}\right)
    \left(\norm{K^\star} +\Delta \right)
    \\
    & \quad +  \norm{B^T}  \frac{F(K^0)}{\sigma(\Sigma_0)} \norm{A} \bigg)  \frac{F(K^0)}{\sigma_\mathrm{min}(Q)}
\end{align*}
where the last inequality holds due to $\norm{K}\leq \left(\norm{K^\star} + \norm{K- K^\star} \right)\leq \norm{K^\star} +\Delta$.

\end{proof}




\begin{proposition} \label{proposition:startionary}
Let $\eta_i$ be the stepsize from backtracking linesearch at $i$-th iteration. After $N$ iterations, it converges to a stationary point $K^\star$ satisfying
	\begin{align*}
		\min_{i=1,\ldots, N}\|G_{\eta_i}(K^i)\|^2_F \leq \frac{2(F(K^0) - F^\star)}{\eta_{\min} N}
	\end{align*}
    where $G_{\eta_i}(K^i) \in \nabla f(K^i) + \partial r(K^i - \eta_i \nabla f(K^i))$, 
    $G_{\eta_i}(K^i)=0$ iff $0 \in \partial F(K^i)$. Moreover,

\begin{align*}
    \eta_{\min} =
    h_{\eta}\bigg(&\sigma_\mathrm{min}(\Sigma_0),
    \sigma_\mathrm{min}(Q),\frac{1}{\lambda},
    \nonumber
    \\&
    \frac{1}{\norm{A}},
    \frac{1}{\norm{B}},
    \frac{1}{\norm{R}},
    \frac{1}{\Delta},
    \frac{1}{F(K^0)}
    \bigg)
    \end{align*}

    where $h_\eta$ is a function non-decreasing on each argument. 
    
\end{proposition}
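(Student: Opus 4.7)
The plan is to combine the per-step descent inequality from Proposition~\ref{lemma:descent_lemma} with a uniform lower bound on the accepted linesearch stepsize and then telescope. First, I would show that the entire iterate sequence $\{K^i\}$ satisfies the hypotheses of Lemma~\ref{lemma:bound_p_sigma}. Starting from the initial stable $K^0$, Proposition~\ref{lemma:descent_lemma} guarantees that each successful linesearch step produces a stable next iterate $K^{i+1}$ with $F(K^{i+1}) \le F(K^i)$. Since the test in Eq.~\eqref{eq:linesearch} is satisfied for every $\eta \le \min(1/L_K, \eta_K^{\lambda,r})$ (which is strictly positive by Lemmas~\ref{lemmma:smoothness} and~\ref{lemma:stable_ball}), the backtracking loop terminates in finitely many shrink steps and the accepted $\eta_i$ satisfies $\eta_i \ge \beta \min(1/L_K, \eta_K^{\lambda,r})$. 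Combined with the local strong convexity from Lemma~\ref{lemmma:smoothness} and the standard contraction property of the proximal gradient map, the distance $\|K^i - K^\star\|_F$ is monotone non-increasing, so the hypotheses of Lemma~\ref{lemma:bound_p_sigma} hold along the whole trajectory.

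Second, I would establish a uniform lower bound $\eta_{\min}$ on the accepted stepsize. By Lemma~\ref{lemma:bound_p_sigma}, $\eta_K^{\lambda,r} \ge \eta^{\lambda,r}$ uniformly in $i$, while the local smoothness constant obeys $L_K \le 4\,\|\Sigma(K^i)\|\,\|R + B^T P(K^i) B\|$; each factor on the right is controlled by the monotone bounds $\|\Sigma(K^i)\| \le F(K^0)/\sigma_{\min}(Q)$ and $\|P(K^i)\| \le F(K^0)/\sigma_{\min}(\Sigma_0)$ already used to prove Lemma~\ref{lemma:bound_p_sigma}, giving a uniform $L_{\max}$. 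Hence $\eta_i \ge \eta_{\min} := \beta \min(1/L_{\max},\, \eta^{\lambda,r})$, and reading off the formulas for $\rho^f$, $\rho^L$, and $L_{\max}$ shows that $\eta_{\min}$ is non-decreasing in each of $\sigma_{\min}(\Sigma_0)$, $\sigma_{\min}(Q)$, $1/\lambda$, $1/\|A\|$, $1/\|B\|$, $1/\|R\|$, $1/\Delta$, and $1/F(K^0)$, matching the claimed form of $h_\eta$.

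Third, the rate follows by telescoping. Substituting $K^{i+1} - K^i = -\eta_i G_{\eta_i}(K^i)$ into Eq.~\eqref{eq:descent} gives $F(K^{i+1}) \le F(K^i) - \tfrac{\eta_i}{2}\,\|G_{\eta_i}(K^i)\|_F^2$. Summing from $i=0$ to $N-1$ telescopes to
\[
\sum_{i=0}^{N-1}\tfrac{\eta_i}{2}\,\|G_{\eta_i}(K^i)\|_F^2 \le F(K^0) - F^\star,
\]
and using $\eta_i \ge \eta_{\min}$ together with $N \min_i \|G_{\eta_i}(K^i)\|_F^2 \le \sum_i \|G_{\eta_i}(K^i)\|_F^2$ yields the stated $\mathcal{O}(1/N)$ bound. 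The iff statement is immediate from the fixed-point characterization of the proximal map: $G_\eta(K) = 0$ is equivalent to $K = \prox_{r,\lambda\eta}(K - \eta\nabla f(K))$, which by the optimality condition of the proximal subproblem is equivalent to $0 \in \nabla f(K) + \partial r(K) = \partial F(K)$.

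The main obstacle is the uniform lower bound step: the hypotheses of Lemma~\ref{lemma:bound_p_sigma} require $\|K^i - K^\star\|_F$ to be decreasing, but the nonconvexity of $f$ means this does not follow automatically from descent in $F$. Establishing it, either through a proximal-gradient contraction argument based on local strong convexity or by an induction that keeps the iterates trapped inside a sublevel set on which contraction holds, is the most delicate piece; once in place, the remaining estimates are routine.
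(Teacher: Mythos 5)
Your proposal follows essentially the same route as the paper's proof: telescope the descent inequality $F(K^{i+1}) \le F(K^i) - \tfrac{\eta_i}{2}\|G_{\eta_i}(K^i)\|_F^2$ from Proposition~\ref{lemma:descent_lemma}, then lower-bound the accepted linesearch stepsize by $\beta\min(1/L_{K^i},\eta_{K^i}^{\lambda,r})$ and make this uniform via the bounds $\|\Sigma(K^i)\|\le F(K^0)/\sigma_{\min}(Q)$, $\|P(K^i)\|\le F(K^0)/\sigma_{\min}(\Sigma_0)$ together with Lemma~\ref{lemma:bound_p_sigma}. The one issue you flag — that Lemma~\ref{lemma:bound_p_sigma} assumes $\|K^i-K^\star\|_F$ is decreasing, which does not follow from descent in $F$ alone — is a gap the paper's own proof also leaves unaddressed (it simply invokes the lemma), so your explicit acknowledgment of it is, if anything, more careful than the original.
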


\begin{proof}
From Lemma \ref{lemma:descent_lemma}, 
\begin{align*}
F(K^{i+1}) 
\leq  
F(K^i)  - \frac{{\eta_i}}{2} \| G_{\eta_i}(K^i)\|_F^2 \; \forall i
\end{align*}
Reordering terms and averaging over iterations $i = 1 \ldots N$ give
\begin{align*}
\frac{1}{N} \sum_{i=1}^N  \| G_{\eta_i}(K^i)\|_2^2 
&\leq \frac{2}{N} \sum_{i=1}^N \frac{1}{\eta_i}( F(K^i)  - F(K^{i+1}) )
\\
&\leq \frac{2(F(K^0)  - F(K^\star)) }{\underset{i=1,\ldots,N}{\min}\eta_i N} .
\end{align*}
And LHS is lower bounded by 
\[
\frac{1}{N} \sum_{i=1}^N  \| G_{\eta_i}(K^i)\|_F^2 
\geq
\underset{{i=1,\ldots,N}}{\min }  \| G_{\eta_i}(K^i)\|_F^2,
\]
giving the desirable result. Moreover, it converges to the stationary point since $\lim_{i\rightarrow \infty} G_{\eta_i}(K^i) = 0$. 

Now the remaining part is to bound the stepsize. Note that the stepsize $\eta_i$ after linesearch satisfies
\[
\eta_i \geq  \frac{1}{\beta}\min\left(\frac{1}{L_{K_i}}, \eta_{K_i}^r\right).
\]

First we bound $\frac{1}{L_{K_i}}$ as follows,
\begin{align*}
\frac{1}{  L_{K^i}} &= \frac{1}{4\|\Sigma(K)\| \|R + B^T P(K) B\|}
\nonumber
\\ &\geq 
\frac{1}{4  \|\Sigma(K)\| (\|R \| + \|B^T\| \|P(K)\| \|B\|)}
\nonumber
\\ &\geq 
\frac{\sigma_{\min}(\Sigma_0) \sigma_{\min}(Q) }{4  F(K^0) (\sigma_{\min}(Q)\|R \| +  F(K^0) \|B^T\| \|B\|)}
    \label{eq:bound_L_K}.
\end{align*}

Next, about the bound on $\eta_{K_i}^{\lambda, r}$, we already have $\eta_{K_i}^{\lambda, r}\geq \eta^{\lambda, r}$ from Lemma~\ref{lemma:bound_p_sigma}.

Note that both of bounds are proportional to 
$\sigma_\mathrm{min}(\Sigma_0)$ and $
\sigma_\mathrm{min}(Q)
$,
    and inverse-proportional to 
$\norm{A},\norm{B},\norm{R},\Delta$ and $F(K^0)$.

Therefore

\begin{align*}\min_{i=1,\ldots,}\eta_i 
\geq 
    \eta_{\min} =
    h_{\eta}\bigg(&\sigma_\mathrm{min}(\Sigma_0),
    \sigma_\mathrm{min}(Q),\frac{1}{\lambda},
    \nonumber
    \\&
    \frac{1}{\norm{A}},
    \frac{1}{\norm{B}},
    \frac{1}{\norm{R}},
    \frac{1}{\Delta},
    \frac{1}{F(K^0)}
    \bigg)
    \end{align*}
for some $h_\eta$ that is non-decreasing on each argument.

\end{proof}

\begin{theorem}[Theorem 1 restated]\label{thm:linear_convergence}
$K^i$ from Algorithm 1 converges to the stationary point $K^\star$. Moreover, it converges linearly, i.e., after $N$ iterations,
\[
\norm{K^N - K^\star }_F^2 
\overset{}{\leq } \left(1 - \frac{1}{\kappa}\right)^N \norm{K^0 - K^\star}_F^2.
\]
Here, $\kappa=1/\left(\eta_{\min}\sigma_{\mathrm{min}}(\Sigma_0) \sigma_{\mathrm{min}}(R) ) \right)> 1$ where  
    \begin{align}
    \eta_{\min} =
    h_{\eta}\bigg(&\sigma_\mathrm{min}(\Sigma_0),
    \sigma_\mathrm{min}(Q),\frac{1}{\lambda},
    \nonumber
    \\&
    \frac{1}{\norm{A}},
    \frac{1}{\norm{B}},
    \frac{1}{\norm{R}},
    \frac{1}{\Delta},
    \frac{1}{F(K^0)}
    \bigg),
    \label{eq:stepsize}
    \end{align}
    for some  non-decreasing function $h_\eta$ on each argument.
\end{theorem}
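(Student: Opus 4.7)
The plan is to combine the key inequality from Lemma~5 (appendix) with a direct expansion of $\|K^{i+1}-K^\star\|_F^2$, exploiting the strong-convexity-like lower bound on $f$ with parameter $m = \sigma_{\min}(\Sigma_0)\sigma_{\min}(R)$ established in Lemma~2. This mirrors the standard linear-rate argument for proximal gradient on strongly convex smooth objectives, but with two twists: (i) the smoothness is only local, which is handled by the choice $\eta_i \leq \min(1/L_{K^i},\eta_{K^i}^{\lambda,r})$ guaranteed by the linesearch together with Lemma~4; and (ii) the objective $F$ is non-convex, so we cannot invoke $F(K^{i+1})\geq F(K^\star)$ by appealing to optimality of $K^\star$, but instead must deduce it from monotonicity of the descent sequence.

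Concretely, I would first apply Lemma~5 (the key inequality) with $Z = K^\star$ and $K = K^i$ to get
\begin{equation*}
F(K^{i+1}) \leq F(K^\star) + \bigl\langle G_{\eta_i}(K^i),\,K^i-K^\star\bigr\rangle - \tfrac{m}{2}\|K^i-K^\star\|_F^2 - \tfrac{\eta_i}{2}\|G_{\eta_i}(K^i)\|_F^2.
\end{equation*}
Since Proposition~1 guarantees $\{F(K^i)\}$ is non-increasing, $F$ is bounded below by $\lambda r(K^\star)\geq 0$, and Proposition~2 ensures $\|K^{i+1}-K^i\|\to 0$ so that the iterates converge to a stationary point, we have $F(K^{i+1})\geq F(K^\star)$. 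Hence the inequality rearranges to
\begin{equation*}
\bigl\langle G_{\eta_i}(K^i),\,K^i-K^\star\bigr\rangle \;\geq\; \tfrac{m}{2}\|K^i-K^\star\|_F^2 + \tfrac{\eta_i}{2}\|G_{\eta_i}(K^i)\|_F^2.
\end{equation*}
Next, I would expand $\|K^{i+1}-K^\star\|_F^2 = \|K^i - \eta_i G_{\eta_i}(K^i) - K^\star\|_F^2$ and substitute the bound above on the cross term, producing the one-step contraction
\begin{equation*}
\|K^{i+1}-K^\star\|_F^2 \;\leq\; (1-\eta_i m)\,\|K^i-K^\star\|_F^2,
\end{equation*}
in which the $\eta_i^2 \|G_{\eta_i}(K^i)\|_F^2$ term cancels exactly against $-\eta_i\cdot\eta_i\|G_{\eta_i}(K^i)\|_F^2$.

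Finally, I would invoke Proposition~2 (via Lemma~4) to assert that $\eta_i\geq \eta_{\min}$ uniformly in $i$, where $\eta_{\min}$ is the non-decreasing function $h_\eta$ of the problem data given in the theorem statement. Telescoping the contraction across $N$ iterations yields the claimed bound with $\kappa=1/(\eta_{\min}\sigma_{\min}(\Sigma_0)\sigma_{\min}(R))$. The main obstacle is the second point above: unlike the convex case, $F(K^\star)\leq F(K^{i+1})$ does not come for free, and justifying it requires the monotone descent plus convergence of the iterates to $K^\star$ (the very quantity being bounded). The cleanest way is to first run Proposition~2 to conclude $\{K^i\}$ is Cauchy with limit $K^\star$, and only then invoke $F(K^\star)=\inf_i F(K^i)$ inside the contraction argument. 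A secondary subtlety is that the "local" smoothness ball $\mathcal{B}(K^i;\rho_{K^i})$ must contain $K^{i+1}$ at every step; this is exactly what the stepsize choice in Lemma~3 and the bounds $\rho^f,\rho^L$ in Lemma~4 guarantee, provided the stabilizing-sequence hypothesis holds, which in turn follows inductively from Proposition~1.
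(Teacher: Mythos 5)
Your proposal is correct and follows essentially the same route as the paper: substitute $Z=K^\star$ in the key inequality, use $F(K^{i+1})-F(K^\star)\geq 0$ to drop the function-value term, complete the square to get the one-step contraction $(1-\eta_i m)$, and telescope with the uniform lower bound $\eta_i\geq\eta_{\min}$ from Lemma~4 and the stepsize proposition. Your extra care in justifying $F(K^{i+1})\geq F(K^\star)$ via monotone descent to the limit point (rather than convexity) addresses a step the paper's proof states without comment, and is a welcome clarification rather than a deviation.
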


\begin{proof}
Substituting $Z = K^\star$ in Lemma \ref{lemma:key_inequality} gives,
\begin{align*}
&F(K^+) - F^\star  
\\
&\leq 
\Tr \left(G_\eta(K)^T(K-K^\star)\right) 
-
\frac{m}{2}\norm{K-K^\star}_F^2 - \frac{\eta}{2} \norm{ G_\eta(K)}_F^2 
\\
& = \frac{1}{2\eta} \left(\norm{K-K^\star}_F^2  - \norm{K-K^\star - \eta G_\eta(K) }_F^2  \right)
\\& \qquad -\frac{m}{2}\norm{K-K^\star}_F^2\\
& = \frac{1}{2\eta} \left(\norm{K-K^\star}_F^2  - \norm{K^+-K^\star }_F^2  \right)
-\frac{m}{2}\norm{K-K^\star}_F^2.
\end{align*}
Reordering terms gives
\begin{align*}
\norm{K^+ - K^\star }^2_F 
&\leq \norm{K - K^\star}^2_F 
 \\
&\quad 
- \big( 2\eta (F(K^+) - K^\star)+ m\eta \norm{K-K^\star}^2_F \big) 
\\
&\overset{}{\leq } (1 - m\eta)\norm{K - K^\star}^2 _F 
\end{align*}
where the last inequality holds due to $F(K^+) - F^\star \geq 0$.

Therefore, after $N$ iterations,
\begin{align*}
\norm{K^N - K^\star }^2_F 
&\overset{}{\leq } (1 - m\eta_N)\cdots (1 - m\eta_1)\norm{K^0 - K^\star}^2_F  
\\
&\overset{}{\leq } (1 - m\eta_{\min})^N\norm{K^0 - K^\star}^2 
\end{align*}
where $\eta_{\min}$ is the same one in Proposition~\ref{proposition:startionary}
\end{proof}

\begin{corollary}\label{corollary:linear_convergence}
Let $K^\star$ be the stationary point from Algorithm 1.  Then, after $N$ iterations
\[
N \geq 2\kappa \log\left(\frac{\norm{K^0 - K^\star}_F}{\epsilon}\right),
\]
\[
\norm{K^N - K^\star }_F \leq \epsilon
\]
holds 
where $\kappa=1/\left(\eta_{\min}\sigma_{\mathrm{min}}(\Sigma_0) \sigma_{\mathrm{min}}(R) ) \right)> 1$ and  $\eta_\mathrm{min}$ in Eq.~\eqref{eq:stepsize}. 
\end{corollary}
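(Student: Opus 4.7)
The plan is to obtain Corollary 2 as a direct computational consequence of Theorem 1. Theorem 1 already provides the linear contraction
\[
\norm{K^N - K^\star}_F^2 \leq \left(1 - \frac{1}{\kappa}\right)^N \norm{K^0 - K^\star}_F^2,
\]
where $\kappa > 1$ is the quantity defined in Eq.~(11). So no new control-theoretic or optimization-theoretic machinery is needed; the task is purely to invert this bound to read off an iteration complexity.

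First I would take square roots on both sides to get
\[
\norm{K^N - K^\star}_F \leq \left(1 - \tfrac{1}{\kappa}\right)^{N/2} \norm{K^0 - K^\star}_F.
\]
Next I would apply the standard inequality $1 - x \leq e^{-x}$, valid for all real $x$, with $x = 1/\kappa \in (0,1)$ (here I use $\kappa > 1$ from the hypothesis of Theorem 1), to obtain
\[
\norm{K^N - K^\star}_F \leq e^{-N/(2\kappa)} \norm{K^0 - K^\star}_F.
\]
It then suffices to choose $N$ large enough that the right-hand side is at most $\epsilon$. Setting $e^{-N/(2\kappa)} \norm{K^0 - K^\star}_F \leq \epsilon$ and solving for $N$ yields exactly
\[
N \geq 2\kappa \log\!\left(\frac{\norm{K^0 - K^\star}_F}{\epsilon}\right),
\]
which is the claimed iteration count. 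Plugging back in gives $\norm{K^N - K^\star}_F \leq \epsilon$.

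There is no real obstacle in this derivation — every step is elementary. The only thing to be careful about is that $1 - 1/\kappa \in (0,1)$ so that the exponential bound is valid and the square root is real; this follows from $\kappa > 1$, which is guaranteed by the expression for $\kappa$ inherited from Proposition~2 / Theorem~1 via the lower bound $\eta_{\min} \leq 1/(m)$ on the admissible stepsize that made the contraction factor lie in $(0,1)$ in the first place. Since the contraction rate $1/\kappa$ and the stepsize $\eta_{\min}$ have already been fully described in Eq.~(11) of Theorem~1, the corollary is just a rephrasing in terms of an $\epsilon$-accuracy requirement, and the proof is effectively a one-line calculation.
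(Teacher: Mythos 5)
Your proof is correct and follows exactly the same route as the paper's own (one-line) argument: apply Theorem~\ref{thm:linear_convergence}, use $(1-1/\kappa)^N \leq e^{-N/\kappa}$, and take logarithms to solve for $N$. The only nit is a verbal slip near the end where you call $\eta_{\min} \leq 1/m$ a ``lower bound''; it does not affect the derivation.
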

\begin{proof}
This is immediate from Theorem~\ref{thm:linear_convergence}, using the inequality $(1-1/\kappa)^N\leq e^{-N/\kappa}$ and by taking the logarithm. 
\end{proof}

\section{Proof for Convergence Analysis of Model-free S-PI}
\begin{lemma}[Lemma 30 in \cite{fazel2018global}]\label{lemma:gradient_estimate}
    There exists polynomials , $h_{N_\mathrm{traj}}$, $h_{H}$, $h_r$ such that, when $r < 1/h_r(1/\epsilon)$, $N_\mathrm{traj}\geq h_{N_\mathrm{traj}}(n, 1/\epsilon, \frac{L^2}{\sigma_{\mathrm{min}}(\Sigma_0)})$ and $H\geq h_{H}(n, 1/\epsilon)$, the gradient estimate $\widehat{\nabla f(K)}$ given in Eq. (13) of Algorithm 3 satisfies
\[
\norm{\widehat{\nabla f(K)} - {\nabla f(K)}}_F \leq  \epsilon
\]
with high probability (at least $1-{(\epsilon/n)}^{n}$.
\end{lemma}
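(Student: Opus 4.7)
The plan is to bound the total error $\|\widehat{\nabla f(K)} - \nabla f(K)\|_F$ by a triangle-inequality decomposition into three sources of error, bound each independently by $\epsilon/3$, and then back out the required scaling of $r, H, N_\mathrm{traj}$. Writing $\nabla f_r(K) := \frac{d}{r^2}\E_{U\sim\mathbb{S}_r}\bigl[f(K+U)U\bigr]$ for the ball-smoothed gradient and $\widetilde{\nabla f(K)} := \frac{1}{N_\mathrm{traj}}\sum_j \frac{d}{r^2} f(K+U^j) U^j$ for the Monte Carlo estimate with exact (infinite-horizon) costs, the decomposition is
\begin{align*}
\bigl\|\widehat{\nabla f(K)} - \nabla f(K)\bigr\|_F
&\le \underbrace{\bigl\|\nabla f_r(K) - \nabla f(K)\bigr\|_F}_{\text{(a) smoothing bias}}
\\&\quad + \underbrace{\bigl\|\widetilde{\nabla f(K)} - \nabla f_r(K)\bigr\|_F}_{\text{(b) Monte Carlo error}}
\\&\quad + \underbrace{\bigl\|\widehat{\nabla f(K)} - \widetilde{\nabla f(K)}\bigr\|_F}_{\text{(c) horizon truncation}}.
\end{align*}

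For (a), the plan is to invoke local smoothness of $f$ from Lemma~\ref{lemmma:smoothness}: provided $r \le \rho_K$, every perturbed point $K+U$ with $U\in\mathbb{S}_r$ lies in the smoothness ball, so $\nabla f$ is $L_K$-Lipschitz there. Combining this with the standard identity $\nabla f_r(K) = \E_{V\sim\mathbb{B}_r}[\nabla f(K+V)]$ (Stokes/divergence form of ball smoothing) yields $\|\nabla f_r(K)-\nabla f(K)\|_F \le L_K r$, and we pick $r$ polynomially small in $\epsilon$ to push this below $\epsilon/3$. For (c), I would bound $|f(K+U^j) - \hat f^j|$ by the tail of the infinite-horizon cost. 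Since $r$ is chosen smaller than the stability radius of $K$, each $K+U^j$ is stable with a uniformly bounded decay rate, so the closed-loop states satisfy $\|x_t\|\le C\,\rho^{t}\|x_0\|$ for some $\rho<1$ that depends only on $K$ and $r$; using $\|x_0\|\le D$ almost surely, the truncation error is geometric in $H$, and $H$ polynomial (in fact polylogarithmic) in $1/\epsilon$ suffices. After multiplying by $\frac{d}{r^2}\|U^j\|_F=\frac{d}{r}$, this stage still only needs $H$ polynomial in $1/\epsilon$.

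For (b), the main tool is a vector concentration inequality: each summand $Z^j := \frac{d}{r^2}f(K+U^j)U^j$ has $\|Z^j\|_F \le \frac{d}{r}\,f_{\max}$ where $f_{\max}$ is a uniform bound on $f$ on the perturbation ball (obtained from Lemma~\ref{lemma:bound_p_sigma}-style bounds on $\|P(K+U)\|$, which depend on $D^2/\sigma_{\min}(\Sigma_0)$ and the constants appearing in the lemma statement). A matrix/vector Bernstein bound then gives
\begin{equation*}
\Pr\!\left[\bigl\|\widetilde{\nabla f(K)} - \nabla f_r(K)\bigr\|_F > \epsilon/3\right] \le 2nm\,\exp\!\Bigl(-c\,N_\mathrm{traj}\,\epsilon^2 r^2 / (d\,f_{\max})^2\Bigr),
\end{equation*}
so $N_\mathrm{traj}$ polynomial in $n, 1/\epsilon, 1/\sigma_{\min}(\Sigma_0)$ and $D^2/\sigma_{\min}(\Sigma_0)$ suffices to push the failure probability to $(\epsilon/n)^n$.

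The hard part will be (c), since it requires a uniform stability/decay estimate valid for every perturbed policy $K+U^j$ simultaneously, and in particular a uniform bound on $\rho(A+B(K+U^j))$ away from $1$ that is polynomial in $r$ and the problem constants; this in turn forces us to tighten the restriction on $r$ beyond the crude $r\le\rho_K$ used in (a). Once the decay rate $\rho < 1$ is pinned down via a perturbation argument on the closed-loop matrix (e.g., via continuity of spectrum and Lyapunov-solution bounds), the three $\epsilon/3$ budgets yield the stated polynomial dependencies $h_r, h_{N_\mathrm{traj}}, h_H$, completing the proof.
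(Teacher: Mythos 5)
You should first note that the paper itself contains no proof of this statement: it is imported verbatim as Lemma~30 of \cite{fazel2018global}, so the only proof to compare against is the one in that reference, which proceeds by essentially the decomposition you propose (smoothing bias of the ball-smoothed gradient, truncation of the rollout, and concentration of the Monte Carlo average). Your ingredients are consistent with that argument: taking $r$ below the stability/smoothness radius so every perturbed policy $K+U$, $U\in\mathbb{S}_r$, lies in the ball of Lemma~\ref{lemmma:smoothness}; the bias bound $\norm{\nabla f_r(K)-\nabla f(K)}_F\le L_K r$ via $\nabla f_r(K)=\E_{V\sim\mathbb{B}_r}[\nabla f(K+V)]$; and a vector Bernstein bound with summands bounded by $\tfrac{d}{r}f_{\max}$, with $N_\mathrm{traj}$ polynomial in $n,1/\epsilon$ and the stated condition numbers.

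There is, however, a concrete gap in how you split the error. In Algorithm~\ref{alg:spi_model_free} each $\hat f^j$ is the cost of a \emph{single} rollout from one sampled $x_0\sim\mathcal{D}$, whereas $f(K+U^j)$ is an expectation over $x_0$. Hence your term (c), $|\hat f^j-f(K+U^j)|$, contains not only the geometric horizon-truncation error but also an initial-state fluctuation of size $O\!\left(D^2\norm{P(K+U^j)}\right)$ that does not vanish as $H\to\infty$ and cannot be pushed below $\epsilon/3$ pointwise; meanwhile your term (b) concentrates only over the $U^j$'s with exact expected costs, so this fluctuation is covered by neither term and the triangle inequality as written does not close. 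The standard repair (and the route of the cited proof) is to define the intermediate estimator with single-trajectory costs, so that by the tower property its mean is still $\nabla f_r(K)$ up to a truncation bias controlled in expectation, and to let the concentration step absorb both sources of randomness, using the almost-sure bound $\norm{x_0}\le D$ to bound each summand by roughly $\tfrac{d}{r}\,D^2\sup_{U\in\mathbb{S}_r}\norm{P(K+U)}$; this keeps all dependencies polynomial and yields the stated $1-(\epsilon/n)^n$ probability. Two smaller remarks: the identity $\nabla f_r(K)=\E_{V\sim\mathbb{B}_r}[\nabla f(K+V)]$ requires differentiability on the whole ball, which your restriction $r\le\rho_K$ already supplies; and the uniform decay estimate you flag as the hard part of (c) does not need a separate spectral-perturbation argument, since stability of every $K+U$ in the ball together with the Lyapunov bound on $\norm{P(K+U)}$ already gives a uniform bound on the discounted tail.
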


\begin{theorem}[Theorem 2 restated]\label{thm:linear_convergence_model_free}
Suppose $f(K^0)$ is finite, $\Sigma_0 \succ 0$, and that $x_0 \sim \mathcal{D}$ has norm bounded by $L$ almost surly. Suppose the parameters in Algorithm 3 are chosen from 
\[
(N_\mathrm{traj}, H, 1/r) = h\left(n, \frac{1}{\left(\sigma_{\mathrm{min}}(\Sigma_0) \sigma_{\mathrm{min}}(R)\right)}, \frac{L^2}{\sigma_{\mathrm{min}}(\Sigma_0)}\right).
\]
for some polynomials $h$.
Then, with the same stepsize in Eq.~\eqref{eq:stepsize}, Algorithm 3 converges to its stationary point $K^\star$ with high probability. In particular, there exist iteration $N$  at most $4\kappa \log\left(\frac{\norm{K^0 - K^\star}_F}{\epsilon}\right)$ such that $\norm{K^N - K^\star }\leq \epsilon$ with  at least $1 - {o}(\epsilon^{n-1})$ probability. Moreover, it converges linearly,
\[
\norm{K^i - K^\star }^2 
\overset{}{\leq } \left(1 - \frac{1}{2\kappa}\right)^i \norm{K^0 - K^\star}^2, 
\]
for the iteration $i=1,\ldots,N$,  where $\kappa=\eta\sigma_{\mathrm{min}}(\Sigma_0) \sigma_{\mathrm{min}}(R)  > 1$.
\end{theorem}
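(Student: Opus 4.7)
The overall plan is to reduce the analysis to the deterministic case (Theorem 1) by treating $\widehat{\nabla f(K^i)}$ as a perturbed version of $\nabla f(K^i)$, and then absorb the extra perturbation into the rate constant. I will proceed by induction on the iteration index $i$, jointly establishing that (a) $K^i$ is stable with cost bounded by $F(K^0)+1$, (b) $\|K^i - K^\star\|_F^2 \le (1-\tfrac{1}{2\kappa})^i \|K^0-K^\star\|_F^2$, and (c) the gradient estimate at $K^i$ is accurate. The hypothesis (a) is what keeps us inside the smooth/strongly convex ball of Lemma~\ref{lemmma:smoothness} and Lemma~\ref{lemma:bound_p_sigma}, so the fixed stepsize from Eq.~\eqref{eq:stepsize} remains valid at every iteration.

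For the inductive step, fix iteration $i$ and set $\widehat{G}_\eta(K^i) = \tfrac{1}{\eta}(K^i - \prox_{r,\lambda\eta}(K^i - \eta\,\widehat{\nabla f(K^i)}))$, and similarly $G_\eta(K^i)$ with the exact gradient. First I would invoke Lemma~\ref{lemma:gradient_estimate} (the adapted Fazel et al.~smoothing lemma) with target accuracy $\epsilon' = c\cdot \epsilon\cdot \sigma_{\min}(\Sigma_0)\sigma_{\min}(R)$ for a small absolute constant $c$; this dictates the polynomial scaling of $(N_\mathrm{traj}, H, 1/r)$ in the claimed function $h$. By nonexpansiveness of $\prox_{r,\lambda\eta}$, this yields
\begin{equation*}
\|\widehat{G}_\eta(K^i) - G_\eta(K^i)\|_F \le \|\widehat{\nabla f(K^i)} - \nabla f(K^i)\|_F \le \epsilon'.
\end{equation*}
I then rerun the key inequality of Lemma~\ref{lemma:key_inequality} with $Z=K^\star$, but with $\widehat{G}_\eta$ in place of $G_\eta$. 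The extra cross term $\langle \widehat{G}_\eta - G_\eta,\, K^i - K^\star\rangle_F$ is controlled via Cauchy--Schwarz and absorbed into the strong-convexity term using $2ab \le \tfrac{m}{2}a^2 + \tfrac{2}{m}b^2$. The upshot is a recursion of the form
\begin{equation*}
\|K^{i+1}-K^\star\|_F^2 \le \bigl(1 - m\eta\bigr)\|K^i-K^\star\|_F^2 + C\,\eta\,(\epsilon')^2/m,
\end{equation*}
and choosing $\epsilon'$ sufficiently small converts the contraction factor to $(1-\tfrac{1}{2\kappa})$, which also immediately gives both the iteration bound $N\le 4\kappa\log(\|K^0-K^\star\|_F/\epsilon)$ and the final accuracy $\epsilon$.

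For stability preservation (hypothesis (a)), I would argue as in Proposition~\ref{lemma:descent_lemma}: because $\widehat{\nabla f(K^i)}$ differs from $\nabla f(K^i)$ by at most $\epsilon'$, the approximate proximal step still lies in the ball $\mathcal{B}(K^i;\rho_{K^i})$ for the chosen stepsize (here $\rho^L$ from Lemma~\ref{lemma:bound_p_sigma} absorbs the $\epsilon'$ slack), so $\rho(A+BK^{i+1})<1$. The decreasing-$F$ property used to derive uniform bounds $\rho^f, \rho^L$ carries over with a slight modification, since $F(K^{i+1})\le F(K^i)+O(\epsilon')$ and by design $\epsilon'$ is far smaller than any gap that would push $F(K^i)$ above $F(K^0)+1$. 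Finally, to convert per-iteration ``with high probability'' into an end-to-end guarantee, I take a union bound over the $N = O(\kappa\log(1/\epsilon))$ iterations: each iteration fails with probability at most $(\epsilon'/n)^n = o(\epsilon^n/\log(1/\epsilon))$, so the cumulative failure probability is $o(\epsilon^{n-1})$, matching the claim.

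\textbf{Main obstacle.} The delicate part is the simultaneous induction: the gradient-estimation lemma (Lemma~\ref{lemma:gradient_estimate}) requires $K^i$ to be stable with bounded cost, but that bound is what we are trying to establish through the iterations. Closing this loop requires picking the polynomial $h$ so that $\epsilon'$ is small relative to the ball radii $\rho^L$ and the strong-convexity constant $m$, \emph{uniformly} over all iterates that satisfy the inductive cost bound $F(K^i) \le F(K^0)+1$; this is where the dependence on $D$ (the almost-sure bound on $\|x_0\|$) enters through $L^2/\sigma_{\min}(\Sigma_0)$ in the sample complexity. The rest is bookkeeping that parallels the known-model proof of Theorem~\ref{thm:linear_convergence} and Corollary~\ref{corollary:linear_convergence}.
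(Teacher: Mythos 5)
Your proposal is correct and follows essentially the same route as the paper: invoke the smoothing-lemma gradient-accuracy bound with target $\epsilon' = O(\epsilon\,\sigma_{\min}(\Sigma_0)\sigma_{\min}(R))$, use non-expansiveness of the proximal operator to compare the approximate step with the exact one, absorb the resulting $O(\eta\epsilon')$ perturbation into the exact-gradient contraction from Theorem~1 (valid while $\|K^i-K^\star\|_F\ge\epsilon$), and finish with a union bound over the $O(\kappa\log(1/\epsilon))$ iterations. The only differences are cosmetic --- you absorb the cross term via Young's inequality inside the key inequality where the paper uses a triangle inequality on iterates and then squares --- and you are in fact more explicit than the paper about the simultaneous induction needed to keep the iterates stable with bounded cost so that the gradient-estimation lemma remains applicable at every step.
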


\begin{proof}

Let $\epsilon$ be the error bound we want to obtain, i.e., $\norm{K^N - K^\star }\leq \epsilon$ where $K^N$ is the policy from Algorithm 3 after $N$ iterations.  

For a notational simplicity, we denote $K \leftarrow K^i$ and see the contraction of the proximal operator at $i$th iteration. 
First we use Lemma~\ref{lemma:gradient_estimate} to claim that, with high probability, $\norm{\widehat{\nabla f(K)} - {\nabla f(K)}}_F \leq  \alpha\epsilon$ for long enough numbers of trajactory $N_\mathrm{traj}$ and horizon $H$ where $\alpha$ is specified later. 

Second, we bound the error after one iteration of approximated proximal gradient step at the policy $K$, i.e., $\norm{K^\prime - K^+ }_F$. Here let $K^\prime = \prox_{}(K - \eta \widehat{\nabla f(K)}$ be the next iterate using approximate gradient $\widehat{\nabla f(K)})$ and $K^{+} = \prox_{\lambda r}(K - \eta \nabla f(K)$ be the one using the exact gradient $\nabla f(K)$. 
\begin{align*}
    \norm{K^\prime - K^+ }_F
    &= 
    \norm{\prox_{}(K - \eta \widehat{\nabla f(K)} - \prox_{}(K - \eta \nabla f(K) }_F
    \\
    &\leq
    \norm{(K - \eta \widehat{\nabla f(K)} - (K - \eta \widehat{\nabla f(K)}}_F
    \\
    &=
    \eta \norm{\widehat{\nabla f(K)} -\nabla f(K)}_F
    \\
    &\leq 
    \eta \alpha \epsilon
\end{align*}
where we use the fact that proximal operator is non-expansive and $\norm{\widehat{\nabla f(K)} - {\nabla f(K)}}_F \leq  \alpha\epsilon$ holds for proper parameter choices (the claim in the previous paragraph).

Third, we find the contractive upperbound after one iteration using approximated proximal gradient. 
\begin{align*}
\norm{K^\prime - K^\star }_F 
&\leq 
\norm{K^\prime - K^+ }_F
+
\norm{K^+ - K^\star}_F 
\\
&\leq 
\norm{K^\prime - K^+ }_F 
+
\sqrt{(1 - 1/\kappa)}\norm{K - K^\star}_F 
\\
&\leq 
\eta\alpha\epsilon
+
\sqrt{(1 - \kappa)}\norm{K - K^\star}_F. 
\end{align*}
Let's assume $\norm{K-K^\star}_F \geq \epsilon$ under current policy. Then,
taking square on both sides gives
\begin{align*}
\norm{K^\prime - K^\star }^2_F
&\leq 
\eta^2\alpha^2\epsilon^2
+
2 \eta\alpha\epsilon \sqrt{(1 - 1/\kappa)}\norm{K - K^\star}_F 
\\
& \quad +
(1 - 1/\kappa)\norm{K - K^\star}^2_F 
\\
&\leq
\left(1 -1/\kappa + 2\alpha \eta  + \alpha^2 \eta^2 \right)\norm{K - K^\star}^2_F, 
\end{align*}
where we used $\sqrt{(1 - 1/\kappa)}\leq 1$, $1 \leq \kappa $, and the assumption.
Choosing $\alpha = \frac{1}{5\eta\kappa}= \frac{1}{5\sigma_{\mathrm{min}}(\Sigma_0) \sigma_{\mathrm{min}}(R) )}$ results in
\[
\norm{K^\prime - K^\star }^2
\leq
\left(1 -1/(2\kappa) \right)\norm{K - K^\star}^2,
\]
with high probability $1-(\alpha \epsilon/n)^n$.
This says the approximate proximal gradient is contractive, decreasing in error after one iteration. Keep applying this inequality, we get 
\[
\norm{K^{i} - K^\star }^2
\leq
\left(1 -1/(2\kappa) \right)^i\norm{K^0 - K^\star}^2. 
\]
as long as $\epsilon\leq \norm{K^{i-1}-K^\star}_F$.  

This says that there must exist the iteration $N>0$ s.t. 
\begin{align}
    \norm{K^{N}-K^\star}_F \leq \epsilon\leq \norm{K^{N-1}-K^\star}_F
    \label{eq:contradiction}
\end{align}

Now we claim this $N$ is at most $ 4\kappa \log\left(\frac{\norm{K^0 - K^\star}_F}{\epsilon}\right)$. To prove this claim, suppose it is not, i.e.,  $N \geq 4\kappa \log\left(\frac{\norm{K^0 - K^\star}_F}{\epsilon}\right) + 1$. Then, for $N>1$, 
\begin{align*}
\norm{K^{N-1}-K^\star}_F ^2
&\leq
\left(1 -1/(2\kappa) \right)^{N-1}\norm{K^0 - K^\star}^2
\\&<
e^{-\frac{N-1}{2\kappa}}\norm{K^0 - K^\star}^2
\\&\leq 
\left(\frac{\norm{K^0 - K^\star}_F}{\epsilon}\right)^{-2}
\norm{K^0 - K^\star}^2
\\&=
\epsilon^2,
\end{align*}
which is a contradiction to ~\eqref{eq:contradiction}. 

Finally, we show the probability that this event occurs. Note that  all randomness occur when estimating  $N$ gradient within $\alpha \epsilon$ error. From union bound, it occurs at least $1 - N (\alpha\epsilon/n)^{n}$. And this is bounded below by 
\begin{align*}
1 - N (\alpha\epsilon/n)^{n}
&\geq
1 - \left(4\kappa \log\left(\frac{\norm{K^0 - K^\star}_F}{\epsilon}\right) + 1\right)
(\alpha\epsilon/n)^{n}
\\&\geq
1 - {o}\left(\epsilon^n \log \left(\frac{\norm{K^0 - K^\star}_F}{\epsilon}\right)\right)
\\&=
1 - {o}(\epsilon^{n-1}).
\end{align*}

\end{proof}

\section{Additional Experiments on Stepsize Sensitivity}
In this section, we scrutinize the convergence behaviors of S-PI under some fixed stepsize. For a very small Laplacian system $(n,m)=(3,3)$ with Lasso penalty $\lambda=3000$, we run S-PI over a wide range of stepsizes. For stepsize larger than $3.7e-4$, S-PI diverges and thus is ran under stepsizes smaller than $3.7e-4$.  Let $K^{min}$ be the policy where the objective value attains its minimum among overall iterates and $K^\star$ be the policy from S-PI with linesearch (non-fixed stepsize). Here the cardinality of the optimal policy is $3$. For a fixed stepsizes in $[3.7e-5, 3.7e-6]$, S-PI converges to the optimal. In Figure~\ref{fig:convergence_large_stepsize}, the objective value monotonically decreases and the policy converges to optimal one based on errors and cardinality. However, for smaller stepsize like   $[ 3.7e-7, 3.7e-8, 3.7e-9]$, Figure~\ref{fig:convergence_small_stepsize} shows that S-PI still converges but does not show monotonic behaviors nor converges to the optimal policy. These figures demonstrate the sensitivity of a stepsize when S-PI is used under a fixed stepsize, rather than linesearch. Like in Figure~\ref{fig:convergence_small_stepsize}, the algorithm can be unstable under fixed stepsize because the next iterate $K^+$ may not satisfy the stability condition $\rho(A + BK^+)<1$ and or are not guaranteed for a monotonic decrease. Moreover, this instability may lead to another stationary point even when the iterate falls in some stable policy region after certain iterations. This not only demonstrates the importance of lineasearch due to its sensitivity on the stepsize, but may provide the evidence for why other policy gradient type of methods for LQR did not perform well in practice.

\begin{figure}
\centering
\includegraphics[width=0.8\linewidth]{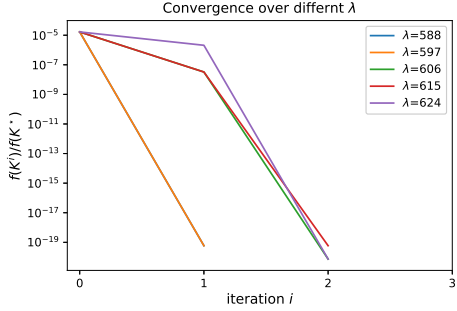}

\vspace{-5mm}
\caption{Convergence behavior of the Structured Policy Iteration (S-PI) under linesearch for Laplacian system of $(n,m)=(3,3)$ over various $\lambda$s. 
}
\label{fig:convergence_linesearch}
\end{figure}

\begin{figure}
\centering
\includegraphics[width=0.8\linewidth]{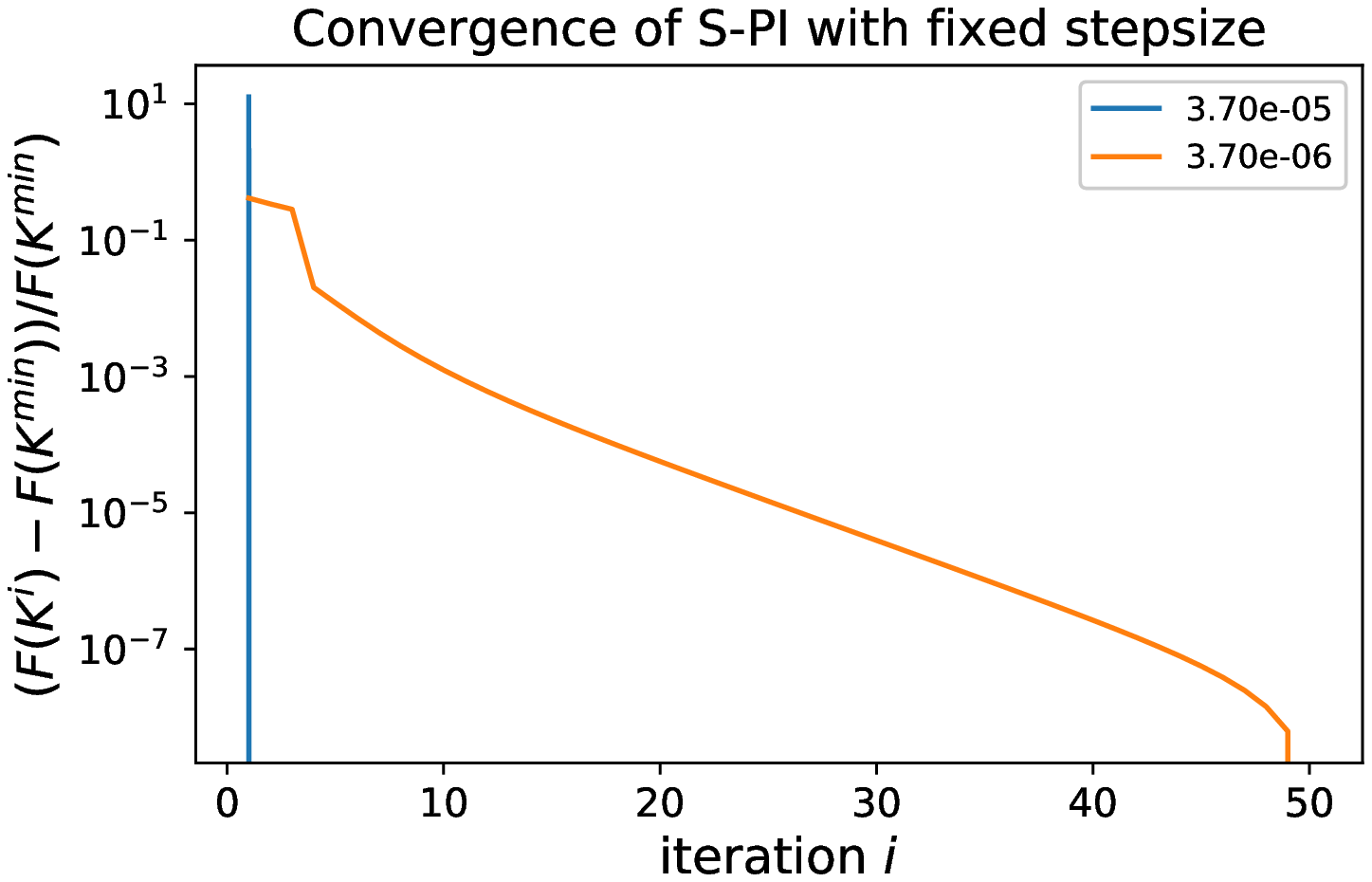}
\includegraphics[width=0.8\linewidth]{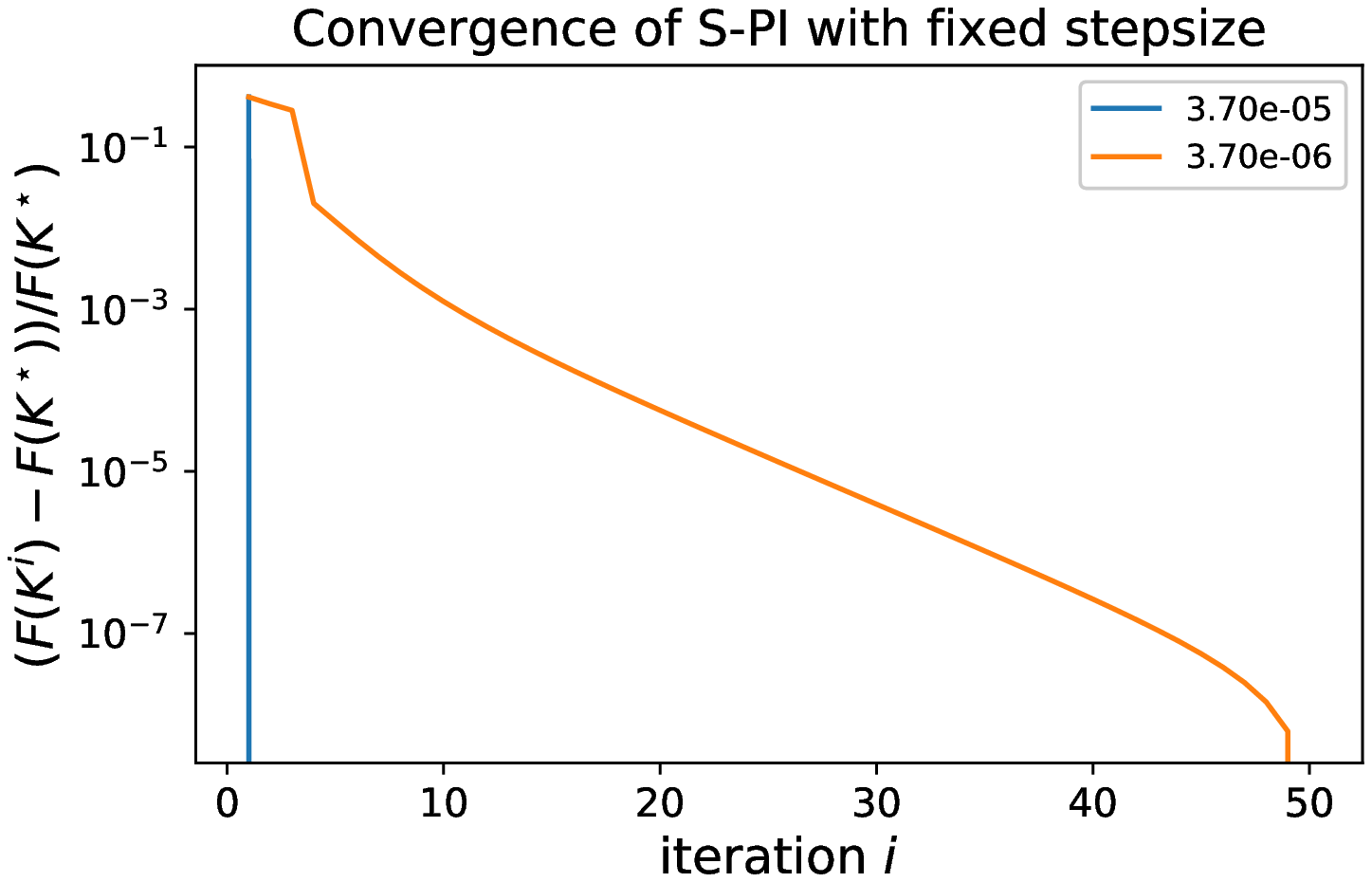}
\includegraphics[width=0.8\linewidth]{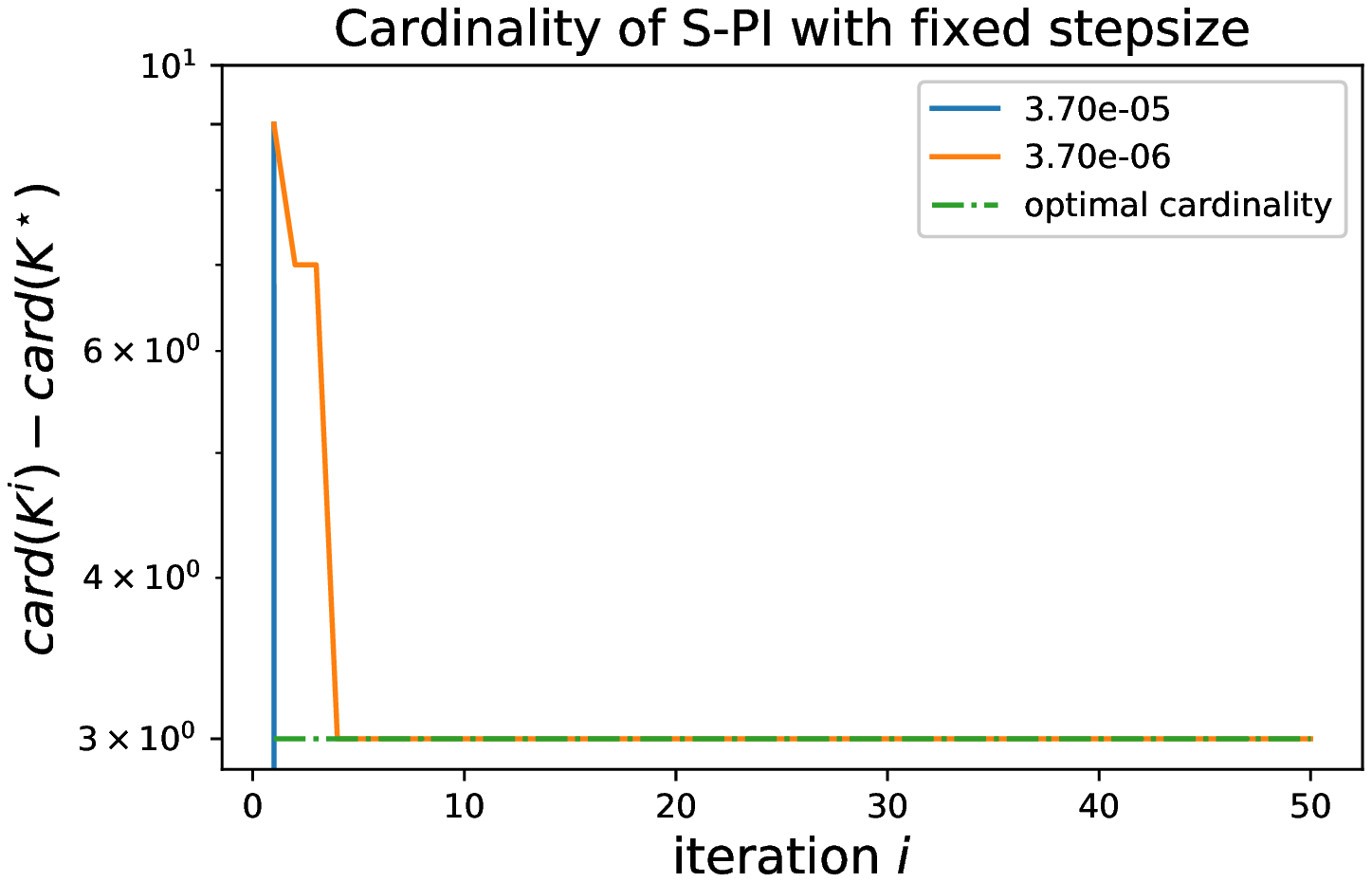}
\vspace{-5mm}
\caption{Convergence behavior of the Structured Policy Iteration (S-PI) over fixed stepsizes $[3.7e-5, 3.7e-6]$ for Laplacian system of $(n,m)=(3,3)$ with $\lambda=3000$. 
}
\label{fig:convergence_large_stepsize}
\end{figure}\textbf{}

\begin{figure}
\centering
\includegraphics[width=0.8\linewidth]{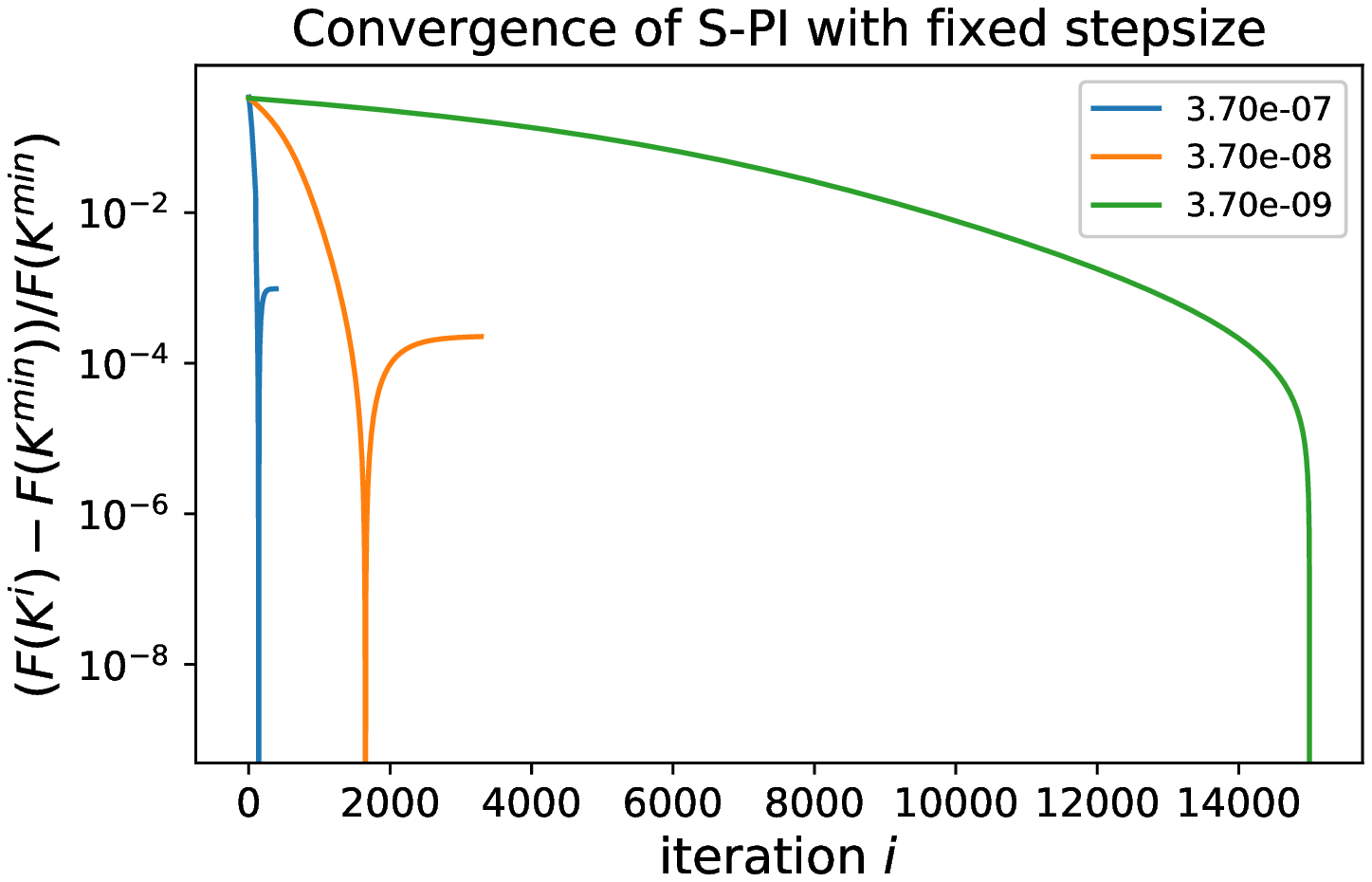}
\includegraphics[width=0.8\linewidth]{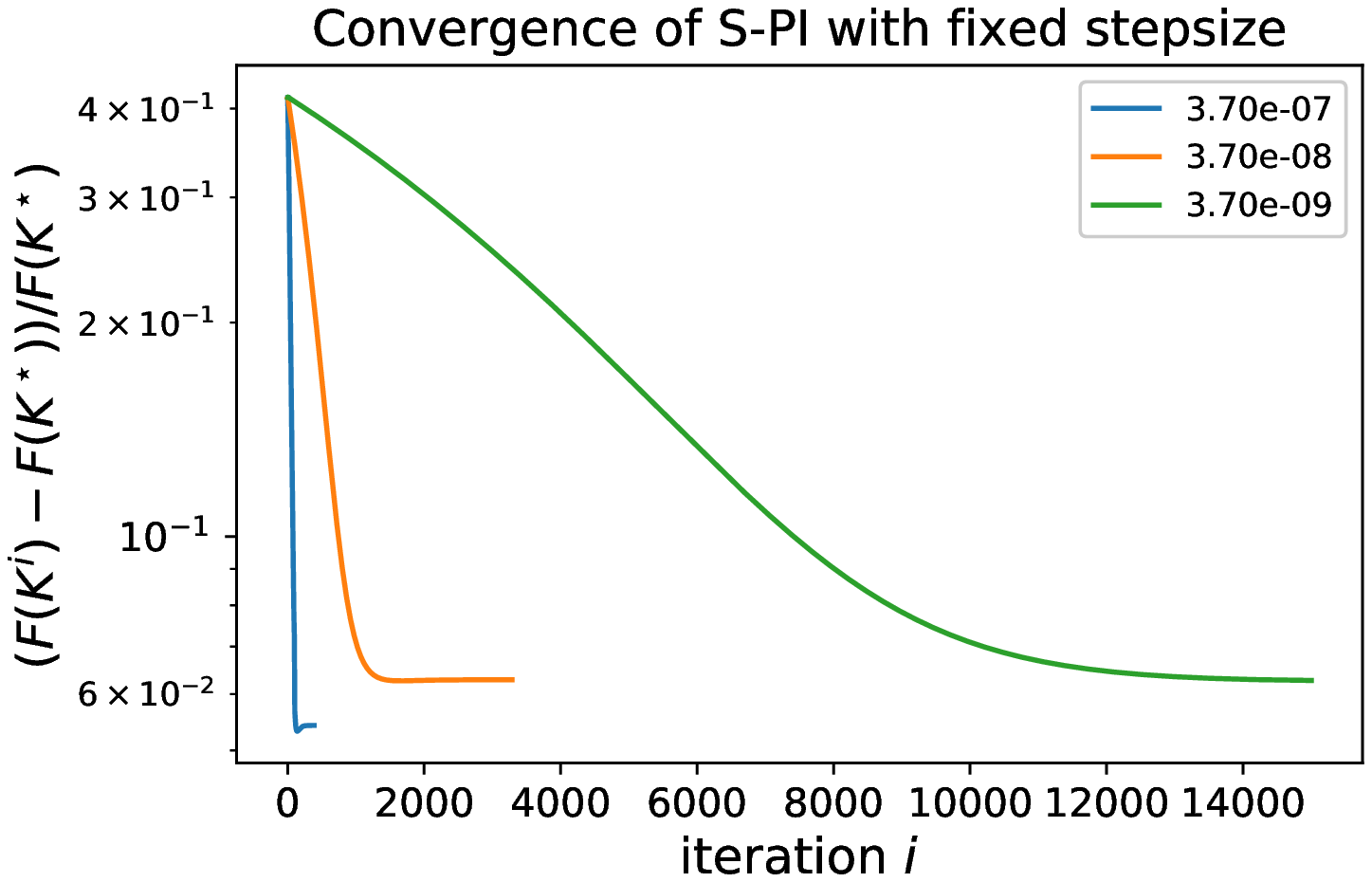}
\includegraphics[width=0.8\linewidth]{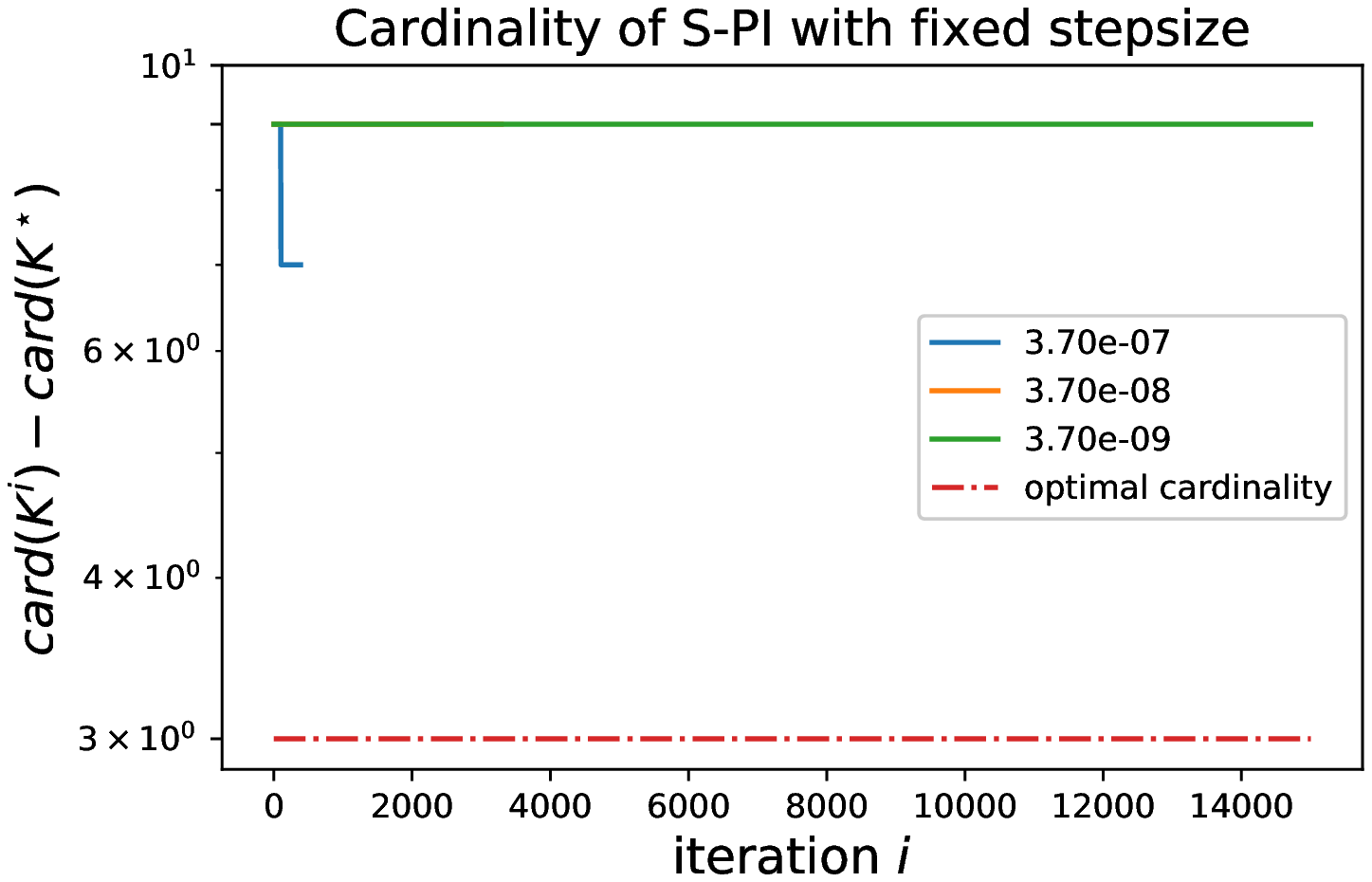}
\vspace{-5mm}
\caption{Convergence behavior of the Structured Policy Iteration (S-PI) over fixed stepsizes $[ 3.7e-7, 3.7e-8, 3.7e-9]$ for Laplacian system of $(n,m)=(3,3)$ with $\lambda=3000$. 
}
\label{fig:convergence_small_stepsize}
\end{figure}
\end{document}